\documentclass{article}

\usepackage{microtype}
\usepackage{graphicx}
\usepackage{subcaption}
\usepackage{booktabs} % for professional tables

\usepackage{dblfloatfix}

\usepackage{placeins}

\usepackage{hyperref}

\usepackage{paralist} % Required in your preamble

\providecommand{\R}{\mathbb{R}}

\providecommand{\E}{\mathbb{E}}

\providecommand{\Var}{\operatorname{Var}}

\providecommand{\MST}{\operatorname{MST}}

\providecommand{\indep}{\mathrel{\perp\!\!\!\perp}}

\usepackage[preprint]{icml2026}

\usepackage{amsmath}
\usepackage{amssymb}
\usepackage{mathtools}
\usepackage{amsthm}

\usepackage[capitalize,noabbrev]{cleveref}

\theoremstyle{plain}
\newtheorem{theorem}{Theorem}[section]
\newtheorem{proposition}[theorem]{Proposition}
\newtheorem{lemma}[theorem]{Lemma}

\theoremstyle{definition}
\newtheorem{definition}[theorem]{Definition}
\newtheorem{assumption}[theorem]{Assumption}
\theoremstyle{remark}
\newtheorem{remark}[theorem]{Remark}

\usepackage[textsize=tiny]{todonotes}

\icmltitlerunning{Topological Residual Asymmetry for Bivariate Causal Direction}

\begin{document}

\twocolumn[
  \icmltitle{Topological Residual Asymmetry for Bivariate Causal Direction}

  % It is OKAY to include author information, even for blind submissions: the
  % style file will automatically remove it for you unless you've provided
  % the [accepted] option to the icml2026 package.

  % List of affiliations: The first argument should be a (short) identifier you
  % will use later to specify author affiliations Academic affiliations
  % should list Department, University, City, Region, Country Industry
  % affiliations should list Company, City, Region, Country

  % You can specify symbols, otherwise they are numbered in order. Ideally, you
  % should not use this facility. Affiliations will be numbered in order of
  % appearance and this is the preferred way.
  \icmlsetsymbol{equal}{*}

  \begin{icmlauthorlist}
    \icmlauthor{Mouad El Bouchattaoui}{equal,comp}
    % \icmlauthor{Firstname2 Lastname2}{equal,yyy,comp}
    % \icmlauthor{Firstname3 Lastname3}{comp}
    % \icmlauthor{Firstname4 Lastname4}{sch}
    % \icmlauthor{Firstname5 Lastname5}{yyy}
    % \icmlauthor{Firstname6 Lastname6}{sch,yyy,comp}
    % \icmlauthor{Firstname7 Lastname7}{comp}
    % %\icmlauthor{}{sch}
    % \icmlauthor{Firstname8 Lastname8}{sch}
    % \icmlauthor{Firstname8 Lastname8}{yyy,comp}
    %\icmlauthor{}{sch}
    %\icmlauthor{}{sch}
  \end{icmlauthorlist}

  % \icmlaffiliation{yyy}{Department of XXX, University of YYY, Location, Country}
  \icmlaffiliation{comp}{Paris, France}
  % \icmlaffiliation{sch}{School of ZZZ, Institute of WWW, Location, Country}

  \icmlcorrespondingauthor{Mouad El Bouchattaoui}{mouad.elbouchattaoui@gmail.com}
  % \icmlcorrespondingauthor{Firstname2 Lastname2}{first2.last2@www.uk}

  % You may provide any keywords that you find helpful for describing your
  % paper; these are used to populate the "keywords" metadata in the PDF but
  % will not be shown in the document
  \icmlkeywords{Machine Learning, ICML}

  \vskip 0.3in
]

% this must go after the closing bracket ] following \twocolumn[ ...

% This command actually creates the footnote in the first column listing the
% affiliations and the copyright notice. The command takes one argument, which
% is text to display at the start of the footnote. The \icmlEqualContribution
% command is standard text for equal contribution. Remove it (just {}) if you
% do not need this facility.

% Use ONE of the following lines. DO NOT remove the command.
% If you have no special notice, KEEP empty braces:
\printAffiliationsAndNotice{}  % no special notice (required even if empty)
% Or, if applicable, use the standard equal contribution text:
% \printAffiliationsAndNotice{\icmlEqualContribution}

\begin{abstract} % ideally between 4--6 sentences long.
Inferring causal direction from purely observational bivariate data is fragile: many methods commit to a direction even in ambiguous or near non-identifiable regimes. We propose \emph{Topological Residual Asymmetry} (TRA), a geometry-based criterion for additive-noise models. TRA compares the shapes of two cross-fitted regressor--residual clouds after rank-based copula standardization: in the correct direction, residuals are approximately independent, producing a two-dimensional bulk, while in the reverse direction---especially under low noise---the cloud concentrates near a one-dimensional tube. We quantify this bulk--tube contrast using a 0D persistent-homology functional, computed efficiently from Euclidean MST edge-length profiles. We prove consistency in a triangular-array small-noise regime, extend the method to fixed noise via a binned variant (TRA-s), and introduce TRA-C, a confounding-aware abstention rule calibrated by a Gaussian-copula plug-in bootstrap. Extensive experiments across many challenging synthetic and real-data scenarios demonstrate the method's superiority.
\end{abstract}

\section{Introduction}
\label{sect:intro}

Inferring causal direction from observational data remains a central challenge in statistics and machine learning: for two real-valued
variables $(X,Y)$, dependence alone cannot distinguish $X\to Y$, $Y\to X$, or \emph{no orientation} without structural assumptions
\citep{SpirtesGlymourScheines2001, Pearl2009}. Identifiability is obtained by restricting the data-generating class, e.g. via
additive-noise models (ANMs), where the causal direction admits $Y=f(X)+\varepsilon$ with $\varepsilon\indep X$ while the reverse
representation typically fails \citep{Hoyer_ANM_2009, Peters_ANM_2014}, or via other asymmetry principles such as information-geometric
criteria linking the cause marginal to local expansion of the mechanism \citep{Janzing_IGCI_2012}. Most practical bivariate methods
instantiate these asymmetries by fitting both directions and comparing scalar scores, e.g., regress-then-test ANM pipelines (RESIT),
information-geometric scores (IGCI), regression-error asymmetries (RECI), and minimum description length (MDL)/regularized-regression proxies (SLOPPY) \citep{Mooij_CauseEffectPairs_JMLR_2016, Glymour_RevCD_2019}. However, such summaries can be brittle under finite samples or mild misspecification and often provide limited insight into \emph{why} a direction is preferred \citep{Mooij_CauseEffectPairs_JMLR_2016, Zhang_KCI_2012}; moreover, in near non-identifiable regimes (e.g., close to linear--Gaussian) they may still force a decision despite weak evidence rather than signaling ambiguity \citep{Hoyer_ANM_2009, Peters_ANM_2014}.

We take a complementary view: in ANMs, direction is encoded in the \emph{geometry} of the fitted regressor--residual cloud. In the causal direction, residuals are (asymptotically) independent of the regressor under consistent regression, as the ANM constraint requires \citep{Hoyer_ANM_2009, Peters_ANM_2014}. In the reverse direction, an independent-noise representation occurs only in special
non-identifiable cases (e.g., linear--Gaussian) \citep{Hoyer_ANM_2009, Peters_ANM_2014}, so residuals remain generically dependent and,
in low-noise injective settings, often nearly functional: the reverse cloud concentrates near a low-dimensional set (typically a smooth
curve) while the causal cloud retains a two-dimensional bulk. A copula transform \citep{Nelsen_CopulasIntro_2006} makes this contrast
invariant to monotone marginal reparameterizations.

We introduce \emph{Topological Residual Asymmetry} (TRA), a directional score that compares the shapes of the two fitted residual clouds. TRA first applies a rank-based copula standardization, making the comparison invariant to strictly increasing marginal transformations and therefore sensitive only to copula-level dependence. It then summarizes each cloud with a 0D persistent-homology functional \citep{Edelsbrunner_TopologicalPersistence_2002, Ghrist_barcode_2008}. For the standard union-of-balls (distance-to-set) filtration, the 0D merge scales coincide with the single-linkage hierarchy and can be read directly from Euclidean \emph{minimum-spanning-tree} (MST) edge lengths (up to $1/2$ radius factor) \citep{Gower_MSTSingleLinkage_1969}. These MST-based summaries serve as a practical proxy for intrinsic dimension via scaling laws for geometric graph functionals \citep{Steele1988, Penrose_RdGeomGrph_2003, CostaHero2004}, with proven stability under perturbations \citep{CohenSteiner_PersistenceStability_2007}.

\begin{figure}[!t]
  \centering
\includegraphics[width=0.9\linewidth,trim=0 10 0 10,clip]{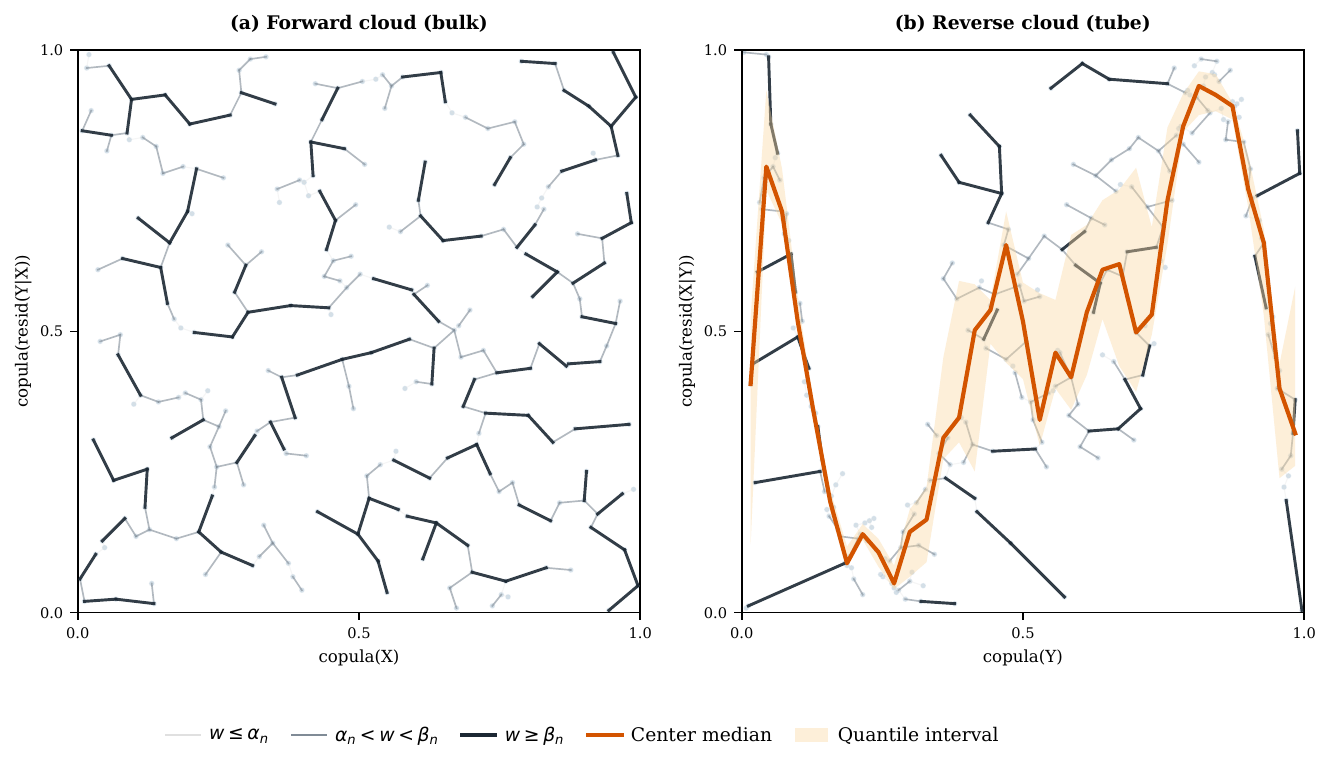}
  \caption{Bulk-tube contrast in copula transform. Forward clouds are 2D, while reverse clouds collapse near a curve. $Y_{n,i} \mid X_i \sim \mathcal{N}(X_i^3,n^{-1/2}), n=250$}. 
\label{fig:bulk_tube_smallnoise}
\end{figure}

Finally, we introduce \emph{TRA-C}, a confounding-aware version of TRA with abstention. Under a symmetric \emph{no-direction} null---dependence without causal asymmetry---the signed score concentrates near zero, so TRA-C abstains; when the observed asymmetry is atypical under the fitted null, TRA-C returns its sign. Implementations of all models, baselines, and experiments are provided in the supplementary material.

% advantage in experiments 
% Contributions 
\section{Related Work}
\label{sect:related_work}

Many bivariate orientation methods exploit an \emph{asymmetry principle}: a property that holds (approximately) in the true direction and fails except in special non-identifiable cases. They orient by comparing direction-specific scores, often favoring the direction that admits a \emph{simpler} or more \emph{generic} explanation. IGCI targets near-deterministic mechanisms and, under an independence-of-cause-and-mechanism view, links the cause density to local expansion of the transfer function \citep{Janzing_IGCI_2012}. RECI compares directions via regression error \citep{Bloebaum_RECI_2018}, while MDL-inspired methods such as SLOPPY use description-length surrogates from local/global regression fits \citep{MarxVreeken_SLOPPY_2019}. Kernel and distributional criteria such as KCDC and CDCI contrast direction-specific conditional-distribution complexity \citep{Mitrovic_KCDC_2018, Duong_CDCI_2022}, and quantile diagnostics such as bQCD compare multiple conditional quantiles to capture changing conditional shape, including heteroscedasticity \citep{Tagasovska_bQCD_2020}. TRA fits within this family but differs in the object it compares: rather than a single slope, error, or divergence statistic, it contrasts the \emph{multiscale geometry of residual point clouds}. Moreover, TRA operates on \emph{copula-standardized} residual clouds, so the relevant geometry reflects the \emph{dependence structure} (copula geometry) rather than marginal scale.

A key identifiable model class is the ANM. Under suitable regularity conditions, this identifies the causal direction except in known non-identifiable settings (notably some linear--Gaussian families) \citep{Hoyer_ANM_2009, Peters_ANM_2014}. In practice, ANM methods regress in both directions and select the one in which the residuals are most consistent with independence from the regressor, using dependence measures such as HSIC or conditional-independence tests such as KCI \citep{Gretton_HSIC_2005, Zhang_KCI_2012}. RESIT extends this regress-then-test idea to multivariate DAG learning; in the bivariate case, it reduces to an ANM-style orientation rule \citep{Peters_ANM_2014}. TRA departs from this template in what it extracts from residuals: rather than scoring residual (in)dependence directly, it summarizes the \emph{geometry of the residual point cloud}.

TRA’s functional is closely related to intrinsic-dimension and manifold-diagnostics methods, where nearest-neighbor graphs and MSTs yield statistics for detecting low-dimensional structure or estimating dimension via scaling laws \citep{CostaHero_GMST_2004, Camastra_IDsurvey_2016}. After copula standardization, TRA exploits this multiscale spacing to contrast ``bulk-like'' and ``curve-like'' residual clouds. Rather than estimating a global intrinsic dimension for exploration, TRA uses an MST--persistence functional as a \emph{directional asymmetry diagnostic} for causal orientation, with an explicit abstention option when separation is unreliable.

A separate line of work treats bivariate orientation as supervised learning on labeled cause--effect pairs. RCC trains a classifier on
randomized feature maps of empirical distributions \citep{LopezPaz_RCC_2015}, while NCC learns end-to-end representations, typically from synthetic and/or curated pairs \citep{LopezPaz_NCC_2017}; support-measure machines similarly embed each dataset as a probability measure and apply kernels on measures \citep{Varando_SMM_2024}. In parallel, generative and compression-based methods compare
direction-specific explanations, including CGNN-style generative modeling \citep{Goudet_CGNN_2018}, nonlinear-ICA ensembles such as Causal Mosaic \citep{WuFukumizu_CausalMosaic_2020}, and variational Bayesian compression \citep{Tran_VBCompression_2025}.

\section{Theoretical Setup}
For this section, we defer detailed proofs of all theoretical claims to Appendix~\ref{appendix:proofs}. All notions from metric geometry and geometric graphs used in the proofs are recalled in the Background (Appendix~\ref{appendix:background}). We work in the bivariate setting and, for each \(n\), observe \(n\) \emph{row-wise i.i.d.} samples \(\mathcal D_n := \{(X_i,Y_{n,i})\}_{i=1}^n\). To cover both fixed- and vanishing-noise regimes, we consider the additive-noise model \citep{Hoyer_ANM_2009, Peters_ANM_2014}:
\begin{equation}
\label{eq:anm_triangular}
\resizebox{0.8\columnwidth}{!}{$
Y_{n,i}=f(X_i)+\sigma_n\varepsilon_i,\;
\varepsilon_i\!\indep\!X_i,\;
\mathbb{E}[\varepsilon_i]=0,\;
\sigma_n\ge0
$}
\end{equation}
Here \((X_i,\varepsilon_i)_{i=1}^n\) are i.i.d., while \(\sigma_n\) may depend on \(n\). The \emph{fixed-noise} regime has \(\sigma_n\equiv\sigma>0\), whereas the \emph{small-noise} regime assumes \(\sigma_n\downarrow 0\) as \(n\to\infty\). In the forward (putatively causal) direction, the regression function is \(f\). In the reverse direction, set \(m_n(y):=\mathbb E[X\mid Y_n=y]\), which may vary with \(n\) through \(\sigma_n\).

Fix an integer \(K\ge 2\) and a deterministic \(K\)-fold partition \(\{I_k\}_{k=1}^K\) of \([n]\) \citep{Chernozhukov_DML_2018}; write \(k(i)\) for the unique index such that \(i\in I_{k(i)}\). For each fold \(k\), fit forward and backward regressors \(\widehat f_n^{(-k)}\) and \(\widehat g_n^{(-k)}\) on the training set \(\{(X_j,Y_{n,j})\}_{j\notin I_k}\). Define the out-of-fold residuals \(r_i^{(Y\mid X)}:=Y_{n,i}-\widehat f_n^{(-k(i))}(X_i)\) and \(r_i^{(X\mid Y)}:=X_i-\widehat g_n^{(-k(i))}(Y_{n,i})\). Their oracle counterparts replace the estimators by the population targets \(f\) and \(m_n\): \(r_{i,\circ}^{(Y\mid X)}:=Y_{n,i}-f(X_i)\) and \(r_{i,\circ}^{(X\mid Y)}:=X_i-m_n(Y_{n,i})\). For any direction \(A|B\), write \(\mathcal R^{(n)}_{A\mid B}:=\{(B_i,r_i^{(A\mid B)})\}_{i=1}^n\), and define \(\mathcal R^{(n)}_{\circ,A\mid B}\) analogously using oracle residuals. In particular, \(\mathcal R^{(n)}_{Y\mid X}=\{(X_i,r_i^{(Y\mid X)})\}_{i=1}^n\) and \(\mathcal R^{(n)}_{X\mid Y}=\{(Y_{n,i},r_i^{(X\mid Y)})\}_{i=1}^n\), similarly for \(\mathcal R^{(n)}_{\circ,Y\mid X}\) and \(\mathcal R^{(n)}_{\circ,X\mid Y}\).

\paragraph{Copula standardization.}
Our goal is to compare the \emph{dependence geometry} of the forward and reverse residual clouds. Doing so directly in \(\mathbb{R}^2\) is often fragile: marginal features---scale, skewness, heavy tails, or heteroscedasticity---can dominate Euclidean geometry even when the underlying dependence is unchanged. Copula coordinates mitigate this nuisance by applying the marginal CDF to each coordinate \citep{Nelsen_CopulasIntro_2006}.

For a pair \((U,V)\) with continuous marginals, define \(T_{U,V}(u,v):=(F_U(u),F_V(v))\in [0,1]^2\).
In the oracle construction we copula-standardize each direction separately:  $\widetilde{\mathcal R}^{(n)}_{\circ,Y\mid X} := T_{X,r^{(Y\mid X)}_\circ}(\mathcal R^{(n)}_{\circ,Y\mid X})$ and $\widetilde{\mathcal R}^{(n)}_{\circ,X\mid Y} := T_{Y_n,r^{(X\mid Y)}_\circ}(\mathcal R^{(n)}_{\circ,X\mid Y})$. In practice, the marginal CDFs are unknown, so we replace them by ranks \citep{Fermanian_EmpricalCopula_2004}. For a univariate sample \((U_i)_{i=1}^n\), define \(T_n(U_i):=\mathrm{rank}(U_i)/(n+1)\). Applying \(T_n\) coordinatewise gives the empirical copula-standardized clouds
\[
\widetilde{\mathcal R}^{(n)}_{Y\mid X}:=T_n\big(\mathcal R^{(n)}_{Y\mid X}\big),
\qquad
\widetilde{\mathcal R}^{(n)}_{X\mid Y}:=T_n\big(\mathcal R^{(n)}_{X\mid Y}\big).
\]

We view a cloud \(\mathcal R=\{\mathbf Z_1,\dots,\mathbf Z_n\}\subset\mathbb R^2\) as a finite metric space with distance
\(\|\cdot\|_2\). The \emph{Euclidean minimum spanning tree} (MST) \(\MST(\mathcal R)\) is the graph on vertex set \(\mathcal R\) that
connects all points with \(n-1\) edges and minimizes the total edge weight, where the weight of an edge \(e=(\mathbf Z_i,\mathbf Z_j)\) is its Euclidean length \(w(e):=\|\mathbf Z_i-\mathbf Z_j\|_2\) \citep{Kruskal_MST_1956,Gower_MSTSingleLinkage_1969}. We write the multiset of MST edge weights as
\[
\{\ell_j(\mathcal R)\}_{j=1}^{n-1}:=\{w(e): e\in\MST(\mathcal R)\}. 
\]

\paragraph{Mesoscopic scale: separating ``bulk'' from ``curve''.}
After copula standardization, each residual cloud forms a point set $\mathcal R$. We define the local separation scale as the typical interpoint distance, which we can represent by the nearest-neighbor (NN) distances
\[
d_i \;:=\; \min_{j\neq i}\|\mathbf Z_i-\mathbf Z_j\|_2.
\]
A comparable notion of local scale is provided by \emph{typical} Euclidean MST edge lengths on $\mathcal R$ (the MST contains many edges at nearest-neighbor scale, though it may also include longer bridging edges).

This local separation reflects the \emph{intrinsic dimension} of the support. For a genuinely $d$-dimensional cloud with a density bounded away from $0$ and $\infty$ on a set of positive $d$-volume, typical NN distances scale as $\Theta(n^{-1/d})$ (e.g.\ in probability), hence as $\Theta(n^{-1/2})$ in $d=2$ \citep{PenroseYukich_NN_2009}. Moreover, for i.i.d.\ samples in $\mathbb R^d$, the total Euclidean MST length scales as $\Theta(n^{(d-1)/d})$, so the \emph{average} MST edge length is $\Theta(n^{-1/d})$ \citep{SeoYukich_MST_2000}. In contrast, if the cloud is supported on a one-dimensional $C^1$ curve and sampled i.i.d.\ with a density bounded away from $0$ along arclength, local separations scale as $\Theta(n^{-1})$, and the maximal spacing along the curve is $\Theta((\log n)/n)$ almost surely \citep{Devroye_UniformSpacings_1981}. These two scales motivate a mesoscopic window $\alpha_n$ lying strictly between $n^{-1/2}$ and $n^{-1}$, separating ``bulk-like'' from ``curve-like'' geometry:
\begin{equation}\label{eq:mesoscopic_conditions}
\frac{\log n}{n}=o(\alpha_n),
\quad
n^{-1}\ll \alpha_n \ll n^{-1/2},
\quad
\beta_n \asymp \alpha_n .
\end{equation}
We use the concrete choice
\begin{equation}\label{eq:mesoscopic_window}
\alpha_n=\kappa n^{-2/3},
\quad
\beta_n=c_\beta\,\alpha_n,
\quad
\kappa>0,\ c_\beta>1,
\end{equation}
which satisfies \eqref{eq:mesoscopic_conditions}. At this scale, bulk clouds still look ``thick'' with many MST edges exceeding \(\alpha_n\), whereas curve-like clouds already look ``tight'' with most edges falling below \(\alpha_n\). To measure how much connectivity happens \emph{inside} \([\alpha,\beta]\), define the soft window
\begin{equation}\label{eq:psi_window}
\Psi_{\alpha,\beta}(t):=(\min\{t,\beta\}-\alpha)_+\in[0,\beta-\alpha],
\end{equation}
and, for a cloud \(\mathcal R\) of size \(n\ge 2\), the score
{\small
\begin{equation}\label{eq:TP_profile}
\overline{\mathrm{TP}}^{[\alpha,\beta]}_0(\mathcal R)
:= \frac{1}{(n-1)(\beta-\alpha)}
\sum_{e\in\MST(\mathcal R)}
\Psi_{\alpha,\beta}\big(w(e)\big).
\end{equation}
}
where $w(e)=\|\mathbf Z_i-\mathbf Z_j\|_2$ denotes the Euclidean length of the edge $e=(\mathbf Z_i,\mathbf Z_j)$. Large values of \(\overline{\mathrm{TP}}^{[\alpha,\beta]}_0\) mean many MST edges lie above \(\alpha\) (bulk-like at that scale); values near \(0\) mean most merges occur below \(\alpha\) (tube-like at that scale).

\begin{definition}[TRA score and direction]\label{def:tra_score}
Define the normalized TRA score
{\small
\begin{equation}\label{eq:Delta_emp_main}
\Delta_n :=
\overline{\mathrm{TP}}^{[\alpha_n,\beta_n]}_0(\widetilde{\mathcal R}^{(n)}_{Y\mid X})
-\overline{\mathrm{TP}}^{[\alpha_n,\beta_n]}_0(\widetilde{\mathcal R}^{(n)}_{X\mid Y})
\in[-1,1].
\end{equation}
}
We predict \(X\to Y\) if \(\Delta_n>0\) and \(Y\to X\) if \(\Delta_n<0\).
\end{definition}

\paragraph{Core intuition.}
In an ANM, the forward residual is (approximately) independent of the cause \citep{Peters_ANM_2014}, and the standardized forward cloud behaves like a genuinely two-dimensional scatter in \([0,1]^2\).
In the reverse direction, in low-noise injective settings the reverse residual becomes nearly deterministic in \(Y\), so the standardized reverse cloud concentrates around a curve (a thin tube). TRA detects this contrast at the intermediate window \([\alpha_n,\beta_n]\): bulk clouds yield larger \(\overline{\mathrm{TP}}^{[\alpha_n,\beta_n]}_0\) than tube-like clouds, hence the sign of \(\Delta_n\) reveals the direction.

The bulk--tube heuristic becomes a theorem once we control two effects: (i) the small-noise geometry of the \emph{oracle} copula clouds; and (ii) perturbations from cross-fitted regression and rank--copula standardization at the mesoscopic scale $\alpha_n$. We package the required conditions in three assumptions.

\begin{assumption}[Forward Model regularity]\label{ass:A_model}
\(X\) is supported on a compact interval \(I=[a,b]\), and \(f:I\to\R\) is \(C^1\).
There exist \(J\ge 1\) and breakpoints \(a=t_0<t_1<\cdots<t_J=b\) such that on each \(I_j:=(t_{j-1},t_j)\),
\(f\) is strictly monotone and
\[
0<c_f \le |f'(x)|\le C_f \qquad (x\in I_j,\ j=1,\dots,J).
\]
Let \(\mathcal V:=\{f(t_1),\dots,f(t_{J-1})\}\) be the turning-value set. The noise \(\varepsilon\) has a continuous density and is sub-Gaussian, hence \(\max_{1\le i\le n}|\varepsilon_i|=O_{\mathbb P}(\sqrt{\log n})\).
\end{assumption}

\begin{assumption}[Reverse target regularity]\label{ass:A_reverse_reg}
Fix \(\eta\in[0,1]\). There exists a compact interval \(J_\eta\Subset f(I)\setminus\mathcal V\) such that
\(\mathbb P(Y_n\in J_\eta)\to 1\), the density \(p_{Y_n}\) is uniformly bounded below on \(J_\eta\), and \(m_n\) is uniformly continuous on \(J_\eta\) uniformly in \(n\).
\end{assumption}

\begin{assumption}[Estimation error]\label{ass:A_scale_est}
\textnormal{(i)} \emph{Forward $L^2$-risk consistency:} $\E\|\widehat f_n^{(-k)}-f\|_{L^2(\mathbb P_X)}^2\to 0$ for each $k$. \\
\textnormal{(ii)} \emph{Reverse uniform-on-test-points consistency on $J_\eta$:}
for each $k$,
$\delta_n^{(k)}:=
\max_{i\in I_k:\,Y_{n,i}\in J_\eta}
|\widehat g_n^{(-k)}(Y_{n,i})-m_n(Y_{n,i})|
=o_{\mathbb P}(\alpha_n)$. \\
\textnormal{(iii)} \emph{Noise thickness is negligible:}
$\sigma_n\sqrt{\log n}=o(\alpha_n)$.
\end{assumption}

\paragraph{Remarks.}
Assumption~\ref{ass:A_model} permits global non-invertibility but requires branchwise invertibility with Lipschitz inverse constant
$1/c_f$, which controls the thickness of the reverse-direction ``tube.'' Assumption~\ref{ass:A_reverse_reg} excludes neighborhoods of
$\mathcal V$, ensuring the inverse branches are well behaved on $J_\eta$. Assumption~\ref{ass:A_scale_est} places the scale window
$[\alpha_n,\beta_n]$ between the connectivity regimes of the curve and the bulk, and requires both regression error and noise-induced
thickness to be $o_{\mathbb P}(\alpha_n)$. Together, Assumptions~\ref{ass:A_model}--\ref{ass:A_scale_est} mirror smoothness, tail, and
sample-splitting conditions common in ANM identifiability and cross-fitted nuisance estimation; see, e.g.,
\citet{Hoyer_ANM_2009, Peters_ANM_2014, Chernozhukov_DML_2018,Vershynin2018HDP}.

\begin{theorem}[Small-noise TRA consistency (normalized score)]\label{thm:small_noise_consistency}
Assume the small-noise ANM $Y_n=f(X)+\sigma_n\varepsilon$ with $\sigma_n\downarrow 0$ and Assumptions~\ref{ass:A_model}--\ref{ass:A_scale_est}. Then 
\[
\overline{\mathrm{TP}}^{[\alpha_n,\beta_n]}_0\!\big(\widetilde{\mathcal R}^{(n)}_{Y\mid X}\big)\xrightarrow{\mathbb P}1,
\quad
\overline{\mathrm{TP}}^{[\alpha_n,\beta_n]}_0\!\big(\widetilde{\mathcal R}^{(n)}_{X\mid Y}\big)\xrightarrow{\mathbb P}0.
\]
Hence \(\Delta_n\xrightarrow{\mathbb P}1\). In particular, for any deterministic threshold $\tau_n\downarrow 0$, the abstaining rule $\widehat{\mathrm{dir}}_n=X\to Y$ if $\Delta_n>\tau_n$, $Y\to X$ if $\Delta_n<-\tau_n$, and \emph{abstain} otherwise, satisfies
\[
\mathbb P(\widehat{\mathrm{dir}}_n=X\to Y)\to 1,
\qquad
\mathbb P(\widehat{\mathrm{dir}}_n=\emph{abstain})\to 0.
\]
\end{theorem}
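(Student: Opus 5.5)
The plan has three steps. First I reduce the abstention corollary to the two limits. Then I prove the forward limit (bulk, score to $1$) and the reverse limit (tube, score to $0$) separately, each by an oracle-then-perturbation argument at the mesoscopic scale $\alpha_n$.

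\textbf{Reduction.} Since $\overline{\mathrm{TP}}_0\in[0,1]$, the two limits give $\Delta_n\xrightarrow{\mathbb P}1$. For any $\tau_n\downarrow0$ we have $\mathbb P(\Delta_n>\tau_n)\ge\mathbb P(\Delta_n>1/2)\to1$ eventually, which yields both displayed probabilities. It therefore suffices to show that for the forward cloud the fraction of MST edges of length $\ge\beta_n$ tends to $1$, and for the reverse cloud the fraction of edges of length $>\alpha_n$ tends to $0$. This works because $\Psi_{\alpha,\beta}(t)/(\beta-\alpha)$ equals $1$ for $t\ge\beta$, equals $0$ for $t\le\alpha$, and lies in $[0,1]$ in between.

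\textbf{Forward (bulk).} I use the oracle residual $\sigma_n\varepsilon_i$, which is independent of $X_i$. Its copula transform $(F_X(X_i),F_\varepsilon(\varepsilon_i))$ is exactly i.i.d.\ uniform on $[0,1]^2$, and it does not depend on $\sigma_n$, since ranks are scale-invariant.
\begin{itemize}
\item For the uniform sample, $n$ times the expected number of points with a neighbor within $r$ is at most $n^2\pi r^2$. So the number of points whose nearest-neighbor distance is $\le 2\beta_n\asymp n^{-2/3}$ is $O_{\mathbb P}(n\cdot n\beta_n^2)=o_{\mathbb P}(n)$, because $n\beta_n^2\asymp n^{-1/3}\to0$.
\item Every MST edge of length $<t$ joins two points whose NN distance is $<t$, and the MST is a forest on those points. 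Hence the number of edges shorter than $\beta_n$ is at most the number of such points, which is $o(n)$.
\item To pass from the oracle cloud to the empirical one, I show that most points move by $o(\alpha_n)$ after ranks and cross-fitting. The rank perturbation is $O_{\mathbb P}(n^{-1/2})\gg\alpha_n$, which is too large, so I avoid it. Instead I note that the empirical ranks of $X$ coincide with the oracle's first coordinate.
\item For the second coordinate, I use $L^2$ consistency (Assumption~\ref{ass:A_scale_est}(i)): $r_i-\sigma_n\varepsilon_i=o_{\mathbb P}(\cdot)$. The rank ordering of the residuals, however, can change.
\item The cleaner route is to argue directly that the empirical cloud is a rank point set, i.e.\ a permutation pattern on the grid $\{1/(n+1),\dots\}^2$, whose two coordinates are asymptotically independent. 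I then bound the number of pairs $(i,j)$ with both $|T_n(X_i)-T_n(X_j)|\le\beta_n$ and $|\text{rank gap of residuals}|\le\beta_n$ by $o(n)$.
\item That bound follows from an expected-count argument conditional on the $X$-ordering. The $L^2$ error moves only an $o(1)$ fraction of residual ranks by more than $n\beta_n$ positions.
\end{itemize}
This is where I expect the most bookkeeping, but it is a counting bound.

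\textbf{Reverse (tube).} On $J_\eta$ the oracle residual $X_i-m_n(Y_{n,i})$ is determined by $Y_{n,i}$ up to the noise thickness: branchwise, $X_i=f_j^{-1}(Y_{n,i}-\sigma_n\varepsilon_i)$, with inverse Lipschitz constant $1/c_f$. So each point lies within $O(\sigma_n\sqrt{\log n}/c_f)=o(\alpha_n)$ of one of $J$ curves $y\mapsto f_j^{-1}(y)-m_n(y)$. By Assumption~\ref{ass:A_scale_est}(ii), replacing $m_n$ by $\widehat g_n$ adds a further $o_{\mathbb P}(\alpha_n)$.
\begin{itemize}
\item In copula coordinates the first coordinate is the $Y$-rank. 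I sort the points by $Y$-rank; consecutive points within one branch have first-coordinate gap $1/(n+1)$ times the number of intervening points.
\item The second coordinate is the residual rank. Uniform continuity of $m_n$ together with the branch curves means that nearby $Y$ give nearby residual ranks.
\item Key claim: the points can be connected by a path that joins consecutive points in $Y$-order. Except for $o(n)$ "jumps" (branch switches, tail points off $J_\eta$, and discontinuity of the rank map near crossings), every link has length $\le\alpha_n$. Since $\mathbb P(Y_n\in J_\eta)\to1$, the number of tail points is $o_{\mathbb P}(n)$.
\item This is the main obstacle: controlling the residual-rank increments between $Y$-consecutive points. Under an increasing map, the rank of a value is roughly $n$ times the residual CDF, so increments are governed by spacings of the residual values, which are $O_{\mathbb P}(\log n/n)=o(\alpha_n)$ except near a few turning values of the curve.
\item Given such a spanning connected graph with at most $o(n)$ edges longer than $\alpha_n$, the MST cut property shows that the MST has at most $o(n)$ edges longer than $\alpha_n$. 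By the reduction this forces the reverse score to $0$.
\end{itemize}
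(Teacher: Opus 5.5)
Your reduction to the two limits is fine, and so is the oracle forward step. The rank cloud of $(X_i,\sigma_n\varepsilon_i)$ is a uniform random permutation pattern, close pairs number $O_{\mathbb P}(n^2\beta_n^2)=o_{\mathbb P}(n)$, and short MST edges form a forest on the points that have a close neighbour. You are also right that the $O_{\mathbb P}(n^{-1/2})$ rank error swamps $\alpha_n$. This is exactly where the paper's Step~3 breaks, since Lemma~\ref{lemma:TP0-stability} bounds the \emph{normalized} difference only by $2d_H/(\beta_n-\alpha_n)=O_{\mathbb P}(n^{1/6})$.

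\textbf{First gap: forward transfer to cross-fitted residuals.} Assumption~\ref{ass:A_scale_est}(i) gives $\|\widehat f_n^{(-k)}-f\|_{L^2(\mathbb P_X)}\to0$ only in absolute terms. The oracle residual $\sigma_n\varepsilon_i$ lives at scale $\sigma_n=o(\alpha_n/\sqrt{\log n})$ and has density of order $1/\sigma_n$. So moving a residual by $n\beta_n$ rank positions takes a perturbation of only about $\sigma_n\beta_n$. Your claim that the $L^2$ error displaces only an $o(1)$ fraction of ranks by more than $n\beta_n$ therefore does not follow, and the forward limit can in fact fail. Take $\widehat f_n^{(-k)}=f+\delta_n x$ with $\sigma_n=n^{-1}$ and $\delta_n=n^{-1/4}$, so (i) holds. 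Then the residual ranks agree with those of $-X_i$ up to $O_{\mathbb P}(n^{1/4}\sqrt{\log n})$ positions. The forward copula cloud therefore lies in an $o(\alpha_n)$-tube around the anti-diagonal, and its score is $0$ with probability tending to one. You need a hypothesis tying the forward error to $\sigma_n$, for instance $\max_i|\widehat f_n^{(-k(i))}(X_i)-f(X_i)|=o_{\mathbb P}(\sigma_n\beta_n)$. The paper does not supply one either: its Step~3 rank-transforms only the oracle forward residuals and never uses (i).

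\textbf{Second gap: reverse direction, rank amplification.} Your claim that $Y$-consecutive points on a branch have nearby residual \emph{ranks} needs the marginal density of the reverse residual to be bounded, because a rank increment is $n$ times an increment of the residual CDF. In the injective case $J=1$ this typically fails along the whole curve, not just near a few turning points. There $m_n=f^{-1}+O(\sigma_n)$, so the centre curve $f^{-1}-m_n$ is itself $O(\sigma_n)$ and the entire residual distribution lives at scale $\sigma_n$. Consequently the tube thickness $\sigma_n\sqrt{\log n}/c_f$, and the reverse error allowed by Assumption~\ref{ass:A_scale_est}(ii) (which may be much larger than $\sigma_n$), become of order one or more after ranking.

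Concretely, take $f(x)=x$ on $I=[-2,2]$, $X\sim\mathrm{Unif}[-1,1]$, Gaussian $\varepsilon$, $\sigma_n=1/n$, $J_\eta=[-1,1]$ and oracle regressions. These satisfy Assumptions~\ref{ass:A_model}--\ref{ass:A_scale_est}. Yet $X_i-m_n(Y_{n,i})=-\sigma_n\varepsilon_i$ up to negligible terms, except for $o_{\mathbb P}(n)$ points with $Y_{n,i}$ near $\pm1$. Apart from those points, the reverse rank cloud lies within $o(\alpha_n)$, point by point, of a uniform random permutation pattern. Its score therefore tends to $1$, and $\Delta_n\to0$.

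So the reverse limit needs extra structure, for example $J\ge2$ with branch curves $O(1)$ apart, a bounded residual density on the relevant range, and the reverse error measured after the copula map. The paper's Step~2 hides the same issue by treating $T^{\mathrm{rev}}_\circ$ as bi-Lipschitz with $n$-free constants, whereas its Lipschitz constant is of order $1/\sigma_n$ when $J=1$. A smaller point: for $J\ge2$ the branch label changes $\Theta(n)$ times along the global $Y$-order. You should build one path per branch and add $J-1$ connecting edges, rather than a single path with $o(n)$ jumps.
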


\subsection{Fixed Noise Regime}
\label{subsect:fixe_noise_regime}
In the small-noise regime $\sigma_n\downarrow 0$, the reverse copula cloud becomes one-dimensional at the mesoscopic scale: under
branchwise invertibility of $f$, the reverse oracle cloud concentrates within an $o(\alpha_n)$-tube around a compact
$1$-dimensional set $\Gamma_{n,\eta}^{\cup}$, a finite union of $C^1$ curves (one per monotone branch). Consequently, all reverse MST
merges fall below $\alpha_n$. However, with fixed noise,
\[
Y=f(X)+\varepsilon,\qquad \varepsilon\indep X,\qquad \Var(\varepsilon)=\sigma^2>0,
\]
this collapse disappears. Even with the oracle reverse target $m(y):=\E[X\mid Y=y]$, the reverse fluctuation $\xi:=X-m(Y)$ satisfies
$\E[\xi\mid Y]=0$ but typically $\Var(\xi\mid Y)=\Theta(1)$. After copula standardization the reverse cloud has $O(1)$ thickness, remains bulk-like, and $\overline{\mathrm{TP}}^{[\alpha_n,\beta_n]}_0$ need not drift to $0$.

A smoothed variant, \emph{TRA-s}, recovers a one-dimensional signature by bin-averaging the reverse residuals along the \(Y\)-copula coordinate \citep{Tukey_regressogram_1961,Cattaneo_Binscatter_2024}. Let \(U_i:=T_n(Y_{n,i})\in[0,1]\), and partition \([0,1]\) into \(B_n\) equal bins \(I_{n,b}:=((b-1)/B_n,b/B_n]\). Write \(J_b:=\{i:U_i\in I_{n,b}\}\) and \(N_b:=|J_b|\). For each nonempty bin, define
\[
\bar u_b:=\frac1{N_b}\sum_{i\in J_b}U_i,
\qquad
\bar r_b:=\frac1{N_b}\sum_{i\in J_b}\bigl(X_i-\widehat g_n^{(-k(i))}(Y_{n,i})\bigr),
\]
and the binned reverse cloud \(\widehat{\mathcal R}^{(n)}_{X\mid Y}:=\{(\bar u_b,\bar r_b):N_b\ge 1\}\). If \(\widehat g_n=m\), then \(\bar r_b\) averages mean-zero fluctuations and typically satisfies \(|\bar r_b|=O_{\mathbb P}(N_b^{-1/2})\), so \(\widehat{\mathcal R}^{(n)}_{X\mid Y}\) concentrates near \(\Gamma:=\{(u,0):u\in[0,1]\}\), up to discretization error \(O(B_n^{-1})\) and regression error. Choosing \(B_n\) so these terms are \(o(\widetilde\alpha_n)\) makes the binned cloud tube-like at the mesoscopic scale.

\vspace{-0.5em}
\paragraph{Assumptions for TRA-s under fixed noise.}
TRA-s uses a partitioning smoother (regressogram/binscatter) on the \(Y\)-copula axis \citep{Tukey_regressogram_1961,Cattaneo_Binscatter_2024}. We impose a standard shrinking-bin regime to control bin means uniformly \citep{GyorfiEtAl2002NonparametricRegression}, a conditional sub-Gaussian assumption for reverse fluctuations \citep{Vershynin2018HDP}, and cross-fitting to decouple nuisance estimation from evaluation \citep{Chernozhukov_DML_2018}.

\begin{assumption}[Binning growth rates]\label{ass:binning_scheme}
With \(U_i\), \(I_{n,b}\), \(J_b\), \(N_b\), and \(m_n:=|\{b:N_b\ge 1\}|\) as above, assume
\[
B_n\to\infty,\qquad B_n=o(n),\qquad B_n^{7/3}\frac{\log B_n}{n}\to 0 .
\]
\end{assumption}

\begin{assumption}[Conditional reverse fluctuations]\label{ass:reverse_noise_subG}
Let \(m(y):=\E[X\mid Y=y]\) and \(\xi:=X-m(Y)\). Assume \(\|\xi\mid Y=y\|_{\psi_2}\le K_0\sigma\) for all \(y\) in the support of \(Y\).
\end{assumption}

\begin{assumption}[Bin-averaged reverse regression error]\label{ass:bin_reverse_reg_err}
Let \(\widehat g_n^{(-k)}\) be cross-fitted and let \(\bar r_b\) be defined as above for bins with \(N_b\ge 1\). Set
\(\widetilde\alpha_n:=\kappa m_n^{-2/3}\) and \(\widetilde\beta_n:=c_\beta\,\widetilde\alpha_n\). Assume
\[
\max_{b:\,N_b\ge 1}\Bigg|
\frac1{N_b}\sum_{i\in J_b}\bigl(\widehat g_n^{(-k(i))}(Y_{n,i})-m(Y_{n,i})\bigr)
\Bigg| = o_{\mathbb P}(\widetilde\alpha_n).
\]
\end{assumption}

\begin{theorem}[Fixed-noise TRA-s consistency]\label{thm:fixed_noise_TRAs_compact}
Under fixed noise, assume Assumptions~\ref{ass:A_model}--\ref{ass:A_scale_est} and
\ref{ass:binning_scheme}--\ref{ass:bin_reverse_reg_err}. Let \(\alpha_n=\kappa n^{-2/3}\), \(\beta_n=c_\beta\alpha_n\), let \(m_n\)
be the number of nonempty bins, and set \(\widetilde\alpha_n=\kappa m_n^{-2/3}\), \(\widetilde\beta_n=c_\beta\widetilde\alpha_n\).
With \(\widehat{\mathcal R}^{(n)}_{X\mid Y}\) the binned reverse cloud, define
\[
\widetilde\Delta_{0,n}
:=\overline{\mathrm{TP}}^{[\alpha_n,\beta_n]}_0\!\big(\widetilde{\mathcal R}^{(n)}_{Y\mid X}\big)
-\overline{\mathrm{TP}}^{[\widetilde\alpha_n,\widetilde\beta_n]}_0\!\big(\widehat{\mathcal R}^{(n)}_{X\mid Y}\big)\in[-1,1].
\]
Then \(\overline{\mathrm{TP}}^{[\alpha_n,\beta_n]}_0\!\big(\widetilde{\mathcal R}^{(n)}_{Y\mid X}\big)\xrightarrow{\mathbb P}1\) and \(\overline{\mathrm{TP}}^{[\widetilde\alpha_n,\widetilde\beta_n]}_0\!\big(\widehat{\mathcal R}^{(n)}_{X\mid Y}\big)\xrightarrow{\mathbb P}0.\)

Hence \(\Delta_{0,n}\xrightarrow{\mathbb P}1\) and  \(\E[\widetilde\Delta_{0,n}]\to 1\).
\end{theorem}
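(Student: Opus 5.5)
The proof has two halves: the forward claim, which is essentially the forward part of Theorem~\ref{thm:small_noise_consistency}, and the reverse claim for the binned cloud, which needs the tube argument on a grid of $m_n$ points.

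\textbf{Forward part.} The small-noise proof of the forward statement never used $\sigma_n\downarrow0$ in an essential way. Under fixed noise the oracle forward residual is $\sigma\varepsilon_i$, which is independent of $X_i$. So the oracle copula cloud $T_{X,\sigma\varepsilon}(\mathcal R^{(n)}_{\circ,Y\mid X})$ is an i.i.d.\ sample from the independence copula, i.e.\ uniform on $[0,1]^2$. I would reuse the forward argument of Theorem~\ref{thm:small_noise_consistency} in three steps.
\begin{itemize}
\item[(a)] For i.i.d.\ uniform points in the unit square, all but $o_{\mathbb P}(n)$ MST edges have length at least $\beta_n$, because $\beta_n\ll n^{-1/2}$. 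Concretely, a point has a neighbour within distance $\beta_n$ with probability $O(n\beta_n^2)=O(n^{-1/3})\to0$, and each short MST edge charges such a point. Hence $\overline{\mathrm{TP}}_0\to1$.
\item[(b)] Cross-fitted forward $L^2$ consistency (Assumption~\ref{ass:A_scale_est}(i)) together with the Lipschitz property of rank maps gives an empirical copula perturbation.
\item[(c)] A stability bound for $\overline{\mathrm{TP}}_0$ under moving all but $o(n)$ points by $o(\alpha_n)$. This uses that $\Psi$ is $1$-Lipschitz and that MST edge multisets are stable under bottleneck matching~\citep{CohenSteiner_PersistenceStability_2007}.
\end{itemize}
I will simply cite these as the forward half of Theorem~\ref{thm:small_noise_consistency}.

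\textbf{Reverse part.} I decompose $\bar r_b=\bar\xi_b+\bar e_b$, where $\bar\xi_b=N_b^{-1}\sum_{i\in J_b}\xi_i$ and $\bar e_b$ is the bin-averaged regression error. By Assumption~\ref{ass:bin_reverse_reg_err}, $\max_b|\bar e_b|=o_{\mathbb P}(\widetilde\alpha_n)$.

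For $\bar\xi_b$ I condition on $(Y_{n,i})_i$. The $\xi_i$ are then independent and, by Assumption~\ref{ass:reverse_noise_subG}, sub-Gaussian with norm at most $K_0\sigma$, so $\bar\xi_b$ is sub-Gaussian with norm $K_0\sigma N_b^{-1/2}$. Since ranks are exactly uniform, $N_b\approx n/B_n$. Moreover $\min_b N_b\ge n/(2B_n)$ with high probability, by a multinomial/DKW bound using $B_n\log B_n/n\to0$, so that $m_n=B_n$ eventually. A union bound over bins then gives
\[
\max_b|\bar\xi_b|=O_{\mathbb P}\Big(\sqrt{B_n\log B_n/n}\Big).
\]
This is $o(B_n^{-2/3})=o(\widetilde\alpha_n)$ exactly when $B_n^{7/3}\log B_n/n\to0$, which is where Assumption~\ref{ass:binning_scheme} enters.

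Next, the abscissae $\bar u_b$ lie in distinct bins of width $1/B_n$. The binned cloud therefore consists of $m_n$ points at horizontal spacing between consecutive points of order $B_n^{-1}=o(\widetilde\alpha_n)$, since $m_n^{-1}\ll m_n^{-2/3}$. Their vertical offsets are uniformly $o_{\mathbb P}(\widetilde\alpha_n)$.

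Order the points by $\bar u_b$ and form the path connecting consecutive points. Each path edge has length at most
\[
|\bar u_{b+1}-\bar u_b|+2\max_b|\bar r_b| \le 2/B_n+o_{\mathbb P}(\widetilde\alpha_n) = o_{\mathbb P}(\widetilde\alpha_n).
\]
The MST minimizes the bottleneck (minimax) edge, so every MST edge is at most the longest path edge. Hence every MST edge is below $\widetilde\alpha_n$ with probability tending to one, so $\Psi=0$ on every edge and $\overline{\mathrm{TP}}^{[\widetilde\alpha_n,\widetilde\beta_n]}_0(\widehat{\mathcal R}^{(n)}_{X\mid Y})=0$ with probability $\to1$.

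\textbf{Conclusion.} Combining the two parts gives $\widetilde\Delta_{0,n}\xrightarrow{\mathbb P}1$. Because $\widetilde\Delta_{0,n}\in[-1,1]$ is bounded, dominated convergence (bounded convergence in probability) yields $\E[\widetilde\Delta_{0,n}]\to1$.

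\textbf{Main obstacle.} I expect the hardest step to be the uniform bin-mean bound: controlling $\min_b N_b$ and applying conditional sub-Gaussian concentration when the bins are defined through ranks. Ranks make the $U_i$ dependent, though the bin counts are deterministic given ties are absent: each bin contains $\lfloor\cdot\rfloor$ consecutive ranks. I would first exploit that rank-based bins have essentially deterministic counts $N_b\in\{\lfloor (n+1)/B_n\rfloor\pm1\}$. I would then condition on the order statistics of $Y$, noting that the $\xi_i$ remain conditionally independent given all of $Y_{1:n}$.
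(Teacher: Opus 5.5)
Your reverse half is correct and is essentially the paper's proof of Lemma~\ref{lemma:fixed-noise-binning-line}. You split $\bar r_b$ into the oracle bin mean $\bar\xi_b$ and the bin-averaged regression error, and use the deterministic rank-bin occupancies so that $m_n=B_n$ eventually. Conditional sub-Gaussian concentration given $Y_{1:n}$ plus a union bound then give $\max_b|\bar\xi_b|=O_{\mathbb P}(\sqrt{B_n\log B_n/n})=o(B_n^{-2/3})$, exactly under Assumption~\ref{ass:binning_scheme}. Finally you bound every MST edge by the consecutive-bin path via the minimum-bottleneck property. The bounded-convergence step for $\mathbb E[\widetilde\Delta_{0,n}]$ also matches.

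\textbf{The gap is in the forward half.} Your step (c) correctly recognizes that $\overline{\mathrm{TP}}^{[\alpha_n,\beta_n]}_0$ is only $(\beta_n-\alpha_n)^{-1}$-Lipschitz in the MST edge lengths. A displacement argument therefore needs all but $o(n)$ points to move by $o(\alpha_n)=o(n^{-2/3})$, and step (b) cannot supply this, for two reasons.
\begin{itemize}
\item Replacing $(F_X,F_r)$ by ranks moves a \emph{typical} point by order $n^{-1/2}$, not only the worst one; e.g.\ $|i/(n+1)-U_{(i)}|$ is of that order for most $i$. That is the same order as a typical bulk MST edge.
\item Assumption~\ref{ass:A_scale_est}(i) is a rateless $L^2$ statement, so $\widehat f_n^{(-k)}(X_i)-f(X_i)$ need not be $o(n^{-2/3})$ for most $i$.
\end{itemize}

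Deferring to the forward half of Theorem~\ref{thm:small_noise_consistency} does not close this. Its Step~3 applies Lemma~\ref{lemma:TP0-stability}, which controls the \emph{unnormalized} functional by $2d_H$. After dividing by $\beta_n-\alpha_n\asymp n^{-2/3}$, the $O_{\mathbb P}(n^{-1/2})$ rank error gives only $O_{\mathbb P}(n^{1/6})$. That step also treats oracle rather than cross-fitted residuals. The paper's fixed-noise proof simply re-invokes the bulk lemma for the forward cloud, so you have followed its route, including this hole.

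\textbf{A possible repair.} Drop displacement stability and count short pairs directly in rank coordinates. If two rank-copula points are within $\beta_n$, each coordinate gap of the empirical CDFs is at most $(1+o(1))\beta_n$. The oscillation modulus of the uniform empirical process gives
\[
\sup_{|s-t|\le 2\beta_n}\big|(\widehat F_n-F)(s)-(\widehat F_n-F)(t)\big|=O_{\mathbb P}\big(\sqrt{\beta_n\log n/n}\big)=o(\beta_n).
\]
By monotonicity, such a pair is then within $2\beta_n$ in each population-copula coordinate. The first-moment count in the proof of Lemma~\ref{lemma:mesoscopic-separation} then bounds the number of short pairs by $O_{\mathbb P}(n^2\beta_n^2)=o_{\mathbb P}(n)$, provided the copula of the pair has bounded density.

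For oracle residuals this is immediate: the rank cloud is a uniformly random permutation matrix, and the expected count is $O(n^2\beta_n^2)$. For cross-fitted residuals you still need an argument that the copula of $(X_i,r_i^{(Y\mid X)})$ does not concentrate at scale $\beta_n$. One option is a bounded noise density together with truncation of the small-mass set where $|\widehat f_n^{(-k)}-f|$ is large. Neither your sketch nor the paper supplies this piece.
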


\subsection{TRA-C: abstention under unobserved confounding via a Gaussian-copula plug-in bootstrap}
\label{subsect:TRAC}

\begin{figure}[t]
  \centering
  \includegraphics[width=.8\linewidth]{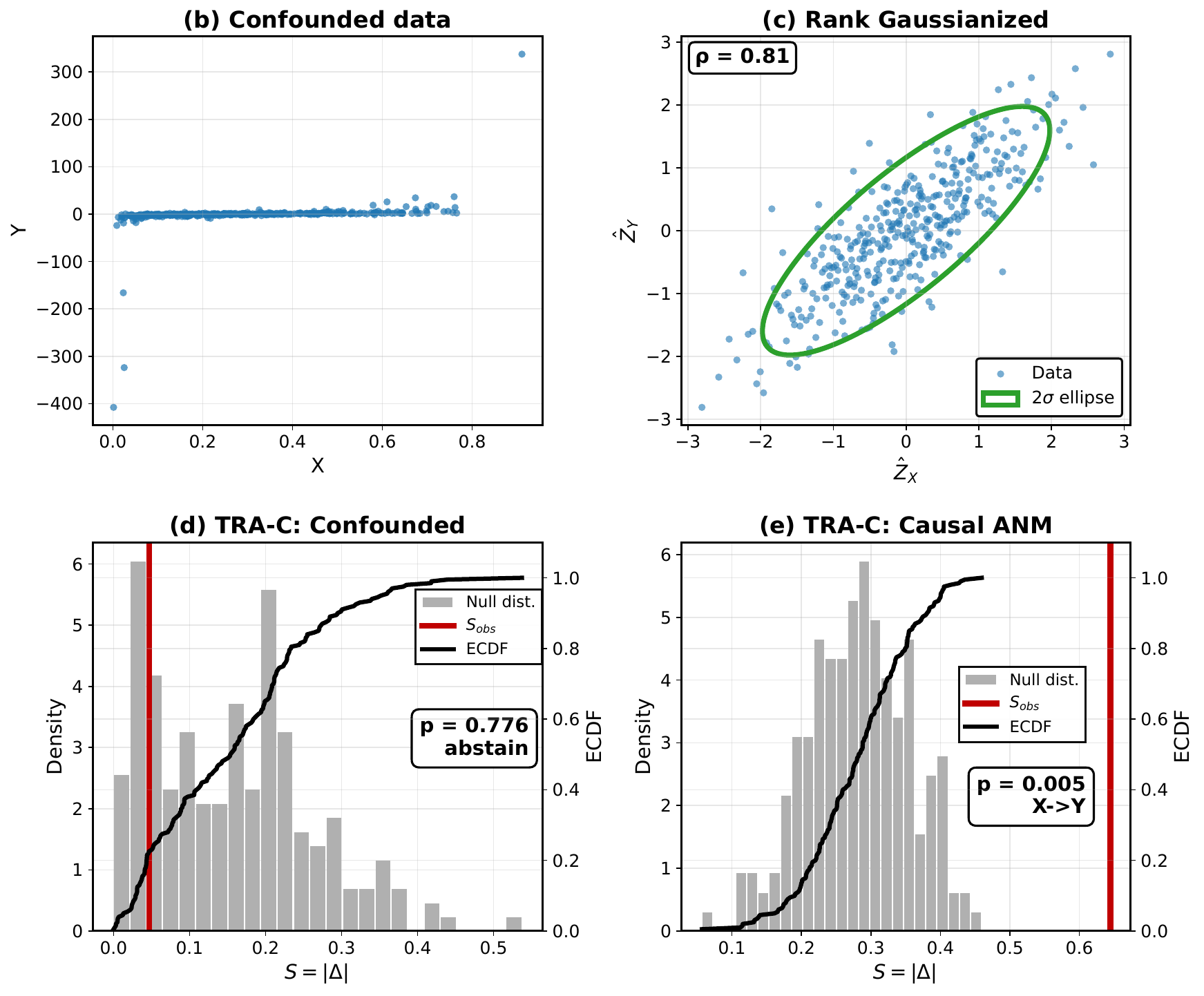}
  \caption{\textbf{TRA-C calibration on a Gaussian-copula confounding null.} Rank-Gaussianized data fit a correlation $\widehat\rho$ and induce a Gaussian-copula bootstrap for the score $\Delta_n$.}
  \label{fig:trac_confounding_example}
\end{figure}

Theorems~\ref{thm:fixed_noise_TRAs_compact} and \ref{thm:small_noise_consistency} analyze the base direction scores \(\Delta_n\) (TRA) and \(\widetilde\Delta_{0,n}\) (TRA-s) under \emph{unconfounded} ANMs. In observational studies, however, the observed dependence may be induced by an \emph{unobserved confounder}, in which case any forced orientation is not causally meaningful \citep{Pearl2009,SpirtesGlymourScheines2001}. We therefore introduce TRA-C, which equips TRA with an \emph{abstain} option by calibrating the score magnitude against a fitted \emph{confounding-only} null, in the spirit of decision rules with reject options \citep{Chow1970,ElYanivWiener2010}. Specifically, we test for confounding using the two-sided magnitude \(S_n:=|\Delta_n|\) for TRA and \(S_n:=|\widetilde\Delta_{0,n}|\) for TRA-s. The calibration is otherwise identical; only the definition of \(S_n\) differs.

As a toy confounding model, let $H\sim\mathcal N(0,1)$, $Z_X=aH+\xi_X$, $Z_Y=bH+\xi_Y$, with $(\xi_X,\xi_Y)\indep H$ and $(\xi_X,\xi_Y)\sim\mathcal N\!\bigl(0,\mathrm{diag}(\sigma_X^2,\sigma_Y^2)\bigr)$, followed by unknown strictly increasing marginals $X=F_X^{-1}(\Phi(Z_X))$, $Y=F_Y^{-1}(\Phi(Z_Y))$. Figure~\ref{fig:trac_confounding_example} illustrates the logic of TRA-C: after rank Gaussianization, $(\widehat Z_X,\widehat Z_Y):=(\Phi^{-1}(\mathrm{rank}(X)/(n+1)),\,\Phi^{-1}(\mathrm{rank}(Y)/(n+1)))$ is well approximated by a bivariate normal with $\widehat\rho\approx 0.81$, yielding an elliptical Gaussian-copula null (shown by the $2\sigma$ contour). TRA-C then calibrates the score magnitude $S_n:=|\mathrm{Score}_n|$ against this null: under confounding (panel~(d)) $S_{\mathrm{obs}}$ is typical ($p\approx 0.776$) and TRA-C abstains, while under a genuine ANM (panel~(e)) $S_{\mathrm{obs}}$ is in the tail ($p\approx 0.005$) and TRA-C returns a direction.

\paragraph{Gaussian-copula null and Gaussianization.}
As a working ``no-direction'' dependence baseline, we take \(\mathcal M_0\) to be the bivariate Gaussian-copula family: distributions with arbitrary continuous marginals \((F_X,F_Y)\) whose copula is Gaussian, i.e., after marginal Gaussianization \(Z_X:=\Phi^{-1}(F_X(X))\) and \(Z_Y:=\Phi^{-1}(F_Y(Y))\), the pair \((Z_X, Z_Y)\) is bivariate normal with correlation \(\rho\in(-1,1)\) \citep{Nelsen_CopulasIntro_2006}. This removes marginal scales and isolates dependence; under \(P\in\mathcal M_0\) the \emph{copula} is parameterized by \(\rho\) alone, so a plug-in \(\hat\rho_n\) yields a symmetric baseline for \(S_n\) \citep{LiuLaffertyWasserman2009,Hoff2007}.

\paragraph{TRA-C calibration.}
We calibrate \(S_n:=|\Delta_n|\) (or \(S_n:=|\widetilde\Delta_{0,n}|\)) under a fitted Gaussian-copula null
\(\widehat M_{0,n}=M_{\hat\rho_n}\), where \(\hat\rho_n\) is the correlation of the rank-Gaussianized marginals.
Algorithm~\ref{alg:trac} details the procedure: TRA-C reports a direction only when \(S_n\) is extreme under
\(\widehat M_{0,n}\), and otherwise abstains. This plug-in parametric-bootstrap calibration in a semiparametric
copula model is supported by classical goodness-of-fit validity results \citep{GenestRemillard2008}.
\begin{algorithm}[tb]
\caption{TRA-C: Gaussian-copula calibration with abstention}
\label{alg:trac}
\small
\begin{algorithmic}[1]
\STATE \textbf{Input:} $\mathcal D_n=\{(X_i,Y_i)\}_{i=1}^n$, signed score $\Delta=\mathsf{Score}(\cdot)$, $B$, $\alpha$.
\STATE Compute $\Delta_n\leftarrow \mathsf{Score}(\mathcal D_n)$, $S_n\leftarrow|\Delta_n|$.
\STATE Rank-Gaussianize and fit copula correlation:
$\hat Z_{X,i}\!\leftarrow\!\Phi^{-1}(\mathrm{rank}(X_i)/(n{+}1))$,
$\hat Z_{Y,i}\!\leftarrow\!\Phi^{-1}(\mathrm{rank}(Y_i)/(n{+}1))$,
$\hat\rho_n\leftarrow \mathrm{Corr}(\hat Z_X,\hat Z_Y)$.
\FOR{$b=1,\dots,B$}
  \STATE Draw $(Z_X^*,Z_Y^*)\stackrel{\text{i.i.d.}}{\sim}\mathcal N(0,\Sigma(\hat\rho_n))$; set $(U_X^*,U_Y^*)=\Phi(Z_X^*,Z_Y^*)$.
  \STATE $(X^*,Y^*)\leftarrow(\hat F_X^{-1}(U_X^*),\hat F_Y^{-1}(U_Y^*))$; $S_n^{*(b)}\leftarrow\bigl|\mathsf{Score}(\mathcal D_n^{*(b)})\bigr|$.
\ENDFOR
\STATE $\hat p_{n,B}\leftarrow \dfrac{1+\sum_{b=1}^B \mathbf 1\{S_n^{*(b)}\ge S_n\}}{B+1}$.
\STATE \textbf{return} $\;\textsf{abstain}$ if $\hat p_{n,B}>\alpha$, else $\textbf{sign}(\Delta_n)\in\{X\!\to\!Y,\;Y\!\to\!X,\;\textsf{abstain}\}$.
\end{algorithmic}
\end{algorithm}

\begin{theorem}[TRA-C: asymptotic level under a plug-in Gaussian-copula bootstrap]
\label{thm:TRAC_level_rigorous}
Fix \(\alpha\in(0,1)\). Let \(\widehat\rho_n=\widehat\rho_n(\mathcal D_n)\) be measurable and define the fitted copula null \(\widehat M_{0,n}:=M_{\widehat\rho_n}\). Let \(\mathrm{Score}_n\in\{\Delta_n,\widetilde\Delta_{0,n}\}\) and set \(S_n:=|\mathrm{Score}_n|\). Conditionally on \(\mathcal D_n\), draw a bootstrap sample from \(M_{\widehat\rho_n}\) and compute its analogue \(S_n^*\). Define the conditional bootstrap CDF \(\widehat F_n^*(t):=\Pr_{\widehat\rho_n}\!\big(S_n^*\le t \,\big|\, \mathcal D_n\big)\), \(t\in\mathbb R\). Let \(B_n\to\infty\) and let \(S_n^{*(1)},\dots,S_n^{*(B)}\) be i.i.d.\ conditionally on \(\mathcal D_n\) with conditional CDF
\(\widehat F_n^*\). Define the (conservative) Monte-Carlo \(p\)-value
\[
\widehat p_{n,B_n}
:=\frac{1+\sum_{b=1}^{B_n} \mathbf 1\{S_n^{*(b)}\ge S_n\}}{B_n+1},
\]
and the TRA-C rejection (non-abstention) event \(R_n:=\{\widehat p_{n,B_n}\le \alpha\}\).
Assume that under \(P_{\rho_0}\in\mathcal M_0\):
\begin{inparaenum}
    \item \(\widehat\rho_n \to \rho_0\) in \(P_{\rho_0}\)-probability;
\item with \(F_{\rho_0,n}(t):=\Pr_{\rho_0}(S_n\le t)\),
\[
\sup_{t\in\mathbb R}\big|\widehat F_n^*(t)-F_{\rho_0,n}(t)\big|\xrightarrow{P_{\rho_0}}0;
\]
\item \(B_n\to\infty\).
\end{inparaenum}
Then the TRA-C test has asymptotic level at most \(\alpha\):
\[
\limsup_{n\to\infty}\Pr_{\rho_0}(R_n)\le \alpha.
\]
\end{theorem}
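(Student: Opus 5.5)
The plan is to reduce the Monte-Carlo $p$-value to the exact bootstrap $p$-value $1-\widehat F_n^*(S_n^-)$, and then use the uniform closeness in (ii) to replace $\widehat F_n^*$ by the true null CDF $F_{\rho_0,n}$. For the true null CDF, a probability-integral-transform argument gives level $\alpha$, with a correction for atoms of $S_n$. Assumption (i) is not needed directly beyond motivating (ii).

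\emph{Step 1 (Monte-Carlo reduction).} Define the conditional exceedance probability $\widehat G_n(t):=\Pr(S_n^*\ge t\mid\mathcal D_n)=1-\widehat F_n^*(t^-)$. Conditionally on $\mathcal D_n$, the count $\sum_b\mathbf 1\{S_n^{*(b)}\ge S_n\}$ is $\mathrm{Bin}(B_n,\widehat G_n(S_n))$, so Chebyshev gives
\[
\Pr\bigl(|\widehat p_{n,B_n}-\widehat G_n(S_n)|>\epsilon\mid\mathcal D_n\bigr)\le \frac{1}{4B_n\epsilon^2}+\mathbf 1\{2/(B_n+1)>\epsilon\},
\]
uniformly in the data. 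This bound tends to $0$ by (iii). Hence $R_n\subseteq\{\widehat G_n(S_n)\le\alpha+\epsilon\}$ up to an event of vanishing probability.

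\emph{Step 2 (replace bootstrap by truth).} Left limits inherit the sup bound, since $\sup_t|\widehat F_n^*(t^-)-F_{\rho_0,n}(t^-)|\le\sup_t|\widehat F_n^*(t)-F_{\rho_0,n}(t)|$. By (ii) this means $\sup_t|\widehat G_n(t)-G_n(t)|\to0$ in $P_{\rho_0}$-probability, where $G_n(t):=\Pr_{\rho_0}(S_n\ge t)$. Therefore
\[
\Pr_{\rho_0}(R_n)\le \Pr_{\rho_0}\bigl(G_n(S_n)\le\alpha+2\epsilon\bigr)+o(1).
\]

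\emph{Step 3 (super-uniformity).} The standard fact is that for any real random variable $S$ with survival function $G(t)=\Pr(S\ge t)$, one has $\Pr(G(S)\le u)\le u$ for all $u\in[0,1]$. To see this, let $t_u:=\inf\{t:G(t)\le u\}$ and use left-continuity and monotonicity of $G$. Then $\{G(S)\le u\}$ is either $\{S\ge t_u\}$ with $G(t_u)\le u$, or $\{S>t_u\}$ with $\Pr(S>t_u)=\lim_{t\downarrow t_u}G(t)\le u$. Applying this with $u=\alpha+2\epsilon$ gives $\limsup_n\Pr_{\rho_0}(R_n)\le\alpha+2\epsilon$, and letting $\epsilon\downarrow0$ concludes.

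\emph{Main obstacle.} The delicate point is the discreteness of $S_n$. $\Delta_n$ is a finite-sample functional of ranks and MST edges, so $S_n$ may have atoms, and the sup-norm convergence in (ii) must be transferred to left limits. That transfer is why I work with the survival function $G$ and its left-continuous form rather than $1-F$. Steps 1 and 2 then need only routine conditional Chebyshev bounds plus a union bound over the two $o_P(1)$ events.
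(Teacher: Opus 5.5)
Your argument is correct, and it takes a different route from the paper's.

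\textbf{The paper's route.} It works with critical values. It sets $\hat c_n(1-\alpha)=\inf\{t:\widehat F_n^*(t)\ge 1-\alpha\}$ and uses the event $\{\sup_t|\widehat F_n^*-F_{\rho_0,n}|\le\varepsilon\}$ to get $F_{\rho_0,n}(\hat c_n(1-\alpha))\ge 1-\alpha-\varepsilon$. From this it deduces level $\alpha+\varepsilon$ for the idealized region $\{S_n>\hat c_n(1-\alpha)\}$. It then asserts that the Monte-Carlo quantile converges to $\hat c_n(1-\alpha)$, so the implemented test differs from the ideal one with probability $o(1)$.

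\textbf{Your route.} You analyze the stated event $\{\widehat p_{n,B_n}\le\alpha\}$ directly. Conditional binomial concentration reduces the Monte-Carlo $p$-value to the exact exceedance probability $\widehat G_n(S_n)$. Hypothesis (ii), transferred to left limits, swaps in the deterministic survival function $G_n$. Super-uniformity of $G_n(S_n)$ then finishes the argument.

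\textbf{What each buys.} The quantile route is the familiar bootstrap-consistency template and is shorter. Yours is more robust exactly where the paper's proof is thin:
\begin{itemize}
\item You never invert a CDF, so you need no continuity or strict monotonicity of $\widehat F_n^*$ at its $(1-\alpha)$-quantile. The paper's step $\hat c_{n,B}-\hat c_n(1-\alpha)\to0$ tacitly requires this, and even that convergence would not by itself make the two rejection events coincide with probability $1-o(1)$.
\item Your Monte-Carlo bound is uniform in the data, so tying $B=B_n$ to $n$ costs nothing.
\item Because $G_n$ is deterministic, your Step~3 is valid as stated. The paper's display $\Pr(S_n>\hat c_n(1-\alpha)\mid\mathcal D_n)\le 1-F_{\rho_0,n}(\hat c_n(1-\alpha))$ treats the $\mathcal D_n$-measurable $S_n$ as random. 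It has to be repaired by comparing $\hat c_n(1-\alpha)$ with the deterministic $(1-\alpha-\varepsilon)$-quantile of $F_{\rho_0,n}$ on the good event.
\end{itemize}

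\textbf{A cosmetic point.} The constant in your two-sided Chebyshev display is slightly off: splitting at $\epsilon/2$ gives $1/(B_n\epsilon^2)$. Only the lower deviation $\{\widehat p_{n,B_n}<\widehat G_n(S_n)-\epsilon\}$ matters, though. There the $+1$ in the numerator works in your favour and gives $1/(4B_n\epsilon^2)$ with no indicator term.
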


The guarantee is \emph{relative to the working null} $\mathcal M_0$. If the data lie in the Gaussian-copula null, then TRA-C outputs a direction with asymptotic probability at most $\alpha$. We do not claim that all confounded distributions lie in $\mathcal M_0$; rather, $\mathcal M_0$ is a broad ``dependence-without-direction'' family (e.g., latent Gaussian factor structure with unknown monotone marginals), and the plug-in bootstrap prevents directional claims when this null fits.

\section{Experiments}
\label{sect:exps}
\paragraph{Baselines and decision rule.}
We compare unsupervised, per-pair methods (TRA/TRA-s, RESIT, IGCI, RECI, CDCI, and the per-pair neural compression baseline COMIC) and supervised predictors (RCC, NCC) trained on independent synthetic labeled pairs and evaluated on all T\"ubingen pairs. For any signed score $\Delta_n$, we permit abstention via the symmetric reject rule: output $X\!\to\!Y$ if $\Delta_n>\tau_n$, $Y\!\to\!X$ if $\Delta_n<-\tau_n$, and abstain otherwise. We set $\tau_n$ in \emph{stability} mode (no labels): draw $R$ subsamples (fraction $0.8$) of the observed pair, compute $\Delta_n^{(r)}$ on each, and define
\[
\tau_n := z_{1-\alpha/2}\,\widehat{\mathrm{sd}}\!\big(\Delta_n^{(1)},\ldots,\Delta_n^{(R)}\big),\quad R=50.
\]
We run TRA-s and TRA-C with a Gaussian-copula confounding null, using $B=500$ null bootstraps and $\alpha=0.10$ for the two-sided significance/abstention test.

\vspace{-0.5em}
\paragraph{Metrics.}
We summarize performance by \emph{coverage} $\mathrm{Cov}=\frac{n_{\mathrm{decided}}}{n}$), \emph{decided accuracy} $\mathrm{Acc}_d=\frac{n_{\mathrm{correct}}}{n_{\mathrm{decided}}}$, and the single scalar \emph{directional risk} $\mathrm{Risk}=(1-\mathrm{Acc}_d)\mathrm{Cov}=\frac{n_{\mathrm{wrong}}}{n}$, so a method is penalized either for wrong calls at high coverage or for excessive abstention, with Wilson-score confidence intervals for accuracy over decided pairs.

\vspace{-0.5em}
\paragraph{Baseline hyperparameter selection.}
To match our setting---unsupervised orientation from a single observational pair; we perform no label-based tuning on evaluation pairs. For unsupervised baselines (ANM/RESIT variants, IGCI, RECI, CDCI, bQCD/QCCD, COMIC, SLOPPY), we use authors’ defaults or \emph{label-free} selection applied symmetrically in both directions: cross-validation for regression/conditional models, standard heuristics for
independence tests (e.g., HSIC median bandwidth), and objective-based selection when available (e.g., COMIC ELBO). Any abstention thresholds (RECI/SLOPPY confidence rules, ANM test rejection, CDCI non-causal output) are fixed \emph{a priori} or null calibrated, never tuned to ground-truth directions. Supervised transfer baselines (RCC, NCC) are trained/tuned only on external labeled/synthetic data and then frozen. For methods that permit a choice of regression backbone (e.g. TRA/TRA-s, RESIT, RECI), we standardize it to the same nonparametric regressor, smoothing splines \citep{Wahba_1990_Splines}.

\subsection{Synthetic Data}
\label{subsect:exp_synthetic}

\paragraph{DGPs.}
We consider four synthetic additive-noise regimes, each with ground-truth direction $X\to Y$. In each regime we vary a single \emph{stress} parameter and hold all others fixed:
\begin{inparaenum}[(i)]
    \item \emph{Cubic ANM (SNR sweep):} $X\sim\mathcal N(0,1)$, $Y=X^3+\sigma_\varepsilon\epsilon$, $\epsilon\sim\mathcal N(0,1)$; sweep $\sigma_\varepsilon$ from low to high noise.
    \item \emph{Near-linear ANM (nonlinearity sweep):} $X\sim\mathcal N(0,1)$, $Y=X+cX^3+\sigma_\varepsilon\epsilon$, $\epsilon\sim\mathcal N(0,1)$ with $\sigma_\varepsilon=0.3$; sweep $c\downarrow 0$ toward the near-nonidentifiable linear limit.
    \item \emph{Heteroscedastic cubic ANM (variance-drift sweep):} $X\sim\mathcal N(0,1)$, $Y=X^3+\epsilon$ with $\epsilon=(\sigma_0+\lambda|X|)\xi$, $\xi\sim\mathcal N(0,1)$ and $\sigma_0=0.3$; sweep $\lambda\ge 0$ to increase heteroscedasticity.
    \item \emph{Non-monotone sine ANM (noise sweep):} $X\sim\mathrm{Unif}[-1,1]$, $Y=\sin(X)+\sigma_\varepsilon\epsilon$, $\epsilon\sim\mathcal N(0,1)$; sweep $\sigma_\varepsilon$ to increase noise under a non-monotone mechanism.
\end{inparaenum}
For each scenario, we sweep sample size $n\in\{50,100,150,250,500,1000,1500,2000\}$ and run $n_{\mathrm{rep}}=30$ independent Monte Carlo replicates per $(n,\text{parameter})$ setting. In each replicate we draw one dataset $\{(X_i,Y_i)\}_{i=1}^n$ and evaluate all methods on the same draw.

Figure~\ref{fig:synthetic_anm_directed_risk_atlas} reports \emph{directed risk} for all baselines across four ANM stress tests with
ground-truth $X\to Y$, as we vary sample size $n$ (vertical axis) and a scenario-specific stress parameter (horizontal axis). Most baselines are strongly \emph{scenario dependent}: they excel in one panel yet fail in another, producing the high-risk bands across the atlas. In contrast, the TRA family is consistently strong: raw TRA is best in the small-noise corner, while TRA-s stays low-risk in fixed-noise regimes where raw TRA cannot exploit reverse-cloud collapse. This matches the theory: Theorem~\ref{thm:small_noise_consistency} makes $\Delta_n$ informative as $\sigma_n\downarrow 0$, and Theorem~\ref{thm:fixed_noise_TRAs_compact} shows that smoothing ($\widetilde\Delta_{0,n}$) restores separation at fixed noise by averaging reverse fluctuations at a mesoscopic scale. 

In \emph{cubic ANM (noise/SNR sweep)}, raw TRA improves rapidly as noise decreases, in line with Theorem~\ref{thm:small_noise_consistency}: in the low-noise corner the reverse copula cloud becomes tube-like and MST merges fall below $\alpha_n$, so $\Delta_n$ cleanly separates directions. As noise grows this collapse disappears and raw TRA degrades, whereas TRA-s stays stable across the sweep, consistent with Theorem~\ref{thm:fixed_noise_TRAs_compact}. Several classical pipelines fail in stress-dependent bands, and supervised predictors (NCC/RCC/COMIC) show pronounced distribution-shift sensitivity with broad high-risk regions. In \emph{near-linear ANM (nonlinearity sweep)}, identifiability weakens as the cubic perturbation vanishes. As expected, many baselines
develop elevated risk bands as the decision signal collapses. Even in this corridor, TRA-s remains among the most reliable
methods, highlighting that it is driven by forward--reverse residual \emph{geometry} after copula standardization and smoothing, rather
than by a single parametric asymmetry that must stay bounded away from zero. In \emph{heteroscedastic cubic ANM (variance-drift sweep)}, increasing $\lambda$ violates homoscedastic-noise assumptions and degrades
several baselines. Methods that exploit richer conditional-distribution information (e.g., CDCI/bQCD) are more tolerant once $n$ is
large enough to estimate these quantities. TRA-s remains low-risk across $\lambda$ and $n$, consistent with mesoscopic bin-averaging
damping variance-driven fluctuations while preserving directional separation. Finally, in the \emph{non-monotone sine ANM (noise sweep)}, the forward map is many-to-one, so reverse regression must mix multiple
inverse branches. Supervised predictors again show pronounced instability across the grid, whereas TRA-s remains competitive and improves with $n$. In the \emph{monotone} regimes (cubic and near-linear), the strong performance of TRA-s also supports the role of ourforward-model regularity condition (Assumption~\ref{ass:A_model}), under which the reverse oracle cloud concentrates on a finite union of smooth strands as a noise vanishes.

\vspace{-0.5em}
\paragraph{Latent confounding setting (no true direction).}
To evaluate abstention under dependence-without-direction, we simulate \emph{confounding-only} models driven by a latent $Z$. In the linear case, $Z\sim\mathcal N(0,1)$, $X=Z+\varepsilon_X$, $Y=\gamma Z+\varepsilon_Y$, with $\varepsilon_X\sim\mathcal N(0,\sigma_X^2)$
and $\varepsilon_Y\sim\mathcal N(0,\sigma_Y^2)$ independent of $Z$; we sweep $\gamma$. We also test a \emph{nonlinear} confounded
scenario by adding a symmetric cubic warp: $Z\sim\mathcal N(0,1)$, $X=Z+aZ^3+\varepsilon_X$, and $Y=bZ+aZ^3+\varepsilon_Y$, with $b$
fixed and $a$ swept. Since neither $X\!\to\!Y$ nor $Y\!\to\!X$ holds, \emph{directional accuracy is undefined} and we report
\emph{coverage} (the fraction of non-abstained decisions). Figure~\ref{fig:coverage_confounded} tracks coverage (with confidence
intervals) as a function of the stress parameter ($\gamma$ or $a$; top row) and, after averaging over the stress sweep, as a function of
sample size $n$ (bottom row). Most direction-forcing baselines (RESIT, IGCI, CDCI, RCC, bQCD, SLOPPY) sustain near-unit coverage across parameters, i.e., they almost always output a direction despite the absence of a causal arrow. A partial exception is RECI: its mean coverage drops with $n$ in parts of the sweep, plausibly because its score difference shrinks or destabilizes under symmetric confounding; but this abstention is
uncalibrated and offers no controlled ``no-direction'' guarantee. In contrast, TRA-C calibrates score magnitude with a plug-in
Gaussian-copula bootstrap under a confounding null and exhibits the intended reject behavior, abstaining rather than hallucinating an
arrow in both the linear and nonlinear sweeps.

\begin{figure}[H]
  \centering
  \includegraphics[width=.5\textwidth]{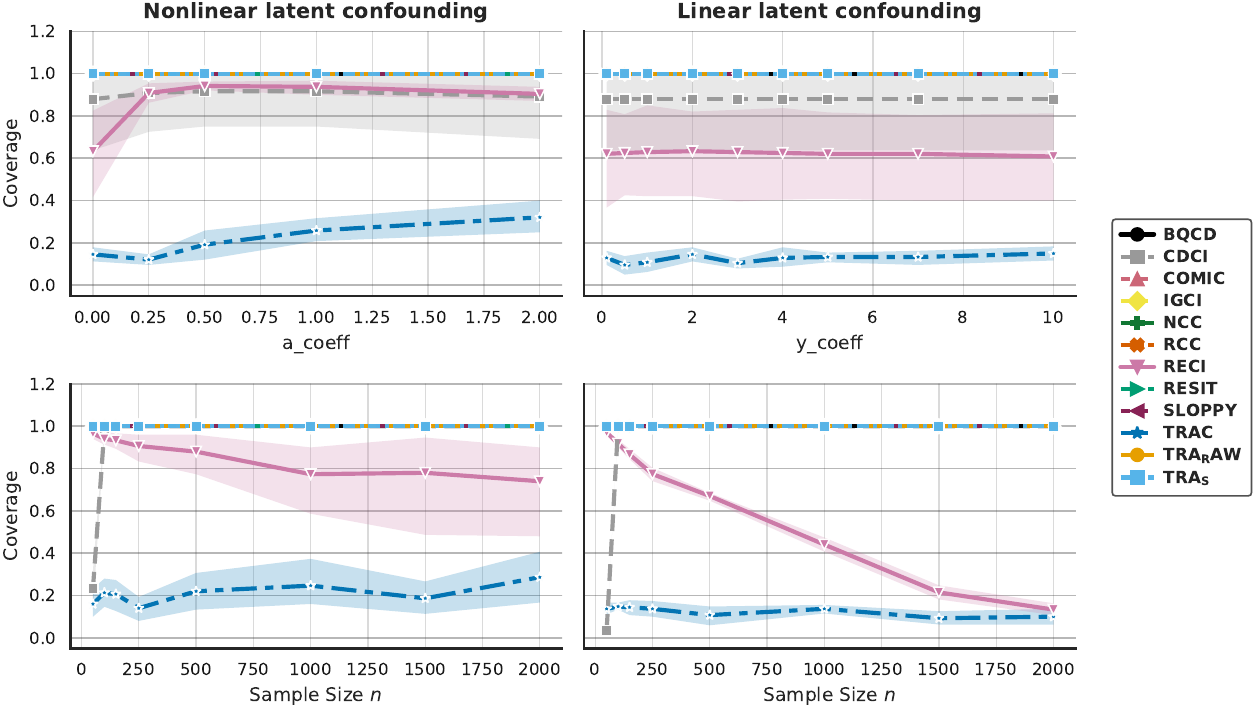}
  \caption{Evolution of coverage under (non)linear latent confounding as function of stress parameters (top) and sample size (bottom). Lower is better.}
  \label{fig:coverage_confounded}
\end{figure}

\begin{figure*}[!t]
  \centering
  \includegraphics[width=\textwidth]{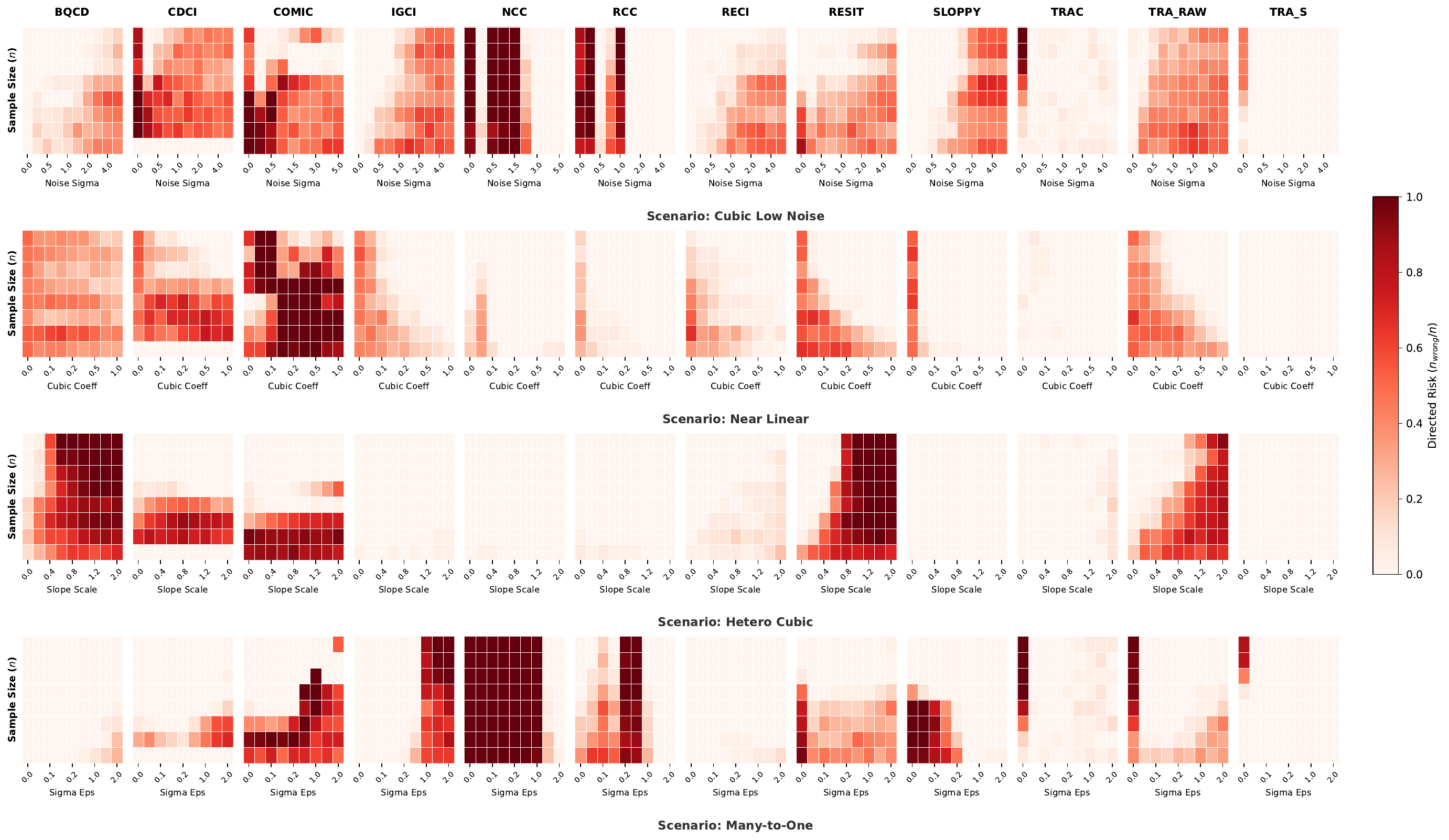}
    \caption{\textbf{Synthetic ANM atlas.} Directed risk  vs.\ sample size $n$ and the scenario stress parameter. TRA-s is consistently low-risk across regimes; baselines show regime-specific failures.}
  \label{fig:synthetic_anm_directed_risk_atlas}
\end{figure*}

\begin{figure*}[!t]
  \centering
  \includegraphics[width=.8\textwidth]{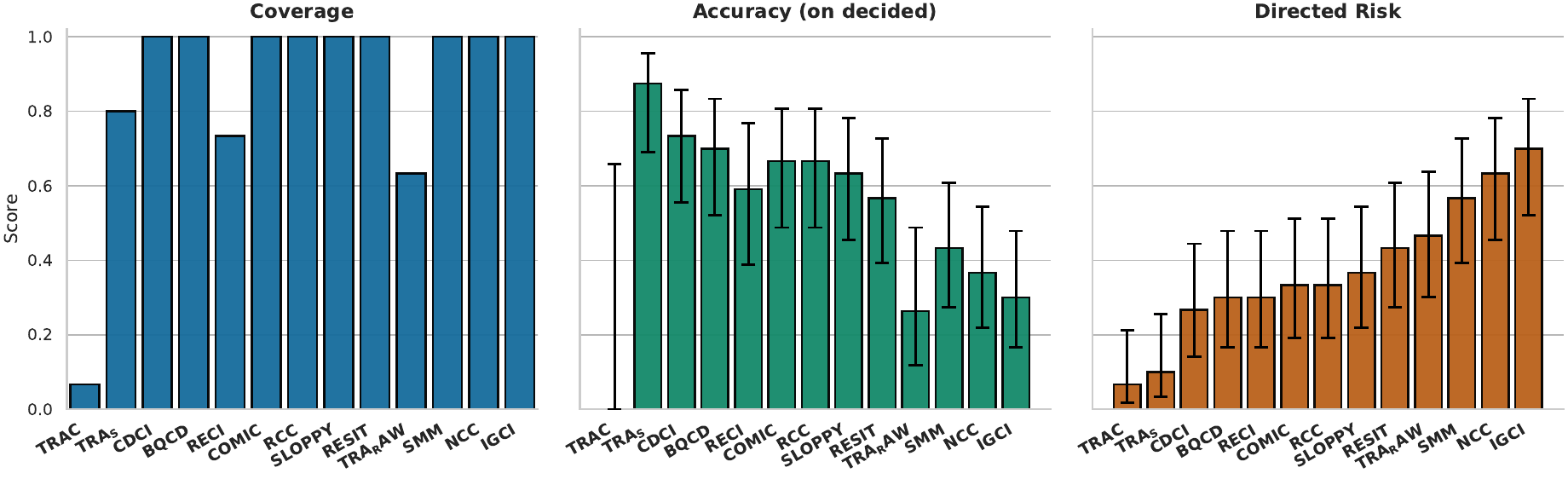}
  \caption{\textbf{Results on T\"ubingen cause--effect pairs.} Coverage, accuracy conditional on deciding, and directed risk for all methods.}
  \label{fig:tubingen_summary_stats}
\end{figure*}
\subsection{Real-world experiments}
\label{subsect:exp_tubingen}
We benchmark on curated observational cause--effect datasets with unknown data-generating processes, where standard identifiability
assumptions may fail, using the T\"ubingen Cause--Effect Pairs benchmark \citep{Mooij_CauseEffectPairs_JMLR_2016} (108 labeled real-world pairs). For compatibility with common univariate baselines, we restrict to 30 pairs where both the annotated cause and effect are
univariate. Figure~\ref{fig:tubingen_summary_stats} summarizes performance via \emph{coverage}, \emph{accuracy on decided}, and
\emph{directed risk} (penalizing both wrong calls and abstention), yielding two conclusions. 

First, TRA variants give the best accuracy--coverage tradeoff: TRA-C is deliberately conservative, attaining the highest decided accuracy and lowest directed risk overall, while TRA-s is the strongest high-coverage option, deciding on most pairs with (best/near-best) accuracy and the second-lowest risk. Second, competing families show the expected distribution-shift brittleness on real pairs:
scalar-score baselines (RECI, RESIT, SLOPPY) and distributional criteria (CDCI, bQCD) achieve only moderate accuracy and higher risk than TRA-s, and supervised predictors are least reliable (NCC and IGCI incur the largest risks), consistent with sensitivity to heterogeneous real-world mechanisms and marginals. Finally, TRA-C's low risk is not due to abstention alone---risk penalizes abstention---but to a reduction in wrong orientations that outweighs the abstention penalty; when coverage is the priority, TRA-s is the appropriate operating point.

\textbf{Conclusion.} We introduced \emph{Topological Residual Asymmetry} (TRA), which infers causal direction from the copula-standardized geometry of cross-fitted regressor--residual clouds via Euclidean MST summary. We proved consistency in a triangular-array
small-noise regime, showed that mesoscopic binning (TRA-s) restores separation under fixed noise, and proposed TRA-C, a
confounding-aware abstention rule calibrated by a Gaussian-copula null. Across synthetic stress tests and T\"ubingen pairs, TRA
variants deliver the best accuracy--coverage tradeoff and lowest directed risk. Next steps include extending TRA to multivariate discovery by using local residual-asymmetry tests for edge orientation \citep{Peters_CausalInference_2017, Zheng_NO_TEARS_2018}, and leveraging limit theory for random geometric graphs/complexes to obtain calibrated uncertainty and higher-order topological summaries beyond MSTs \citep{Penrose_RdGeomGrph_2003, BobrowskiKahle_RandomGeometricComplexesSurvey_2014, SkrabaYogeshwaran_MinSpanningAcycles_2022}.

\section*{Impact Statement}
This work advances methodology for causal direction inference from observational data by proposing a topological, abstention-capable criterion for bivariate additive-noise models. Potential positive impacts include more reliable orientation decisions, especially via abstention in ambiguous or confounded settings, which may reduce downstream harm from overconfident causal claims in scientific applications. At the same time, causal direction estimates can be misused or overinterpreted, particularly when applied outside the assumed regime or in high-stakes domains; our results therefore do not justify automated decision-making without domain expertise and careful validation. We expect no direct negative societal impact from releasing this method and provide abstention, calibration mechanisms, and confidence intervals intended to discourage unwarranted causal conclusions. 

\bibliography{refs}
\bibliographystyle{icml2026}

%%%%%%%%%%%%%%%%%%%%%%%%%%%%%%%%%%%%%%%%%%%%%%%%%%%%%%%%%%%%%%%%%%%%%%%%%%%%%%%
%%%%%%%%%%%%%%%%%%%%%%%%%%%%%%%%%%%%%%%%%%%%%%%%%%%%%%%%%%%%%%%%%%%%%%%%%%%%%%%
% APPENDIX
%%%%%%%%%%%%%%%%%%%%%%%%%%%%%%%%%%%%%%%%%%%%%%%%%%%%%%%%%%%%%%%%%%%%%%%%%%%%%%%
%%%%%%%%%%%%%%%%%%%%%%%%%%%%%%%%%%%%%%%%%%%%%%%%%%%%%%%%%%%%%%%%%%%%%%%%%%%%%%%
\newpage
\appendix
\onecolumn

\section{Background}
\label{appendix:background}

This appendix collects standard probabilistic and geometric notions used in the proofs.

\subsection{Probability, laws, and stochastic order}
\label{subsec:probability_background}

\paragraph{Underlying probability space.}
All random objects are defined on a common probability space $(\Omega,\mathcal F,\mathbb P)$.
For a measurable map $Z:\Omega\to\mathbb R^2$, we write $\mathcal L(Z)$ for its distribution.

\begin{definition}[Law / pushforward]
\label{def:law_pushforward}
Let $Z:\Omega\to\mathbb R^2$ be measurable. The \emph{law} of $Z$ is the pushforward probability measure
\[
\mathcal L(Z)\ :=\ \mathbb P\circ Z^{-1}\ \in\ \mathcal P(\mathbb R^2).
\]
Equivalently, for every bounded measurable $\varphi:\mathbb R^2\to\mathbb R$,
\[
\int_{\mathbb R^2}\varphi(z)\,\mathcal L(Z)(dz)\ =\ \mathbb E[\varphi(Z)].
\]
\end{definition}

\begin{definition}[Coupling]
\label{def:coupling}
Let $\mu,\nu\in\mathcal P(\mathbb R^2)$. A \emph{coupling} of $(\mu,\nu)$ is a probability measure
$\pi\in\mathcal P(\mathbb R^2\times\mathbb R^2)$ with marginals $\mu$ and $\nu$, i.e.\ for all Borel sets
$A\subset\mathbb R^2$,
\[
\pi(A\times\mathbb R^2)=\mu(A),
\qquad
\pi(\mathbb R^2\times A)=\nu(A).
\]
Equivalently, a coupling can be realized as a pair of random vectors $(U,V)$ on a single probability space such that
$\mathcal L(U)=\mu$, $\mathcal L(V)=\nu$, and then $\pi=\mathcal L(U,V)$.
\end{definition}

\paragraph{Stochastic order notation.}
For sequences of random variables $(Z_n)$ and positive reals $(a_n)$, we use $O_{\mathbb P}$ and $o_{\mathbb P}$ in the standard sense.

\begin{definition}[Big-$O$ and little-$o$ in probability]
\label{def:Op_op}
We write $Z_n=O_{\mathbb P}(a_n)$ if $(|Z_n|/a_n)$ is bounded in probability, i.e.\ for every $\varepsilon>0$ there exist
$M<\infty$ and $n_0$ such that $\mathbb P(|Z_n|>Ma_n)\le \varepsilon$ for all $n\ge n_0$.
We write $Z_n=o_{\mathbb P}(a_n)$ if $Z_n/a_n\xrightarrow{\mathbb P}0$, i.e.\ for every $\varepsilon>0$,
$\mathbb P(|Z_n|>\varepsilon a_n)\to 0$.
\end{definition}

\begin{remark}[Basic algebra]
If $Z_n=o_{\mathbb P}(a_n)$ then $Z_n=O_{\mathbb P}(a_n)$.
If $Z_n=O_{\mathbb P}(a_n)$ and $W_n=O_{\mathbb P}(b_n)$, then
$Z_n+W_n=O_{\mathbb P}(a_n+b_n)$ and $Z_nW_n=O_{\mathbb P}(a_nb_n)$.
\end{remark}

\subsection{Bounded--Lipschitz distance}
\label{subsec:BL_metric}

We measure weak convergence of probability measures on $\mathbb R^2$ using the bounded--Lipschitz metric.

\begin{definition}[Bounded--Lipschitz metric]
\label{def:bounded_lip_metric}
For $\mu,\nu\in\mathcal P(\mathbb R^2)$ and measurable $\varphi:\mathbb R^2\to\mathbb R$, set
$\|\varphi\|_\infty:=\sup_{z\in\mathbb R^2}|\varphi(z)|$ and
$\mathrm{Lip}(\varphi):=\sup_{z\neq z'}|\varphi(z)-\varphi(z')|/\|z-z'\|_2$.
Let
\[
\mathrm{BL}_1(\mathbb R^2)
:=\{\varphi:\|\varphi\|_\infty\le 1,\ \mathrm{Lip}(\varphi)\le 1\}.
\]
Define
\[
d_{\mathrm{BL}}(\mu,\nu)
:= \sup_{\varphi\in \mathrm{BL}_1(\mathbb R^2)}
\Bigl|\int \varphi\,d\mu-\int \varphi\,d\nu\Bigr|.
\]
\end{definition}

\begin{remark}[Coupling bound and finiteness]
\label{rem:dBL_coupling}
For any coupling $(U,V)$ of $(\mu,\nu)$ and any $\varphi\in \mathrm{BL}_1(\mathbb R^2)$,
$|\mathbb E[\varphi(U)]-\mathbb E[\varphi(V)]|\le \mathbb E\|U-V\|_2$, hence
\[
d_{\mathrm{BL}}(\mu,\nu)\ \le\ \mathbb E\|U-V\|_2 .
\]
Unlike $W_1$, $d_{\mathrm{BL}}$ is finite for all probability measures since test functions are uniformly bounded.
\end{remark}

\subsection{Copulas and empirical copula coordinates}
\label{subsec:copulas}

\paragraph{Copulas.}
Let $(X,Y)$ have joint CDF $F_{X,Y}$ and marginal CDFs $F_X,F_Y$.

\begin{definition}[Bivariate copula]
\label{def:copula}
A \emph{copula} is a CDF $C:[0,1]^2\to[0,1]$ with uniform marginals.
Equivalently, $C$ is the joint CDF of some $(U,V)$ with $U,V\sim\mathrm{Unif}[0,1]$.
\end{definition}

\begin{theorem}[Sklar]
\label{thm:sklar}
There exists a copula $C$ such that $F_{X,Y}(x,y)=C(F_X(x),F_Y(y))$ for all $x,y$.
If $F_X$ and $F_Y$ are continuous, then $C$ is unique and $(F_X(X),F_Y(Y))$ has copula $C$.
\end{theorem}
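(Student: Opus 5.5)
The plan is to prove the continuous-marginal case directly via the probability integral transform, and then handle general marginals with a randomized (distributional) transform. In the continuous case the copula is built explicitly, not obtained abstractly. Set $U:=F_X(X)$ and $V:=F_Y(Y)$, and let $C$ be the joint CDF of $(U,V)$. The first step is to show $U\sim\mathrm{Unif}[0,1]$ when $F_X$ is continuous. For $u\in(0,1)$, let $x_u:=\sup\{x:F_X(x)\le u\}$. Continuity gives $F_X(x_u)=u$ and $\{F_X(X)\le u\}=\{X\le x_u\}$ up to a null set. That null set is contained in the event that $X$ lands in a maximal interval where $F_X$ is constant, and such an interval has $\mathbb P_X$-measure zero. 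Hence $\mathbb P(U\le u)=F_X(x_u)=u$. The same argument applies to $V$, so $C$ is a copula in the sense of Definition~\ref{def:copula}.

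The representation then follows from the set identity $\{X\le x\}=\{F_X(X)\le F_X(x)\}$ a.s. The inclusion $\subseteq$ holds by monotonicity. The difference of the two sets is $\{X>x,\ F_X(X)=F_X(x)\}$, which has probability $0$ by the same flat-interval argument. Intersecting with the analogous identity for $Y$ gives
\[
F_{X,Y}(x,y)=\mathbb P\bigl(U\le F_X(x),\,V\le F_Y(y)\bigr)=C\bigl(F_X(x),F_Y(y)\bigr).
\]
This also shows that $(F_X(X),F_Y(Y))$ has copula $C$, which is the second claim of Theorem~\ref{thm:sklar}.

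For uniqueness, suppose $C'$ also satisfies $F_{X,Y}=C'(F_X,F_Y)$. By continuity and the intermediate value theorem, $\mathrm{ran}\,F_X\supseteq(0,1)$, and likewise for $F_Y$. So $C=C'$ on $(0,1)^2$. Every copula satisfies the Lipschitz bound $|C(u,v)-C(u',v')|\le|u-u'|+|v-v'|$, which follows from the Fréchet inequalities for a CDF with uniform margins. Therefore $C$ and $C'$ also agree on the boundary of $[0,1]^2$ by continuity.

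For existence with general (possibly atomic) marginals, I would use the distributional transform $\tilde U:=F_X(X-)+W\bigl(F_X(X)-F_X(X-)\bigr)$, where $W\sim\mathrm{Unif}[0,1]$ is independent of $(X,Y)$. Define $\tilde V$ analogously using an independent $W'$. The key facts are that $\tilde U$ is uniform and that $\{\tilde U\le F_X(x)\}=\{X\le x\}$ a.s. The second fact holds because $\tilde U$ lies in $[F_X(X-),F_X(X)]$, and it can equal $F_X(x)$ for some $X>x$ only on a null event. Taking $C$ to be the joint CDF of $(\tilde U,\tilde V)$ then gives the representation exactly as in the continuous case. I expect this tie-breaking step at the atoms to be the main technical obstacle. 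Verifying uniformity of $\tilde U$ requires splitting $\mathbb P(\tilde U\le u)$ by whether $u$ falls in a jump interval $[F_X(x_0-),F_X(x_0)]$; inside such an interval the randomization interpolates linearly. The continuous case, by contrast, is routine.
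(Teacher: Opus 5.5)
The paper does not prove this statement: Sklar's theorem is quoted in the Background appendix as a classical fact without proof, and it is only ever used in the continuous-marginal case, where it justifies the copula maps $T_{U,V}$. There is therefore no in-paper argument to compare with. Your proposal supplies a self-contained proof along the standard lines, and it is correct: the probability integral transform handles continuous margins, and R\"uschendorf's distributional transform $\tilde U=F_X(X-)+W\bigl(F_X(X)-F_X(X-)\bigr)$ handles the unconditional existence clause. That general-marginal part goes beyond what the paper needs, but the statement as written does require it. Your uniformity sketch for $\tilde U$, with linear interpolation across each jump interval, is the right argument.

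Two small refinements. First, the continuous case is cleaner than you state. With $x_u:=\sup\{x:F_X(x)\le u\}$, continuity gives $\{x:F_X(x)\le u\}=(-\infty,x_u]$, so $\{F_X(X)\le u\}=\{X\le x_u\}$ holds exactly, not just up to a null set. Also, the identity $\{X\le x\}=\{F_X(X)\le F_X(x)\}$ a.s.\ holds for arbitrary $F_X$.

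Second, the Lipschitz bound $|C(u,v)-C(u',v')|\le|u-u'|+|v-v'|$ does not come from the Fr\'echet--Hoeffding inequalities. It comes from $0\le C(u',v)-C(u,v)=\mathbb P(u<U\le u',\,V\le v)\le u'-u$ for $u\le u'$, and symmetrically in $v$. You can also skip it entirely: uniform margins force $C(u,0)=C(0,v)=0$, $C(u,1)=u$ and $C(1,v)=v$ for every copula, so agreement on the boundary of $[0,1]^2$ is automatic once $C=C'$ on $(0,1)^2$.
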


\paragraph{Pseudo-observations via ranks.}
Given a univariate sample $W_1,\dots,W_n$ with continuous distribution, define
\[
\mathrm{rank}(W_i):=\#\{j:W_j\le W_i\}\in\{1,\dots,n\},
\qquad
\widehat U_i:=\frac{\mathrm{rank}(W_i)}{n+1}\in\Bigl(\frac{1}{n+1},\frac{n}{n+1}\Bigr),
\]
where the $(n+1)$ scaling avoids the boundaries $\{0,1\}$.

\begin{definition}[Empirical copula]
\label{def:empirical_copula}
Given a bivariate sample $(A_i,B_i)_{i=1}^n$ with continuous marginals, set
$U_i^{(A)}:=\mathrm{rank}(A_i)/(n+1)$ and $U_i^{(B)}:=\mathrm{rank}(B_i)/(n+1)$, and define
\[
C_n(u,v):=\frac{1}{n}\sum_{i=1}^n \mathbf 1\{U_i^{(A)}\le u,\ U_i^{(B)}\le v\},
\qquad (u,v)\in[0,1]^2.
\]
\end{definition}

\subsection{Distances between point clouds}
\label{subsec:cloud_distances}

\paragraph{Hausdorff distance.}
For nonempty $A\subset\mathbb R^2$ and $z\in\mathbb R^2$, let $d_2(z,A):=\inf_{a\in A}\|z-a\|_2$.
For nonempty $A,B\subset\mathbb R^2$, define
\[
d_H(A,B):=\max\Big\{\sup_{a\in A} d_2(a,B),\ \sup_{b\in B} d_2(b,A)\Big\}.
\]
If $A,B$ are finite, the suprema are maxima and $d_H(A,B)<\infty$ automatically.

\paragraph{Gromov--Hausdorff distance.}
For compact metric spaces $(\mathsf X,d_{\mathsf X})$ and $(\mathsf Y,d_{\mathsf Y})$,
\[
d_{\mathrm{GH}}(\mathsf X,\mathsf Y)
:=\inf_{(\mathsf Z,d_{\mathsf Z}),\,\varphi,\,\psi}
d_H^{(\mathsf Z)}\!\big(\varphi(\mathsf X),\psi(\mathsf Y)\big),
\]
where the infimum ranges over all metric spaces $(\mathsf Z,d_{\mathsf Z})$ and isometric embeddings
$\varphi:\mathsf X\hookrightarrow\mathsf Z$, $\psi:\mathsf Y\hookrightarrow\mathsf Z$.

\begin{remark}[Ambient Hausdorff upper bound]
\label{rem:GH_le_H}
If $A,B\subset\mathbb R^2$ are compact and are equipped with the induced Euclidean metric, then
\[
d_{\mathrm{GH}}(A,B)\ \le\ d_H(A,B),
\]
by taking $\mathsf Z=\mathbb R^2$ and $\varphi,\psi$ as inclusions.
\end{remark}

\subsection{Minimum spanning trees and $H_0$ persistence}
\label{subsec:bg_mst_h0}

In degree $0$, persistent homology of a finite cloud is entirely about \emph{connectivity} as the scale increases:
connected components merge when edges appear, and each merge kills exactly one component.  For the Vietoris--Rips filtration, the merge scales coincide with the edge lengths selected by Kruskal's algorithm, hence with the edge lengths of a minimum spanning tree (MST). This identification lets us replace $H_0$ arguments by elementary graph/MST arguments throughout.

\paragraph{Weighted complete graph and MST.}
Let $(S,d)$ be a finite metric space with $|S|=m$.  Write $K(S)$ for the complete graph on vertex set $S$ with edge
weights $w(\{u,v\})=d(u,v)$.  A \emph{spanning tree} is a connected acyclic subgraph on $S$ with exactly $m-1$ edges.
A \emph{minimum spanning tree (MST)} is a spanning tree minimizing the total weight $\sum_{e} w(e)$.

\begin{theorem}[$H_0$ persistence equals MST edge lengths]
\label{thm:H0_equals_MST}
Let $(S,d)$ be a finite metric space with $|S|=m\ge 2$ and consider the Vietoris--Rips filtration with the convention
that an edge $\{u,v\}$ appears at scale $\epsilon$ iff $d(u,v)\le \epsilon$.  Let $T_{\mathrm{MST}}$ be an MST of
$(K(S),w)$ and list its edge lengths in nondecreasing order as $\ell_1\le \cdots \le \ell_{m-1}$.  Then the multiset
of finite death times in $\mathrm{Dgm}_0(S)$ is exactly $\{\ell_j\}_{j=1}^{m-1}$; equivalently,
$\mathrm{Dgm}_0(S)$ has finite points $(0,\ell_1),\ldots,(0,\ell_{m-1})$ and a single point $(0,\infty)$.
\end{theorem}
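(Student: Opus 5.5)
The statement to be proved is Theorem~\ref{thm:H0_equals_MST}, the identification of finite $H_0$ death times of the Vietoris--Rips filtration with MST edge lengths. I will prove it by running Kruskal's algorithm in parallel with the filtration. In degree $0$ only the $1$-skeleton matters: $H_0$ of the Rips complex at scale $\epsilon$ is the free module on the connected components of the graph $G_\epsilon:=\{e: w(e)\le\epsilon\}$ on vertex set $S$. Every vertex is born at scale $0$, so every point of $\mathrm{Dgm}_0(S)$ has birth $0$. By the elder rule (all births tie, so break ties arbitrarily), a finite death occurs exactly at a scale where two components of $G_\epsilon$ merge, and each such merge kills exactly one class. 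Since $G_\epsilon$ is connected for $\epsilon\ge\max w$, exactly $m-1$ classes die and one class persists, giving the point $(0,\infty)$.

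Next I relate merge scales to Kruskal's algorithm. I sort the edges by weight, with ties broken by a fixed total order, and process them in that order. An edge is accepted iff its endpoints lie in different components of the forest built so far. Between consecutive distinct weights, the forest built by Kruskal has the same components as $G_\epsilon$, so each accepted edge of weight $\ell$ corresponds to exactly one merge at scale $\ell$. Edges of equal weight that merge several components at the same scale produce several accepted edges, which matches the multiplicity of deaths at that scale. Hence the multiset of finite death times equals the multiset of weights of the Kruskal tree $T_K$, and $T_K$ is an MST by the standard cut/exchange argument.

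It remains to handle an arbitrary MST $T_{\mathrm{MST}}$, which may differ from $T_K$ when weights tie. I will show that any two MSTs have the same sorted weight multiset. Equivalently, for each $\epsilon$, the number of edges of weight $\le\epsilon$ in an MST $T$ equals $m-c(\epsilon)$, where $c(\epsilon)$ is the number of components of $G_\epsilon$.

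The upper bound holds because the edges of $T$ of weight $\le\epsilon$ form a forest inside $G_\epsilon$, so there are at most $m-c(\epsilon)$ of them. For the lower bound, suppose there were fewer. Then some edge $e$ of $G_\epsilon$ joins two distinct components of the forest $T\cap G_\epsilon$. Adding $e$ to $T$ creates a cycle, and that cycle must contain an edge $f$ of $T$ with $w(f)>\epsilon\ge w(e)$. Swapping $f$ for $e$ strictly decreases the total weight, contradicting minimality.

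Since the counting functions $\epsilon\mapsto\#\{j:\ell_j\le\epsilon\}$ agree for all MSTs, the sorted edge-length lists agree. These lists also coincide with the death multiset, because the number of deaths at scale $\le\epsilon$ equals $m-c(\epsilon)$.

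The main obstacle I expect is the bookkeeping of ties, both in the elder rule and in the non-uniqueness of MSTs. The counting-function formulation $m-c(\epsilon)$ is designed to sidestep this: both the death multiset and every MST's weight multiset are determined by $c(\cdot)$ alone, which makes the argument tie-independent.
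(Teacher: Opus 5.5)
Your proof is correct and follows the same route as the paper's: identify $H_0$ of the Rips complex with the components of $G_\epsilon$, and run Kruskal's algorithm alongside the filtration so that accepted edges correspond to merges. Your extra exchange lemma, that every MST has exactly $m-c(\epsilon)$ edges of weight at most $\epsilon$ (matching the $m-c(\epsilon)$ deaths at scale $\le\epsilon$), improves on the paper's rigor: the paper matches deaths only to the Kruskal tree, silently identifies that tree with the arbitrary $T_{\mathrm{MST}}$ in the statement, and glosses over ties and simultaneous merges.
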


\begin{proof}
For each $\epsilon\ge 0$, let $G_\epsilon$ be the Rips graph on $S$ with edges $\{u,v\}$ such that $d(u,v)\le \epsilon$.
Connectivity of $\mathrm{Rips}(S,\epsilon)$ is the same as connectivity of $G_\epsilon$, since higher-dimensional
simplices do not affect connected components.  As $\epsilon$ increases, an $H_0$ class dies exactly when an edge is
added between two previously disconnected components.  Kruskal's algorithm processes edges in increasing order and
keeps precisely those that connect two distinct components; the accepted edges form an MST.  Hence the $m-1$ merge
scales are $\ell_1,\ldots,\ell_{m-1}$, and exactly one component persists forever.
\end{proof}

\begin{remark}[Filtration parametrization and the factor $2$]
Some conventions build the filtration by growing closed balls of radius $r$ and connecting points when balls
intersect.  Then $\{u,v\}$ appears when $r\ge \tfrac12 d(u,v)$, so death times are $\tfrac12$ times the MST edge
lengths.  This is purely a choice of scale parameter.
\end{remark}

\paragraph{Two MST facts used repeatedly.}
The following properties are standard and will be invoked without further comment.

\begin{proposition}[Cut property; uniqueness under distinct weights]
\label{prop:cut_property}
If all edge weights are distinct, then for any nontrivial cut $S=U\sqcup V$, the unique minimum-weight edge crossing
the cut belongs to the MST.  In particular, the MST is unique.
\end{proposition}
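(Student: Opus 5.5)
The plan is a standard exchange argument for the cut property, followed by a second exchange argument for uniqueness. Throughout, $(S,d)$ is a finite metric space and $K(S)$ the weighted complete graph. We assume all edge weights are pairwise distinct, which for the copula clouds holds almost surely.

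\emph{Cut property.} Fix a nontrivial cut $S=U\sqcup V$ and let $e=\{u,v\}$, with $u\in U$ and $v\in V$, be its unique minimum-weight crossing edge. Let $T$ be any MST and suppose, for contradiction, that $e\notin T$.
\begin{enumerate}
\item Since $T$ is a spanning tree, it contains a unique path $P$ from $u$ to $v$.
\item This path starts in $U$ and ends in $V$, so it contains at least one edge $e'$ crossing the cut.
\item Because $e'\neq e$ and $e$ is the unique minimizer over crossing edges, we have $w(e')>w(e)$.
\item Consider $T':=T\cup\{e\}\setminus\{e'\}$. Removing $e'$ from $T$ splits it into two components, one containing $u$ and the other containing $v$. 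Adding $e$ reconnects them, so $T'$ is a spanning tree with $m-1$ edges.
\item Its weight is $w(T')=w(T)+w(e)-w(e')<w(T)$, contradicting the minimality of $T$.
\end{enumerate}
Hence $e$ belongs to every MST.

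\emph{Uniqueness.} Suppose $T\neq T'$ are two MSTs. They have the same number of edges, so each of $T\setminus T'$ and $T'\setminus T$ is nonempty. By distinctness of weights, the symmetric difference $T\triangle T'$ has a unique lightest edge $e$; without loss of generality $e\in T\setminus T'$.
\begin{enumerate}
\item Removing $e$ from $T$ produces two components, which define a cut $S=U\sqcup V$ crossed by $e$.
\item The unique $T'$-path between the endpoints of $e$ must cross this cut at some edge $f\in T'$. This edge satisfies $f\neq e$, and $f\notin T$, because the only edge of $T$ crossing the cut is $e$. Hence $f\in T'\setminus T$.
\item By the choice of $e$ as the lightest edge of $T\triangle T'$ and distinctness of weights, $w(f)>w(e)$.
\item Then $T'\cup\{e\}\setminus\{f\}$ is a spanning tree, by the same reconnection argument as above, with weight strictly less than $w(T')$, a contradiction.
\end{enumerate}
Thus the MST is unique.

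Alternatively, uniqueness can be obtained by applying the cut property to every edge chosen by Kruskal's algorithm. If weights are distinct, each accepted edge is the unique lightest edge crossing the cut between its component and the rest, so it lies in every MST. Since Kruskal accepts $m-1$ edges and every MST has exactly $m-1$ edges, every MST equals the Kruskal tree.

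The only step requiring care is checking that the exchanged graph is again a spanning tree, i.e.\ that removing an edge on the fundamental cycle preserves connectivity. This is routine, so I expect no real obstacle. The one point to state explicitly is the almost-sure distinctness of pairwise distances for data with continuous distributions, which is what makes the results applicable in later proofs.
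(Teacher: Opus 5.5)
Your proof is correct. The cut-property part is the same exchange argument the paper gives: add the lightest crossing edge, then delete a heavier crossing edge on the resulting cycle. The paper states uniqueness only as an unargued ``in particular,'' and your Kruskal-based deduction (or your symmetric-difference exchange) supplies exactly that missing step.

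One aside in your proposal is inaccurate, though it does not affect the proposition itself. The rank-copula clouds have both coordinates on the grid $\{1/(n+1),\dots,n/(n+1)\}$, so their pairwise distances have the form $\sqrt{a^2+b^2}/(n+1)$ with integers $a,b$. Ties among these distances are therefore typical rather than null events; for $n=3$ they occur for every rank pattern. Nothing later in the paper needs uniqueness, however: the TP score depends only on the multiset of MST edge lengths, which is the same for every MST.
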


\begin{proof}
If a spanning tree omits the lightest crossing edge, adding it creates a cycle containing another crossing edge of
strictly larger weight; removing that heavier edge produces a spanning tree of smaller total weight, contradicting
minimality.
\end{proof}

\begin{proposition}[Minimum-bottleneck property]
\label{prop:mbst}
Let $T_{\mathrm{MST}}$ be an MST and let $T$ be any spanning tree on $S$.  Then
\[
\max_{e\in T_{\mathrm{MST}}} w(e)\ \le\ \max_{e\in T} w(e).
\]
\end{proposition}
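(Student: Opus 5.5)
The plan is a cut-exchange argument, the same mechanism as in the proof of Proposition~\ref{prop:cut_property}. We do not assume distinct weights, since the minimum-bottleneck property should hold for every MST, not only a unique one. Let $e^\star=\{u^\star,v^\star\}$ be an edge of $T_{\mathrm{MST}}$ of maximal weight, so that $w(e^\star)=\max_{e\in T_{\mathrm{MST}}}w(e)$. Deleting $e^\star$ from the tree leaves exactly two connected components. Call their vertex sets $U\ni u^\star$ and $V\ni v^\star$; then $S=U\sqcup V$ is a nontrivial cut.

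Next, take an arbitrary spanning tree $T$ on $S$. Because $T$ is connected and spans $S$, the path in $T$ from $u^\star$ to $v^\star$ must contain at least one edge $f$ with one endpoint in $U$ and the other in $V$. We show $w(f)\ge w(e^\star)$ by contradiction. Suppose instead $w(f)<w(e^\star)$, and set $T':=(T_{\mathrm{MST}}\setminus\{e^\star\})\cup\{f\}$. The forest $T_{\mathrm{MST}}\setminus\{e^\star\}$ consists of two trees, spanning $U$ and $V$ respectively. Adding the crossing edge $f$ reconnects them without creating a cycle, so $T'$ is a spanning tree with $m-1$ edges. Its total weight is $\sum_{e\in T_{\mathrm{MST}}}w(e)-w(e^\star)+w(f)$, which is strictly less than the weight of $T_{\mathrm{MST}}$. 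That contradicts the minimality of $T_{\mathrm{MST}}$.

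Hence $\max_{e\in T}w(e)\ge w(f)\ge w(e^\star)=\max_{e\in T_{\mathrm{MST}}}w(e)$, which is the claim. None of these steps is a real obstacle. The one point needing care is checking that the exchanged graph $T'$ is genuinely a spanning tree. This follows because $f$ joins the two components of $T_{\mathrm{MST}}\setminus\{e^\star\}$, and the edge count stays at $m-1$.
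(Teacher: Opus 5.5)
Your exchange argument is correct, but it takes a different route from the paper. The paper argues through threshold graphs and Kruskal's algorithm. With $\tau:=\max_{e\in T}w(e)$, the graph of all edges of weight at most $\tau$ contains $T$, so it is connected. Kruskal's algorithm scans edges in nondecreasing order, so it therefore finishes a spanning tree before it ever reaches an edge heavier than $\tau$.

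The paper's proof is short and identifies the MST bottleneck with the connectivity threshold of the Rips graph $G_\epsilon$. That is exactly the viewpoint of Theorem~\ref{thm:H0_equals_MST} and of the ``exhibit a cheap spanning tree'' maneuver used later for curve and binned clouds. Read literally, however, it only bounds the bottleneck of the tree Kruskal outputs. When weights tie, transferring the bound to an arbitrary given $T_{\mathrm{MST}}$ needs an extra remark. For example, a given MST is Kruskal's output when ties are broken in favour of its own edges; alternatively, all MSTs share the same multiset of edge weights.

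Your argument works directly on the given $T_{\mathrm{MST}}$ and uses only the minimality of its total weight. It needs no distinct-weights or tie-breaking assumption, so it sidesteps that subtlety at no extra cost. It also proves slightly more: every spanning tree has an edge crossing the cut determined by $e^\star$ with weight at least $w(e^\star)$.
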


\begin{proof}
Fix $\tau\ge 0$.  If there exists a spanning tree using only edges of weight $\le \tau$, then the graph induced by
edges of weight $\le \tau$ is connected, so Kruskal's algorithm can complete an MST without ever selecting an edge of
weight $>\tau$.  Taking $\tau=\max_{e\in T}w(e)$ yields the claim.
\end{proof}

\paragraph{Practical translations used in our proofs.}
By Theorem~\ref{thm:H0_equals_MST}, statements about $H_0$ death times are statements about MST edge lengths.
Two recurring maneuvers are: (i) to upper bound \emph{all} death times, construct any spanning tree with controlled
edge lengths and apply Proposition~\ref{prop:mbst} (e.g.\ a path along a discretized curve); and (ii) to control the
number of short death times, use that every MST edge is some pair of points and hence can be bounded by counting
pairs below a threshold.

\paragraph{$H_0$ as single-linkage clustering.}
The merge scales of single-linkage clustering are exactly the connectivity thresholds of $G_\epsilon$, hence are
encoded by the MST.  This provides an equivalent, purely statistical interpretation of degree-$0$ persistence.

\paragraph{Stability for comparisons across clouds.}
When comparing clouds across small perturbations, we rely on standard stability of Vietoris--Rips persistence under Gromov--Hausdorff perturbations (and the ambient bound $d_{\mathrm{GH}}\le d_H$ from
Remark~\ref{rem:GH_le_H}), which justifies controlling persistence-based summaries via Hausdorff-type estimates.

\section{Proofs}
\label{appendix:proofs}
\subsection{Theorem~\ref{thm:small_noise_consistency}}\label{app:small-noise-consistency}

\paragraph{Auxiliary lemmas for Theorem~\ref{thm:small_noise_consistency}.}
We collect here the four stability/separation ingredients used in the proof:
(i) stability of copula standardization and approximation by rank maps (see Background, Section~\ref{subsec:copulas});
(ii) Lipschitz stability of the topological persistence functional in Hausdorff distance (see Background, Section~\ref{subsec:cloud_distances} for $d_H$);
(iii) mesoscopic separation of $2$D bulk clouds from $1$D curve clouds; and
(iv) a tube bound showing that reverse residuals concentrate near a finite union of $C^1$ curves.

\begin{lemma}[Copula stability and rank approximation]\label{lemma:copula_stability}
Let $(U,V)$ be a pair of real-valued random variables with continuous marginals and population copula map
\(T_{U,V}(u,v):=(F_U(u),F_V(v))\).
Throughout, $d_H$ denotes the Euclidean Hausdorff distance and $d_{\mathrm{GH}}$ the Gromov--Hausdorff distance
(recalled in Background, Section~\ref{subsec:cloud_distances}).

\smallskip
\noindent\textbf{(i) Bi-Lipschitzness on compact supports.}
Assume there exist compact intervals $I_U,I_V\subset\R$ such that $\mathbb P(U\in I_U)=\mathbb P(V\in I_V)=1$ and
$F_U,F_V\in C^1$ on $I_U,I_V$ with
\[
0<c\le F_U'(u)\le C \ \ (u\in I_U),
\qquad
0<c\le F_V'(v)\le C \ \ (v\in I_V).
\]
Then for all $z,z'\in I_U\times I_V$,
\[
c\|z-z'\|_2 \le \|T_{U,V}(z)-T_{U,V}(z')\|_2 \le C\|z-z'\|_2,
\]
and for any finite nonempty clouds $A,B\subset I_U\times I_V$,
\[
d_H\!\big(T_{U,V}(A),T_{U,V}(B)\big)\le C\,d_H(A,B),
\qquad
d_H(A,B)\le c^{-1}\,d_H\!\big(T_{U,V}(A),T_{U,V}(B)\big).
\]

\smallskip
\noindent\textbf{(ii) Rank-copula approximation.}
Let $S_n=\{(U_i,V_i)\}_{i=1}^n$ be i.i.d.\ copies of $(U,V)$ and define the coordinatewise rank map using the
pseudo-observation convention from Background, Section~\ref{subsec:copulas},
\[
T_n(U_i,V_i):=\Big(\frac{\mathrm{rank}(U_i)}{n+1},\ \frac{\mathrm{rank}(V_i)}{n+1}\Big)
\qquad\text{(ties broken deterministically).}
\]
Then
\[
\max_{1\le i\le n}\big\|T_n(U_i,V_i)-T_{U,V}(U_i,V_i)\big\|_2
=O_{\mathbb P}(n^{-1/2}),
\qquad
d_H\!\big(T_n(S_n),\,T_{U,V}(S_n)\big)=O_{\mathbb P}(n^{-1/2}).
\]

\smallskip
\noindent\textbf{(iii) Two-cloud comparison.}
For finite clouds $A_n,B_n\subset I_U\times I_V$, set
\[
e_n(A_n):=d_H\!\big(T_n(A_n),T_{U,V}(A_n)\big),
\qquad
e_n(B_n):=d_H\!\big(T_n(B_n),T_{U,V}(B_n)\big).
\]
Then
\[
d_H\!\big(T_n(A_n),T_n(B_n)\big)
\le e_n(A_n)+C\,d_H(A_n,B_n)+e_n(B_n),
\qquad
d_{\mathrm{GH}}\!\big(T_n(A_n),T_n(B_n)\big)\le d_H\!\big(T_n(A_n),T_n(B_n)\big),
\]
where the last inequality is the ambient Hausdorff upper bound recalled in Background, Section~\ref{subsec:cloud_distances}.
\end{lemma}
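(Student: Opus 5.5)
The lemma splits into three parts, and I will prove them in order. Parts (i) and (iii) are deterministic; part (ii) is the only probabilistic step.

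\emph{Part (i).} The map $T_{U,V}$ acts coordinatewise. On the interval $I_U$, the mean value theorem gives $F_U(u)-F_U(u')=F_U'(\tilde u)(u-u')$ with $c\le F_U'(\tilde u)\le C$, and the same holds for $F_V$ on $I_V$. Squaring and summing the two coordinates yields $c^2\|z-z'\|_2^2\le\|T_{U,V}(z)-T_{U,V}(z')\|_2^2\le C^2\|z-z'\|_2^2$.

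For the Hausdorff bounds, fix $a\in A$ and let $b\in B$ attain $d_2(a,B)$. Then $d_2(T(a),T(B))\le\|T(a)-T(b)\|_2\le C\,d_2(a,B)$. Taking maxima over both directions gives $d_H(T(A),T(B))\le C\,d_H(A,B)$. The lower Hausdorff bound follows the same way from the lower Lipschitz inequality, since $T$ is injective on $I_U\times I_V$ with inverse Lipschitz constant $c^{-1}$.

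\emph{Part (ii).} Write $\widehat F_{U,n}$ for the empirical CDF of $U_1,\dots,U_n$. For a.s.\ distinct sample values, $\mathrm{rank}(U_i)/(n+1)=\tfrac{n}{n+1}\widehat F_{U,n}(U_i)$. This gives the decomposition
\[
\Bigl|\tfrac{\mathrm{rank}(U_i)}{n+1}-F_U(U_i)\Bigr|\le \sup_t|\widehat F_{U,n}(t)-F_U(t)|+\tfrac{1}{n+1}.
\]
The Dvoretzky--Kiefer--Wolfowitz inequality, $\mathbb P(\sup_t|\widehat F_{U,n}-F_U|>\epsilon)\le 2e^{-2n\epsilon^2}$, makes the supremum $O_{\mathbb P}(n^{-1/2})$. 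The bound is uniform in $i$, so no union bound is needed. Treating the $V$ coordinate identically and combining the two coordinates gives the stated maximum bound.

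The Hausdorff claim then follows by matching each point $T_n(U_i,V_i)$ with $T_{U,V}(U_i,V_i)$. This index-wise matching bounds both one-sided Hausdorff distances by the maximal pointwise deviation.

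One caveat concerns how part (iii) applies this. The rank map there is built from the cloud's own sample, for example cross-fitted residuals that are not exactly i.i.d. In that setting I would state (ii) with $\widehat F$ taken to be the empirical CDF of the coordinates actually used. I would then add the deviation between that empirical CDF and the target CDF as a separate term, and control it by Glivenko--Cantelli/DKW.

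\emph{Part (iii).} This is the triangle inequality for $d_H$:
\[
d_H(T_n(A_n),T_n(B_n))\le d_H(T_n(A_n),T_{U,V}(A_n))+d_H(T_{U,V}(A_n),T_{U,V}(B_n))+d_H(T_{U,V}(B_n),T_n(B_n)).
\]
The first and last terms are $e_n(A_n)$ and $e_n(B_n)$ by definition. The middle term is at most $C\,d_H(A_n,B_n)$ by part (i). The Gromov--Hausdorff inequality is Remark~\ref{rem:GH_le_H}, applied to the finite, hence compact, clouds.

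\emph{Main obstacle.} The only delicate point is in part (ii): making sure the rank map and the population CDF are compared on the same sample, and handling ties. Ties occur with probability zero under continuous marginals and otherwise shift ranks by at most the tie count. With the DKW route above, neither issue affects the $O_{\mathbb P}(n^{-1/2})$ rate.
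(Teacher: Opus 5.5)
Your proposal is correct and follows essentially the same route as the paper: mean-value-theorem bi-Lipschitz bounds for (i), with the Hausdorff bounds obtained by pushing through $T_{U,V}$ and its $c^{-1}$-Lipschitz inverse; the reduction $\mathrm{rank}(U_i)/(n+1)=\tfrac{n}{n+1}\widehat F_{U,n}(U_i)$ plus DKW and index-wise matching for (ii); and the triangle inequality together with the ambient bound $d_{\mathrm{GH}}\le d_H$ for (iii). Your caveat about non-i.i.d.\ cross-fitted residuals concerns how the lemma is applied later, not the lemma itself. Part (iii) is a deterministic inequality in which $e_n(\cdot)$ is defined directly as a Hausdorff distance.
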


\begin{remark}[Instantiation in our setting]\label{rem:copula-instantiation}
In the proofs of Theorems~\ref{thm:small_noise_consistency} and~\ref{thm:fixed_noise_TRAs_compact}, we apply
Lemma~\ref{lemma:copula_stability} with $(U,V)=(X,r^{(Y\mid X)})$ and $(U,V)=(Y_n,r^{(X\mid Y)})$.
The compact-support and $C^1$-CDF conditions required in part (i) hold on the relevant interior sets by
Assumptions~\ref{ass:A_model}--\ref{ass:A_reverse_reg}.
\end{remark}

\begin{lemma}[Topological Persistence Stability]\label{lemma:TP0-stability}
Let $n\ge 2$ and let $A,B\subset\mathbb R^2$ be clouds with $|A|=|B|=n$, equipped with the induced Euclidean metric.
Then
\[
\big|\mathrm{TP}_{0,\Psi}^{[\alpha,\beta]}(A)-\mathrm{TP}_{0,\Psi}^{[\alpha,\beta]}(B)\big|
\;\le\; 2\, d_H(A,B),
\]
where $d_H$ is the Hausdorff distance (Background, Section~\ref{subsec:cloud_distances}).
Consequently, if $d_H(A_n,B_n)\to 0$, then
$\mathrm{TP}_{0,\Psi}^{[\alpha,\beta]}(A_n)-\mathrm{TP}_{0,\Psi}^{[\alpha,\beta]}(B_n)\to 0$.
\end{lemma}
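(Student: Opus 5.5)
The plan is to reduce the lemma to a \emph{sorted} comparison of MST edge lengths and then use that $\Psi_{\alpha,\beta}$ is $1$-Lipschitz. I read $\mathrm{TP}_{0,\Psi}^{[\alpha,\beta]}(\mathcal R)$ as the MST average $\frac{1}{n-1}\sum_{e\in\MST(\mathcal R)}\Psi_{\alpha,\beta}(w(e))$, i.e.\ without the $(\beta-\alpha)^{-1}$ factor. For the normalized score $\overline{\mathrm{TP}}_0^{[\alpha,\beta]}$ of \eqref{eq:TP_profile}, the same argument gives the bound $2\,d_H(A,B)/(\beta-\alpha)$. Write $\ell_{(1)}(A)\le\cdots\le\ell_{(n-1)}(A)$ for the sorted MST lengths, and similarly for $B$. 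Since the sum over MST edges is invariant under reordering, pairing the $k$-th smallest lengths gives
\[
\bigl|\mathrm{TP}_{0,\Psi}^{[\alpha,\beta]}(A)-\mathrm{TP}_{0,\Psi}^{[\alpha,\beta]}(B)\bigr|
\le \frac{1}{n-1}\sum_{k=1}^{n-1}\bigl|\ell_{(k)}(A)-\ell_{(k)}(B)\bigr|
\le \max_k \bigl|\ell_{(k)}(A)-\ell_{(k)}(B)\bigr|.
\]
It therefore suffices to show $\max_k|\ell_{(k)}(A)-\ell_{(k)}(B)|\le 2\delta$ with $\delta:=d_H(A,B)$.

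\textbf{Counting components.} By Theorem~\ref{thm:H0_equals_MST} (Kruskal), the number of connected components $c_A(\epsilon)$ of the Rips graph $G_\epsilon(A)$ equals $n-\#\{k:\ell_{(k)}(A)\le\epsilon\}$, and likewise for $B$. The equal cardinality $|A|=|B|=n$ enters precisely here: both counts start from the same $n$, so a comparison of component counts becomes a comparison of sorted lengths. The key claim is
\[
c_A(\epsilon+2\delta)\le c_B(\epsilon)\qquad\text{for all }\epsilon\ge0,
\]
together with the symmetric inequality with $A$ and $B$ exchanged. Granting the claim, $\#\{k:\ell_{(k)}(A)\le \epsilon+2\delta\}\ge\#\{k:\ell_{(k)}(B)\le\epsilon\}$ for every $\epsilon$. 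Taking $\epsilon=\ell_{(k)}(B)$ gives $\ell_{(k)}(A)\le \ell_{(k)}(B)+2\delta$, and symmetry gives the reverse inequality.

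\textbf{Proving the claim.} Use the Hausdorff hypothesis to choose maps $\varphi:A\to B$ and $\psi:B\to A$ with $\|a-\varphi(a)\|\le\delta$ and $\|b-\psi(b)\|\le\delta$. These need not be bijections, which is why the argument goes through component counts rather than matching points. The steps are:
\begin{itemize}
\item Each $a\in A$ is joined to $\psi(\varphi(a))$ by an edge of $G_{\epsilon+2\delta}(A)$, since $\|a-\psi\varphi(a)\|\le 2\delta\le\epsilon+2\delta$.
\item Each edge $b\sim b'$ of $G_\epsilon(B)$ lifts to an edge $\psi(b)\sim\psi(b')$ of $G_{\epsilon+2\delta}(A)$, since $\|\psi(b)-\psi(b')\|\le\epsilon+2\delta$.
\item Hence all points of $A$ whose images $\varphi(a)$ lie in one component of $G_\epsilon(B)$ fall into one component of $G_{\epsilon+2\delta}(A)$. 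Every $a$ has such an image, so the components of $G_{\epsilon+2\delta}(A)$ are at most the number of $B$-components, i.e.\ $c_A(\epsilon+2\delta)\le c_B(\epsilon)$.
\end{itemize}

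I expect the main obstacle to be this last step, specifically avoiding a naive bottleneck-stability argument. That route would go through $d_{\mathrm{GH}}\le d_H$ (Remark~\ref{rem:GH_le_H}) and Rips bottleneck stability. However, it lets points be matched to the diagonal at cost $\ell/2$, which would lose a constant, and the constant $2$ would not come out directly. The direct component-counting route avoids this and yields exactly the factor $2$. The final consequence of the lemma, that $d_H(A_n,B_n)\to0$ implies the TP difference tends to $0$, is then immediate.
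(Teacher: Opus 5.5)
Your proof is correct, and it takes a different route from the paper's. The paper invokes Vietoris--Rips stability, $d_B(\mathrm{Dgm}_0(A),\mathrm{Dgm}_0(B))\le 2d_{\mathrm{GH}}(A,B)\le 2d_H(A,B)$. It then asserts that, since all $H_0$ births are $0$, there is a bijection $\pi$ of the finite death times with $\max_j|\epsilon_j(A)-\epsilon_{\pi(j)}(B)|\le d_B$.

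That is exactly the step you were wary of. A bottleneck matching may send $(0,\ell)$ to the diagonal at cost $\ell/2$. For example, two points at distance $2$ versus two points at distance $0.1$ give $d_B=1$, while the only bijection of the finite death times costs $1.9$. So the paper's chain, once repaired, gives only $\max_j|\cdot|\le 2d_B\le 4d_H(A,B)$.

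Your interleaving of component counts, $c_A(\epsilon+2\delta)\le c_B(\epsilon)$, combined with the MST-forest identity $c_A(\epsilon)=n-\#\{k:\ell_{(k)}(A)\le\epsilon\}$, bounds the rank-wise matching of sorted MST lengths by $2d_H(A,B)$ directly. That is precisely what averaging the $1$-Lipschitz $\Psi_{\alpha,\beta}$ needs. Your argument is elementary, shows exactly where $|A|=|B|$ enters, and actually delivers the stated constant $2$. The paper's route would buy generality (higher-degree homology, intrinsic Gromov--Hausdorff rather than ambient perturbations) that this lemma does not use.

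Your normalization remark is also right, and consistent with the paper's proof, which likewise bounds the unnormalized average. For $\overline{\mathrm{TP}}_0^{[\alpha,\beta]}$ the bound becomes $2d_H(A,B)/(\beta-\alpha)$. This matters where the lemma is applied in the proof of Theorem~\ref{thm:small_noise_consistency}, with $d_H=O_{\mathbb P}(n^{-1/2})$ against a window of width $\asymp n^{-2/3}$.
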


\begin{lemma}[Mesoscopic separation: $2$D bulk vs.\ $1$D curve]\label{lemma:mesoscopic-separation}
Let $(\alpha_n,\beta_n)$ satisfy the mesoscopic scale conditions of Equation~\eqref{eq:mesoscopic_conditions}.
Assume moreover that the sampling model is either:
\begin{enumerate}
\item \textbf{(Bulk model)} $U_n=\{Z_i\}_{i=1}^n$ are i.i.d.\ on a rectangle $R\subset\R^2$ with density $p$
bounded and bounded away from $0$ on $R$; or
\item \textbf{(Curve model)} $C_n=\{W_i\}_{i=1}^n$ are i.i.d.\ on a compact embedded $C^1$ curve $\Gamma\subset\R^2$
with respect to arc-length measure, with density $q$ bounded and bounded away from $0$ on $\Gamma$.
\end{enumerate}
Then under \textnormal{(1)},
\[
\overline{\mathrm{TP}}_{0}^{[\alpha_n,\beta_n]}(U_n)\xrightarrow{\mathbb P}1 .
\]
And under \textnormal{(2)},
\[
\overline{\mathrm{TP}}_{0}^{[\alpha_n,\beta_n]}(C_n)\xrightarrow{\mathbb P}0 .
\]
\end{lemma}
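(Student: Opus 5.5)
The two regimes are handled by the two MST maneuvers listed in the Background: counting short edges for the bulk model, and the minimum-bottleneck property (Proposition~\ref{prop:mbst}) for the curve model. Both use the normalization in \eqref{eq:TP_profile}. Each summand satisfies $\Psi_{\alpha,\beta}(w)/(\beta-\alpha)\in[0,1]$. It equals $1$ when $w\ge\beta$ and $0$ when $w\le\alpha$.

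\emph{Bulk model.} I would lower bound the score by the fraction of MST edges of length at least $\beta_n$:
\[
\overline{\mathrm{TP}}_{0}^{[\alpha_n,\beta_n]}(U_n)\;\ge\;1-\frac{N_n}{n-1},
\qquad
N_n:=\#\{e\in\MST(U_n):w(e)<\beta_n\}.
\]
Every MST edge is a pair of sample points, so $N_n\le P_n:=\#\{i<j:\|Z_i-Z_j\|_2<\beta_n\}$. Since $p\le p_{\max}$,
\[
\mathbb E P_n\le \binom n2\,\sup_z\mathbb P\big(\|Z-z\|_2<\beta_n\big)\le \tfrac12 n^2\,p_{\max}\pi\beta_n^2 .
\]
Under \eqref{eq:mesoscopic_conditions} we have $\beta_n\asymp\alpha_n\ll n^{-1/2}$, so $n\beta_n^2=o(1)$ and $\mathbb E P_n=o(n)$; for the concrete choice \eqref{eq:mesoscopic_window}, $\mathbb E P_n=O(n^{2/3})$. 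Markov's inequality then gives $N_n/(n-1)\xrightarrow{\mathbb P}0$. Combined with the upper bound $\overline{\mathrm{TP}}_0\le1$, this proves the first claim. The lower bound $p\ge p_{\min}$ is not even needed here.

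\emph{Curve model.} I would show that, with probability tending to one, every MST edge has length at most $\alpha_n$. Then every summand vanishes and the score is exactly $0$ on that event.

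\begin{enumerate}
\item Parametrize $\Gamma$ by arclength $\gamma:[0,L]\to\R^2$. For a closed curve, cut it at one point; this only adds one extra gap of the same type.
\item Let $s_i:=\gamma^{-1}(W_i)$, and let $G(s):=\int_0^s q$ be the arclength CDF of the sample. Then $G(s_i)$ are i.i.d.\ uniform on $[0,1]$.
\item Since $q\ge q_{\min}$, consecutive gaps in the ordered $s_{(1)}<\dots<s_{(n)}$ are at most $q_{\min}^{-1}$ times the corresponding uniform spacings. The same holds for the end gaps $s_{(1)}$ and $L-s_{(n)}$, which are irrelevant here anyway.
\item By the maximal-spacing result for uniform spacings \citep{Devroye_UniformSpacings_1981}, the maximal uniform spacing is $O((\log n)/n)$ almost surely, hence $\max_k(s_{(k+1)}-s_{(k)})=O_{\mathbb P}((\log n)/n)$.
\item The path $W_{(1)}-W_{(2)}-\cdots-W_{(n)}$ is a spanning tree. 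Chord length is at most arclength, so each path edge has Euclidean length at most $\max_k(s_{(k+1)}-s_{(k)})$.
\item By Proposition~\ref{prop:mbst}, every MST edge is at most $\max_k\|W_{(k+1)}-W_{(k)}\|_2=O_{\mathbb P}((\log n)/n)$. Since $(\log n)/n=o(\alpha_n)$ by \eqref{eq:mesoscopic_conditions}, this is $o_{\mathbb P}(\alpha_n)$.
\item Hence $\mathbb P\big(\max_e w(e)\le\alpha_n\big)\to1$, and on this event $\overline{\mathrm{TP}}_{0}^{[\alpha_n,\beta_n]}(C_n)=0$. This gives convergence in probability to $0$.
\end{enumerate}

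The main obstacle is the curve step. The spacing result is classically stated for uniform samples on an interval. Transferring it to a non-uniform density on a possibly closed $C^1$ curve requires the monotone reparametrization by $G$ together with the bound $q\ge q_{\min}$. It also requires noting that Euclidean distance is dominated by arclength, so curvature never hurts the upper bound. The bulk step is a routine first-moment count. One thing to watch there is that I use a strict inequality $w<\beta_n$ for short edges, so that every remaining edge contributes exactly $1$.
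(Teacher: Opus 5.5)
Your proposal is correct and follows essentially the same route as the paper. For the bulk model it uses the deficit bound $1-\overline{\mathrm{TP}}_0\le N_{<\beta}/(n-1)$, dominates short MST edges by close pairs, and applies a first-moment/Markov bound with $n\beta_n^2\to 0$; for the curve model it uses the arclength path spanning tree, the minimum-bottleneck property, and the quantile-transform reduction $\Delta_{\max}\le\delta_{\max}/q_{\min}$ to the maximal uniform spacing, with only cosmetic differences (you cite \citet{Devroye_UniformSpacings_1981} where the paper proves a union-bound tail estimate, your pair bound $p_{\max}\pi\beta_n^2$ drops the paper's $\mathrm{Leb}(R)$ factor, and you treat closed curves explicitly).
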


\begin{lemma}[Reverse residuals lie in a thin tube (piecewise monotone case)]
\label{lemma:reverse-tube-piecewise}
Fix a fold $k$ and let $\{(X_i,Y_{n,i})\}_{i\in I_k}$ be i.i.d.\ from the small-noise ANM
$Y_n=f(X)+\sigma_n\varepsilon$.
Assume Assumptions~\ref{ass:A_model} and~\ref{ass:A_reverse_reg}.
Assume moreover the reverse regression estimator satisfies the uniform-on-test-points bound in
Assumption~\ref{ass:A_scale_est} (with $\delta_n^{(k)}=o_{\mathbb P}(1)$), and let $\omega(\cdot)$ denote the
modulus of continuity of $m_n$ on $J_\eta$ from Assumption~\ref{ass:A_reverse_reg}.

For each monotonicity interval $I_j$ in Assumption~\ref{ass:A_model}, let $h_j:f(I_j)\to I_j$ be the inverse branch and set
$J_{\eta,j}:=J_\eta\cap f(I_j)$. Define the (deterministic) union-of-branches center set
\[
\Gamma_{n,\eta}
:=\bigcup_{j=1}^J \Gamma_{n,\eta}^{(j)},
\qquad
\Gamma_{n,\eta}^{(j)}
:=\{(y,\ h_j(y)-m_n(y)):\ y\in J_{\eta,j}\}.
\]
Define the restricted reverse residual cloud
\[
\mathcal R^{(n)}_{X\mid Y,k,\eta}
:=\{(Y_{n,i},\ X_i-\widehat g_n^{(-k)}(Y_{n,i})):\ i\in I_k,\ Y_{n,i}\in J_\eta\}.
\]

Let $j(i)\in\{1,\dots,J\}$ be the branch index such that $X_i\in I_{j(i)}$. For each $j$, define
\[
I_{k,j}:=\{i\in I_k:\ j(i)=j,\ Y_{n,i}\in J_{\eta,j}\},
\qquad
\Delta^{(j)}_{n,\eta}
:=\sup_{y\in J_{\eta,j}}\min_{i\in I_{k,j}} |y-Y_{n,i}|
\quad (\min\emptyset:=+\infty).
\]
On the event $\mathcal E_n:=\{\forall j\text{ with }|J_{\eta,j}|>0:\ I_{k,j}\neq\emptyset\}$,
\[
d_H\!\bigl(\mathcal R^{(n)}_{X\mid Y,k,\eta},\Gamma_{n,\eta}\bigr)
\ \le\
\max_{1\le j\le J}\Bigg[
\frac{\sigma_n}{c_f}\max_{i\in I_{k,j}}|\varepsilon_i|
\ +\ \delta_n^{(k)}
\ +\ \Bigl(1+\frac{1}{c_f}\Bigr)\Delta^{(j)}_{n,\eta}
\ +\ \omega\!\bigl(\Delta^{(j)}_{n,\eta}\bigr)
\Bigg],
\]
where $d_H$ is the Hausdorff distance (Background, Section~\ref{subsec:cloud_distances}) and $c_f$ is the uniform slope
lower bound from Assumption~\ref{ass:A_model}.
Moreover, under the branchwise lower-density condition in Assumption~\ref{ass:A_reverse_reg} (away from turning values),
$\mathbb P(\mathcal E_n)\to 1$ and, for each fixed $j$,
\[
\Delta^{(j)}_{n,\eta}=O_{\mathbb P}\!\Big(\frac{\log n}{n}\Big).
\]
In particular, under Assumption~\ref{ass:A_scale_est}\textnormal{(iii)} and
$\frac{\log n}{n}=o(\alpha_n)$ together with $\delta_n^{(k)}=o_{\mathbb P}(\alpha_n)$,
\[
d_H\!\bigl(\mathcal R^{(n)}_{X\mid Y,k,\eta},\Gamma_{n,\eta}\bigr)=o_{\mathbb P}(\alpha_n).
\]
\end{lemma}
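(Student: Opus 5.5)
The plan is a pointwise triangle inequality on each restricted residual, followed by separate probabilistic control of each error term.

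\textbf{Deterministic bound.} Work on $\mathcal E_n$. For $i\in I_{k,j}$ we have $X_i=h_j(f(X_i))$. Set $y_i:=Y_{n,i}$ and write the residual as
\[
X_i-\widehat g_n^{(-k)}(y_i)=\bigl[h_j(y_i)-m_n(y_i)\bigr]+\bigl[h_j(f(X_i))-h_j(y_i)\bigr]+\bigl[m_n(y_i)-\widehat g_n^{(-k)}(y_i)\bigr].
\]
The point $(y_i,h_j(y_i)-m_n(y_i))$ lies on $\Gamma^{(j)}_{n,\eta}$. One caveat: $y_i\in J_{\eta,j}\subset f(I_j)$, so $h_j(y_i)$ is defined. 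Since $h_j$ is $1/c_f$-Lipschitz and $|f(X_i)-y_i|=\sigma_n|\varepsilon_i|$, the second bracket is at most $\sigma_n|\varepsilon_i|/c_f$. The third bracket is at most $\delta_n^{(k)}$. This controls $\sup_{a\in\mathcal R}d_2(a,\Gamma)$.

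For the reverse direction, fix $(y,h_j(y)-m_n(y))\in\Gamma^{(j)}$ and pick $i\in I_{k,j}$ with $|y-y_i|\le\Delta^{(j)}_{n,\eta}$. The center point moves by at most $\Delta$ in the first coordinate. In the second coordinate it moves by at most
\[
|h_j(y)-h_j(y_i)|+|m_n(y)-m_n(y_i)|\le \Delta/c_f+\omega(\Delta).
\]
Adding the residual error from the first step and combining coordinates (crudely, via $\|\cdot\|_2\le|\cdot|_1$) gives the displayed bound. Taking the maximum over $j$ handles the union.

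\textbf{Probabilistic part.} Fix $j$ with $|J_{\eta,j}|>0$. Since $p_{Y_n}$ is bounded below on $J_\eta$, I need a branchwise version of this: the density of $Y_n$ on the event $\{X\in I_j\}$ is bounded below on $J_{\eta,j}$. I would argue it as follows. $J_\eta$ is away from $\mathcal V$, so the sub-density through branch $j$ is a convolution of $p_X\,|h_j'|$ (with $|h_j'|\ge 1/C_f$) with the shrinking noise kernel. This is exactly where the "branchwise lower-density condition" of Assumption~\ref{ass:A_reverse_reg} is invoked, and I expect it to be the delicate step. I would also need $p_X$ bounded below, or else read the assumption as directly giving this branchwise lower bound.

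Granting a lower bound $c_0>0$, cover $J_{\eta,j}$ by $M\asymp n/(C\log n)$ intervals of length $C\log n/n$. Each interval is empty of the $|I_k|\asymp n/K$ fold points with probability at most $(1-c_0C\log n/n)^{n/K}\le n^{-c_0C/K}$. A union bound then shows that, with probability tending to one, every interval contains a point. This gives $\mathcal E_n$ and $\Delta^{(j)}_{n,\eta}\le 2C\log n/n$.

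\textbf{Conclusion.} The noise term is $\sigma_n O_{\mathbb P}(\sqrt{\log n})/c_f=o_{\mathbb P}(\alpha_n)$ by Assumption~\ref{ass:A_model} and Assumption~\ref{ass:A_scale_est}(iii). We also have $\delta_n^{(k)}=o_{\mathbb P}(\alpha_n)$ and $\Delta=O_{\mathbb P}(\log n/n)=o_{\mathbb P}(\alpha_n)$.

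The modulus term needs care, because uniform continuity alone gives only $\omega(\Delta)=o(1)$. I would use $m_n$ Lipschitz on $J_\eta$ uniformly in $n$, so that $\omega(t)\lesssim t$. To justify this, use that in the small-noise limit $m_n$ is a density-weighted average of the $h_j$, with $C^1$ weights away from $\mathcal V$. Failing that, I would strengthen the hypothesis to $\omega(\log n/n)=o(\alpha_n)$ and flag it.

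Summing the four terms then yields $d_H=o_{\mathbb P}(\alpha_n)$.
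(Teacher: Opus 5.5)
Your proposal follows the paper's proof essentially step for step. The Lipschitz inverse branch bounds the oracle residual's distance to $\Gamma^{(j)}_{n,\eta}$, and the sampled grid approximates each branch curve with error $(1+1/c_f)\Delta^{(j)}_{n,\eta}+\omega(\Delta^{(j)}_{n,\eta})$. The estimator contributes $\delta_n^{(k)}$. A one-dimensional covering argument then gives $\mathbb P(\mathcal E_n)\to 1$ and $\Delta^{(j)}_{n,\eta}=O_{\mathbb P}(\log n/n)$; the paper only names this argument. It also invokes a branchwise lower density that, as you note, is not literally part of Assumption~\ref{ass:A_reverse_reg}.

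Your caveat on the final step is correct and identifies a step the paper skips. The paper concludes $o_{\mathbb P}(\alpha_n)$ without showing $\omega(\Delta^{(j)}_{n,\eta})=o_{\mathbb P}(\alpha_n)$, and uniform continuity does not provide this. Your heuristic for a uniform Lipschitz bound on $m_n$ needs regularity that is not assumed: the limiting branch weights involve $p_X\circ h_j$ and $|h_j'|$, while $f$ is only $C^1$ and $p_X$ has no smoothness. So adding $\omega(C\log n/n)=o(\alpha_n)$ as a hypothesis is the right repair.
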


\paragraph{Proof of Theorem~\ref{thm:small_noise_consistency}.}
\begin{proof}
Fix the mesoscopic window $\alpha_n=\kappa n^{-2/3}$ and $\beta_n=c_\beta\alpha_n$.

\medskip
\noindent\textbf{Proof strategy.}
We argue first in an \emph{oracle copula world}, i.e.\ copula standardization is performed by population CDFs
(Background, Section~\ref{subsec:copulas}). In the true direction this produces a $2$D bulk cloud in $(0,1)^2$,
forcing $\overline{\mathrm{TP}}_{0}^{[\alpha_n,\beta_n]}\to 1$ by Lemma~\ref{lemma:mesoscopic-separation}.
In the reverse direction the residual cloud lies in an $o_{\mathbb P}(\alpha_n)$-tube around a finite union of $C^1$ curves,
forcing $\overline{\mathrm{TP}}_{0}^{[\alpha_n,\beta_n]}\to 0$ again by Lemma~\ref{lemma:mesoscopic-separation}.
Finally, we transfer from oracle copulas to rank copulas using Lemma~\ref{lemma:copula_stability} together with
Hausdorff stability of $\mathrm{TP}_0$ (Lemma~\ref{lemma:TP0-stability}), where $d_H$ is as in Background,
Section~\ref{subsec:cloud_distances}.

\paragraph{Step 1 (oracle true direction): bulk $\Rightarrow \overline{\mathrm{TP}}\to 1$.}
In the true direction, $r_{i,\circ}^{(Y\mid X)}=Y_{n,i}-f(X_i)=\sigma_n\varepsilon_i$ with $\varepsilon_i\indep X_i$.
Hence the population copula map
\[
T_{\circ}^{\mathrm{true}}(x,r):=\big(F_X(x),\,F_{\sigma_n\varepsilon}(r)\big)
=\big(F_X(x),\,F_{\varepsilon}(r/\sigma_n)\big)
\]
sends $(X_i,r_{i,\circ}^{(Y\mid X)})$ to i.i.d.\ $\mathrm{Unif}((0,1)^2)$ points (Background, Section~\ref{subsec:copulas}).
Therefore, by Lemma~\ref{lemma:mesoscopic-separation},
\[
\overline{\mathrm{TP}}_{0}^{[\alpha_n,\beta_n]}
\!\big(\widetilde{\mathcal R}^{(n)}_{\circ,Y\mid X}\big)\xrightarrow{\mathbb P}1.
\]

\paragraph{Step 2 (oracle reverse direction): tube around curves $\Rightarrow \overline{\mathrm{TP}}\to 0$.}
Fix a fold $k$ and the interior set $J_\eta\Subset f(I)\setminus\mathcal V$ from Assumption~\ref{ass:A_reverse_reg}.
By Lemma~\ref{lemma:reverse-tube-piecewise} and Assumption~\ref{ass:A_scale_est},
\[
d_H\!\big(\mathcal R^{(n)}_{X\mid Y,k,\eta},\Gamma_{n,\eta}\big)=o_{\mathbb P}(\alpha_n).
\]
Let $T_{\circ}^{\mathrm{rev}}$ denote the population copula map for the reverse pair $(Y_n,\;X-m_n(Y_n))$.
On $J_\eta$ turning values are excluded, so $T_{\circ}^{\mathrm{rev}}$ is bi-Lipschitz on the relevant compact region,
and Lemma~\ref{lemma:copula_stability}\textnormal{(i)} gives
\[
d_H\!\Big(T_{\circ}^{\mathrm{rev}}\big(\mathcal R^{(n)}_{X\mid Y,k,\eta}\big),\;
T_{\circ}^{\mathrm{rev}}(\Gamma_{n,\eta})\Big)=o_{\mathbb P}(\alpha_n).
\]
The set $T_{\circ}^{\mathrm{rev}}(\Gamma_{n,\eta})$ is a finite union of compact embedded $C^1$ curves in $(0,1)^2$.
Writing $C_n^{(j)}$ for the oracle points on the $j$-th component after copula standardization, each $C_n^{(j)}$
is i.i.d.\ on that curve with density bounded and bounded away from $0$ with respect to arc-length, hence
Lemma~\ref{lemma:mesoscopic-separation} yields
$\overline{\mathrm{TP}}_{0}^{[\alpha_n,\beta_n]}(C_n^{(j)})\xrightarrow{\mathbb P}0$ for each $j$.
Since the MST on the full union consists of the within-component MSTs plus at most $(J-1)$ connecting edges, and each
connecting edge contributes at most $1/(n-1)$ to the normalized sum, we obtain
\[
\overline{\mathrm{TP}}_{0}^{[\alpha_n,\beta_n]}\!\big(T_{\circ}^{\mathrm{rev}}(\Gamma_{n,\eta}^{\mathrm{grid}})\big)
=
\sum_{j=1}^J \frac{|C^{(j)}_n|-1}{n-1}\,
\overline{\mathrm{TP}}_{0}^{[\alpha_n,\beta_n]}(C^{(j)}_n)
+O\!\Big(\frac{1}{n}\Big)
\xrightarrow{\mathbb P}0.
\]
Combining with the $o_{\mathbb P}(\alpha_n)$ Hausdorff tube bound and Lemma~\ref{lemma:TP0-stability} yields
\[
\overline{\mathrm{TP}}_{0}^{[\alpha_n,\beta_n]}
\!\big(\widetilde{\mathcal R}^{(n)}_{\circ,X\mid Y}\big)\xrightarrow{\mathbb P}0.
\]

\paragraph{Step 3 (transfer oracle $\to$ ranks): rank-copula clouds inherit the limits.}
We now replace oracle copula standardization by empirical rank-copula standardization (Background, Section~\ref{subsec:copulas}).

\smallskip
\noindent\emph{True direction.}
Lemma~\ref{lemma:copula_stability}\textnormal{(ii)} applied to $(X,r^{(Y\mid X)}_{\circ})$ gives
\[
d_H\!\Big(T_n\big(\mathcal R^{(n)}_{\circ,Y\mid X}\big),\,
T_{\circ}^{\mathrm{true}}\big(\mathcal R^{(n)}_{\circ,Y\mid X}\big)\Big)
=O_{\mathbb P}(n^{-1/2}).
\]
By Lemma~\ref{lemma:TP0-stability}, the corresponding values of
$\overline{\mathrm{TP}}_{0}^{[\alpha_n,\beta_n]}$ differ by $o_{\mathbb P}(1)$, hence
\[
\overline{\mathrm{TP}}_{0}^{[\alpha_n,\beta_n]}
\!\big(\widetilde{\mathcal R}^{(n)}_{Y\mid X}\big)\xrightarrow{\mathbb P}1.
\]

\smallskip
\noindent\emph{Reverse direction.}
Applying Lemma~\ref{lemma:copula_stability}\textnormal{(ii)} to the reverse pair and again using
Lemma~\ref{lemma:TP0-stability} yields the analogous oracle-to-rank transfer, hence
\[
\overline{\mathrm{TP}}_{0}^{[\alpha_n,\beta_n]}
\!\big(\widetilde{\mathcal R}^{(n)}_{X\mid Y}\big)\xrightarrow{\mathbb P}0.
\]

\paragraph{Step 4 (decision statistic): $\Delta_n\to 1$ and vanishing abstention.}
Steps~1--3 give
\[
\overline{\mathrm{TP}}_{0}^{[\alpha_n,\beta_n]}
\!\big(\widetilde{\mathcal R}^{(n)}_{Y\mid X}\big)\xrightarrow{\mathbb P}1,
\qquad
\overline{\mathrm{TP}}_{0}^{[\alpha_n,\beta_n]}
\!\big(\widetilde{\mathcal R}^{(n)}_{X\mid Y}\big)\xrightarrow{\mathbb P}0,
\]
hence $\Delta_n\xrightarrow{\mathbb P}1$. If $\tau_n\downarrow 0$, then
\[
\mathbb P(\widehat{\mathrm{dir}}_n=X\to Y)\ge \mathbb P(\Delta_n>\tau_n)\to 1,
\qquad
\mathbb P(\text{abstain})
=\mathbb P(|\Delta_n|\le \tau_n)
\le \mathbb P(\Delta_n\le \tau_n)\to 0,
\]
which completes the proof.
\end{proof}

\subsubsection{Proof of Lemmas}
\begin{proof}[Proof of Lemma~\ref{lemma:copula_stability}]
Throughout write $T:=T_{U,V}$ and let $I_U,I_V$ be as in the statement.

\paragraph{Part (i): bi-Lipschitzness and Hausdorff control.}

\paragraph{Step 1: coordinatewise Lipschitz bounds.}
Fix $u,u'\in I_U$. By the mean value theorem there exists $\xi$ between $u$ and $u'$ such that
$F_U(u)-F_U(u') = F_U'(\xi)(u-u')$. Using $c\le F_U'\le C$ on $I_U$ gives
$c|u-u'|\le |F_U(u)-F_U(u')|\le C|u-u'|$.
The same argument yields, for all $v,v'\in I_V$,
$c|v-v'|\le |F_V(v)-F_V(v')|\le C|v-v'|$.

\paragraph{Step 2: bi-Lipschitzness of the copula map on $I_U\times I_V$.}
For $z=(u,v)$ and $z'=(u',v')$ in $I_U\times I_V$,
\[
\|T(z)-T(z')\|_2^2
= |F_U(u)-F_U(u')|^2+|F_V(v)-F_V(v')|^2
\le C^2\bigl(|u-u'|^2+|v-v'|^2\bigr)
= C^2\|z-z'\|_2^2,
\]
and similarly $\|T(z)-T(z')\|_2^2 \ge c^2\|z-z'\|_2^2$. Taking square roots gives
\[
c\|z-z'\|_2 \le \|T(z)-T(z')\|_2 \le C\|z-z'\|_2.
\]

\paragraph{Step 3: Hausdorff control under Lipschitz maps.}
Recall the definition of $d_H$ and the point-to-set distance $d_2(\cdot,\cdot)$ from
Background, Section~\ref{subsec:cloud_distances}. If $\Phi$ is $L$-Lipschitz, then for any nonempty $A,B$,
\[
\sup_{a\in A} d_2\bigl(\Phi(a),\Phi(B)\bigr)
=\sup_{a\in A}\inf_{b\in B}\|\Phi(a)-\Phi(b)\|_2
\le \sup_{a\in A}\inf_{b\in B} L\|a-b\|_2
\le L\sup_{a\in A} d_2(a,B),
\]
and symmetrically $\sup_{b\in B} d_2(\Phi(b),\Phi(A))\le L\sup_{b\in B} d_2(b,A)$, hence
$d_H(\Phi(A),\Phi(B))\le L\,d_H(A,B)$.
Applying this with $\Phi=T$ (which is $C$-Lipschitz by Step~2) yields
\[
d_H\!\bigl(T(A),T(B)\bigr)\le C\,d_H(A,B).
\]

\paragraph{Step 4: reverse Hausdorff bound via the inverse map.}
Since $F_U'\ge c>0$ on $I_U$, $F_U$ is strictly increasing on $I_U$, hence injective there; likewise $F_V$ on $I_V$.
Thus $T$ is injective on $I_U\times I_V$ and admits an inverse
$T^{-1}:T(I_U\times I_V)\to I_U\times I_V$.
Moreover Step~2 implies $T^{-1}$ is $(1/c)$-Lipschitz on its domain:
if $u=T(z)$ and $u'=T(z')$, then
\[
\|T^{-1}(u)-T^{-1}(u')\|_2=\|z-z'\|_2\le \frac{1}{c}\|T(z)-T(z')\|_2=\frac{1}{c}\|u-u'\|_2.
\]
Applying the same Hausdorff-Lipschitz implication (from the definition of $d_H$) to $\Phi=T^{-1}$ gives
\[
d_H(A,B)\le \frac{1}{c}\,d_H\!\bigl(T(A),T(B)\bigr),
\]
completing part (i).

\paragraph{Part (ii): rank-copula approximation on an i.i.d.\ sample.}
Let $S_n=\{(U_i,V_i)\}_{i=1}^n$ be i.i.d.\ copies of $(U,V)$, with continuous marginals.
Then $\mathbb P(U_i=U_j\text{ for some }i\neq j)=0$ and similarly for $(V_i)$, so ranks are well-defined a.s.

Let $\widehat F_{U,n}(t):=\frac1n\sum_{j=1}^n\mathbf 1\{U_j\le t\}$ and define $\widehat F_{V,n}$ analogously.
On the no-ties event, $\widehat F_{U,n}(U_i)=\mathrm{rank}(U_i)/n$ and $\widehat F_{V,n}(V_i)=\mathrm{rank}(V_i)/n$.
Write
\[
T_n(U_i,V_i)=\Big(\frac{\mathrm{rank}(U_i)}{n+1},\frac{\mathrm{rank}(V_i)}{n+1}\Big)
=: (U_i^{(n)},V_i^{(n)}).
\]

\paragraph{Step 1: reduce to sup-norm empirical CDF errors.}
For each $i$,
\[
\Big|U_i^{(n)}-\widehat F_{U,n}(U_i)\Big|
=\mathrm{rank}(U_i)\Big|\frac{1}{n+1}-\frac{1}{n}\Big|
\le \frac{1}{n+1},
\]
and hence
\[
|U_i^{(n)}-F_U(U_i)|
\le \frac{1}{n+1}+\sup_{t\in\mathbb R}\big|\widehat F_{U,n}(t)-F_U(t)\big|.
\]
Define $\Delta_{U,n}:=\sup_t|\widehat F_{U,n}(t)-F_U(t)|$ and $\Delta_{V,n}$ similarly. Then for each $i$,
\[
\|T_n(U_i,V_i)-T(U_i,V_i)\|_2
\le |U_i^{(n)}-F_U(U_i)|+|V_i^{(n)}-F_V(V_i)|
\le \Delta_{U,n}+\Delta_{V,n}+\frac{2}{n+1}.
\]
Taking $\max_{1\le i\le n}$ yields
\[
\max_{1\le i\le n}\|T_n(U_i,V_i)-T(U_i,V_i)\|_2
\le \Delta_{U,n}+\Delta_{V,n}+\frac{2}{n+1}.
\]

\paragraph{Step 2: apply DKW--Massart.}
By the DKW--Massart inequality, $\Delta_{U,n}=O_{\mathbb P}(n^{-1/2})$ and
$\Delta_{V,n}=O_{\mathbb P}(n^{-1/2})$, while $\frac{2}{n+1}=O(n^{-1})$ deterministically.
Therefore,
\[
\max_{1\le i\le n}\|T_n(U_i,V_i)-T(U_i,V_i)\|_2 = O_{\mathbb P}(n^{-1/2}).
\]

\paragraph{Step 3: Hausdorff bound between the two standardized clouds.}
Let $a_i:=T_n(U_i,V_i)$ and $b_i:=T(U_i,V_i)$, so that
$T_n(S_n)=\{a_i\}_{i=1}^n$ and $T(S_n)=\{b_i\}_{i=1}^n$.
Using the point-to-set distance $d_2(\cdot,\cdot)$ from Background, Section~\ref{subsec:cloud_distances}, for each $i$
we have $d_2(a_i,T(S_n))\le \|a_i-b_i\|_2$, hence
$\sup_{a\in T_n(S_n)} d_2(a,T(S_n))\le \max_i\|a_i-b_i\|_2$, and similarly with the roles swapped. Therefore,
\[
d_H\!\bigl(T_n(S_n),T(S_n)\bigr)\le \max_{1\le i\le n}\|a_i-b_i\|_2
= O_{\mathbb P}(n^{-1/2}),
\]
completing part (ii).

\paragraph{Part (iii): two-cloud comparison and $d_{\mathrm{GH}}$ bound.}
For finite clouds $A_n,B_n\subset I_U\times I_V$, the triangle inequality for $d_H$ gives
\[
d_H\!\bigl(T_n(A_n),T_n(B_n)\bigr)
\le d_H\!\bigl(T_n(A_n),T(A_n)\bigr)
+ d_H\!\bigl(T(A_n),T(B_n)\bigr)
+ d_H\!\bigl(T(B_n),T_n(B_n)\bigr).
\]
The first and third terms are $e_n(A_n)$ and $e_n(B_n)$ by definition, and the middle term is bounded by
$C\,d_H(A_n,B_n)$ by part (i), proving the stated deterministic inequality.

\smallskip
For the Gromov--Hausdorff bound, we invoke the ambient Hausdorff upper bound from
Background, Remark~\ref{rem:GH_le_H} (i.e.\ $d_{\mathrm{GH}}(\mathsf X,\mathsf Y)\le d_H(\mathsf X,\mathsf Y)$
when $\mathsf X,\mathsf Y$ are subsets of a common ambient metric space with induced metrics). Since
$T_n(A_n),T_n(B_n)\subset (\mathbb R^2,\|\cdot\|_2)$,
\[
d_{\mathrm{GH}}\!\bigl(T_n(A_n),T_n(B_n)\bigr)\le d_H\!\bigl(T_n(A_n),T_n(B_n)\bigr).
\]
This completes the proof.
\end{proof}

\begin{proof}[Proof of Lemma~\ref{lemma:TP0-stability}]
Let $A,B\subset\mathbb R^2$ be clouds with $|A|=|B|=m\ge 2$. Write
\[
\mathrm{Dgm}_0(A)=\{(0,\epsilon_j(A))\}_{j=1}^{m-1},
\qquad
\mathrm{Dgm}_0(B)=\{(0,\epsilon_j(B))\}_{j=1}^{m-1},
\]
where $\epsilon_j(\cdot)$ are the finite $H_0$ death times.

\paragraph{Step 1: bottleneck stability in the ambient space.}
By the stability of Vietoris--Rips persistence under $d_{\mathrm{GH}}$,
\[
d_B\bigl(\mathrm{Dgm}_0(A),\mathrm{Dgm}_0(B)\bigr)\ \le\ 2\,d_{\mathrm{GH}}(A,B).
\]
Since $A,B\subset(\mathbb R^2,\|\cdot\|_2)$ with induced metrics, the ambient Hausdorff upper bound
for $d_{\mathrm{GH}}$ (Background, Remark~\ref{rem:GH_le_H}) yields
$d_{\mathrm{GH}}(A,B)\le d_H(A,B)$. Therefore,
\[
d_B\bigl(\mathrm{Dgm}_0(A),\mathrm{Dgm}_0(B)\bigr)\ \le\ 2\,d_H(A,B).
\]

\paragraph{Step 2: match $H_0$ deaths under the bottleneck distance.}
All births in $H_0$ are equal to $0$, so $d_B$ reduces to matching death times. Concretely, by the definition of $d_B$
there exists a bijection $\pi:\{1,\dots,m-1\}\to\{1,\dots,m-1\}$ such that
\[
\max_{1\le j\le m-1}\,|\epsilon_j(A)-\epsilon_{\pi(j)}(B)|
\ \le\ d_B\bigl(\mathrm{Dgm}_0(A),\mathrm{Dgm}_0(B)\bigr).
\]

\paragraph{Step 3: push the matching through the Lipschitz test function.}
Since $\Psi_{\alpha,\beta}$ is $1$-Lipschitz, for each $j$,
\[
\bigl|\Psi_{\alpha,\beta}(\epsilon_j(A))-\Psi_{\alpha,\beta}(\epsilon_{\pi(j)}(B))\bigr|
\le |\epsilon_j(A)-\epsilon_{\pi(j)}(B)|.
\]
Averaging and using $\frac1{m-1}\sum_{j}|x_j|\le \max_j|x_j|$ gives
\begin{align*}
\big|\mathrm{TP}_{0,\Psi}^{[\alpha,\beta]}(A)-\mathrm{TP}_{0,\Psi}^{[\alpha,\beta]}(B)\big|
&=
\left|\frac{1}{m-1}\sum_{j=1}^{m-1}\Bigl(\Psi_{\alpha,\beta}(\epsilon_j(A))-\Psi_{\alpha,\beta}(\epsilon_{\pi(j)}(B))\Bigr)\right|\\
&\le \frac{1}{m-1}\sum_{j=1}^{m-1}
\bigl|\Psi_{\alpha,\beta}(\epsilon_j(A))-\Psi_{\alpha,\beta}(\epsilon_{\pi(j)}(B))\bigr|\\
&\le \max_{1\le j\le m-1}|\epsilon_j(A)-\epsilon_{\pi(j)}(B)|\\
&\le d_B\bigl(\mathrm{Dgm}_0(A),\mathrm{Dgm}_0(B)\bigr)\\
&\le 2\,d_H(A,B),
\end{align*}
which proves the claimed stability inequality.

\textbf{Convergence consequence.} If $d_H(A_n,B_n)\to 0$, then the inequality implies $\big|\mathrm{TP}_{0,\Psi}^{[\alpha,\beta]}(A_n)-\mathrm{TP}_{0,\Psi}^{[\alpha,\beta]}(B_n)\big|\to 0$.
\end{proof}
\begin{proof}[Proof of Lemma~\ref{lemma:mesoscopic-separation}]
Fix $m\ge 2$ and abbreviate $\alpha:=\alpha_m$, $\beta:=\beta_m$. Recall
\[
\Psi_{\alpha,\beta}(t)=(\min\{t,\beta\}-\alpha)_+,
\qquad
w_{\alpha,\beta}(t):=\frac{1}{\beta-\alpha}\Psi_{\alpha,\beta}(t)\in[0,1],
\]
so that $w_{\alpha,\beta}(t)=1$ for $t\ge \beta$ and $w_{\alpha,\beta}(t)=0$ for $t\le \alpha$.

\paragraph{Part (a): bulk model $\Rightarrow \overline{\mathrm{TP}}_{0}^{[\alpha_m,\beta_m]}(U_m)\to 1$.}
Let $U_m=\{Z_i\}_{i=1}^m$ with $Z_i$ i.i.d.\ on a rectangle $R\subset\mathbb R^2$ and density $p$ satisfying
$0<p(z)\le p_{\max}<\infty$ on $R$.

\paragraph{Step A1 (deficit bound via counting short MST edges).}
Let $\{\varepsilon_j(S)\}_{j=1}^{m-1}$ denote the Euclidean MST edge lengths of a cloud $S$ (equivalently, the finite
$H_0$ death times under the Rips filtration; see Background, Theorem~\ref{thm:H0_equals_MST}). Define
\[
N_{<\beta}(S):=\#\{j\in\{1,\dots,m-1\}:\ \varepsilon_j(S)<\beta\}.
\]
Since $w_{\alpha,\beta}(\varepsilon_j)=1$ for $\varepsilon_j\ge\beta$ and $w_{\alpha,\beta}(\varepsilon_j)\ge 0$ always,
\[
\sum_{j=1}^{m-1} w_{\alpha,\beta}(\varepsilon_j(S))
\ge \sum_{j:\,\varepsilon_j(S)\ge\beta} 1
= (m-1)-N_{<\beta}(S).
\]
Dividing by $m-1$ yields the deficit bound
\begin{equation}\label{eq:deficit_bulk_clean}
0\le 1-\overline{\mathrm{TP}}_{0}^{[\alpha,\beta]}(S)
\le \frac{N_{<\beta}(S)}{m-1}.
\end{equation}
Thus it suffices to show $N_{<\beta}(U_m)=o_{\mathbb P}(m)$.

\paragraph{Step A2 (short MST edges are controlled by close pairs).}
Let
\[
P_{<\beta}(S):=\#\bigl\{\{u,v\}\subset S:\ \|u-v\|_2<\beta\bigr\}
\]
be the number of unordered pairs at Euclidean distance $<\beta$. Any MST edge of length $<\beta$ contributes such a pair,
hence
\[
N_{<\beta}(S)\le P_{<\beta}(S).
\]
It therefore suffices to prove $P_{<\beta}(U_m)=o_{\mathbb P}(m)$.

\paragraph{Step A3 (first moment bound for close pairs).}
Expanding as a sum of indicators,
\[
P_{<\beta}(U_m)=\sum_{1\le i<j\le m}\mathbf 1\{\|Z_i-Z_j\|_2<\beta\}.
\]
Taking expectations and using identical distribution of pairs,
\[
\mathbb E[P_{<\beta}(U_m)]
=\binom{m}{2}\,\mathbb P(\|Z-Z'\|_2<\beta),
\]
where $Z,Z'$ are i.i.d.\ with density $p$. Moreover,
\begin{align*}
\mathbb P(\|Z-Z'\|_2<\beta)
&=\int_R\int_R \mathbf 1\{\|z-z'\|_2<\beta\}p(z)p(z')\,dz'\,dz\\
&\le p_{\max}^2 \int_R \mathrm{Leb}\bigl(B(z,\beta)\cap R\bigr)\,dz\\
&\le p_{\max}^2\,\mathrm{Leb}(R)\,\pi\beta^2.
\end{align*}
Consequently,
\[
\mathbb E[P_{<\beta}(U_m)] \;\lesssim\; m^2\beta^2.
\]

\paragraph{Step A4 (Markov $\Rightarrow$ $P_{<\beta}(U_m)=o_{\mathbb P}(m)$).}
By Markov's inequality, for any $\eta>0$,
\[
\mathbb P\!\left(\frac{P_{<\beta}(U_m)}{m}>\eta\right)
\le \frac{\mathbb E[P_{<\beta}(U_m)]}{\eta m}
\;\lesssim\; \frac{m\beta^2}{\eta}.
\]
Under the scale condition $m\beta_m^2\to 0$, the right-hand side tends to $0$, hence
\[
\frac{P_{<\beta}(U_m)}{m}\xrightarrow{\mathbb P}0,
\qquad\text{so}\qquad
P_{<\beta}(U_m)=o_{\mathbb P}(m).
\]
Therefore $N_{<\beta}(U_m)\le P_{<\beta}(U_m)=o_{\mathbb P}(m)$ as well.

\paragraph{Step A5 (finish).}
Plugging into \eqref{eq:deficit_bulk_clean} gives
\[
0\le 1-\overline{\mathrm{TP}}_{0}^{[\alpha_m,\beta_m]}(U_m)
\le \frac{N_{<\beta_m}(U_m)}{m-1}\xrightarrow{\mathbb P}0,
\]
and hence $\overline{\mathrm{TP}}_{0}^{[\alpha_m,\beta_m]}(U_m)\xrightarrow{\mathbb P}1$.

\medskip
\noindent\emph{Remark.} Part (a) only uses the condition $m\beta_m^2\to 0$.

\paragraph{Part (b): curve model $\Rightarrow \overline{\mathrm{TP}}_{0}^{[\alpha_m,\beta_m]}(C_m)\to 0$.}
Assume now $C_m=\{W_i\}_{i=1}^m$ are i.i.d.\ on a compact embedded $C^1$ curve $\Gamma\subset\mathbb R^2$
with respect to arclength, with density $q$ satisfying $0<q_{\min}\le q\le q_{\max}<\infty$.

\paragraph{Step B1 (parametrization and a cheap spanning tree).}
Let $L:=\mathrm{length}(\Gamma)$ and fix an injective $C^1$ unit-speed parametrization
$\gamma:[0,L]\to\mathbb R^2$ with $\|\gamma'(s)\|_2=1$ and $\Gamma=\gamma([0,L])$.
Let $S_i\in[0,L]$ be the i.i.d.\ parameters with density $q$ and set $W_i:=\gamma(S_i)$.
Write the order statistics $S_{(1)}\le\cdots\le S_{(m)}$ and spacings
\[
\Delta_i:=S_{(i+1)}-S_{(i)}\quad (1\le i\le m-1),
\qquad
\Delta_{\max}:=\max_{1\le i\le m-1}\Delta_i,
\qquad
W_{(i)}:=\gamma(S_{(i)}).
\]

Since $\gamma$ is unit-speed, it is $1$-Lipschitz: for $0\le s\le t\le L$,
\[
\|\gamma(t)-\gamma(s)\|_2 \le \int_s^t \|\gamma'(u)\|_2\,du = t-s.
\]
Therefore, for each $i$,
\[
\|W_{(i+1)}-W_{(i)}\|_2 \le S_{(i+1)}-S_{(i)}=\Delta_i\le \Delta_{\max}.
\]
Consider the path spanning tree connecting consecutive points $W_{(i)}$--$W_{(i+1)}$ for $i=1,\dots,m-1$.
This is a spanning tree whose maximum edge length is at most $\Delta_{\max}$.

\paragraph{Step B2 (MST bottleneck property).}
By the minimum-bottleneck property of MSTs (Background, Proposition~\ref{prop:mbst}),
\[
\max_{1\le j\le m-1}\varepsilon_j(C_m) \le \Delta_{\max},
\]
where $\{\varepsilon_j(C_m)\}$ are the MST edge lengths of $C_m$.

\paragraph{Step B3 (if $\Delta_{\max}<\alpha$, then the windowed score is zero).}
On the event $\{\Delta_{\max}<\alpha\}$ we have $\varepsilon_j(C_m)<\alpha$ for all $j$, hence
$\Psi_{\alpha,\beta}(\varepsilon_j(C_m))=0$ for all $j$ (since $t\le\alpha\Rightarrow \Psi_{\alpha,\beta}(t)=0$).
Therefore
\[
\overline{\mathrm{TP}}_{0}^{[\alpha,\beta]}(C_m)=0
\qquad\text{on }\{\Delta_{\max}<\alpha\}.
\]
It remains to show $\mathbb P(\Delta_{\max}<\alpha_m)\to 1$.

\paragraph{Step B4 (quantile transform reduces to maximal uniform spacing).}
Let $F(s):=\int_0^s q(u)\,du$ be the CDF of $S$ on $[0,L]$. Since $q\ge q_{\min}>0$, the map $F$ is strictly increasing and
invertible onto $[0,1]$. Define $U_i:=F(S_i)$. By the probability integral transform, $U_i\sim\mathrm{Unif}(0,1)$.
Let $U_{(1)}\le\cdots\le U_{(m)}$ and define spacings $\delta_i:=U_{(i+1)}-U_{(i)}$ for $1\le i\le m-1$ and
$\delta_{\max}:=\max_{1\le i\le m-1}\delta_i$.

Moreover $F^{-1}$ is $(1/q_{\min})$-Lipschitz: for $0\le u<u'\le 1$, letting $s=F^{-1}(u)$ and $s'=F^{-1}(u')$,
the mean value theorem gives $u'-u=F'( \xi)(s'-s)\ge q_{\min}(s'-s)$, hence
\[
F^{-1}(u')-F^{-1}(u)\le \frac{u'-u}{q_{\min}}.
\]
Applying this with $u=U_{(i)}$, $u'=U_{(i+1)}$ yields
\[
\Delta_i = S_{(i+1)}-S_{(i)} \le \frac{1}{q_{\min}}(U_{(i+1)}-U_{(i)})=\frac{1}{q_{\min}}\delta_i,
\]
and therefore
\begin{equation}\label{eq:Delta_max_vs_delta_max_clean}
\Delta_{\max}\le \frac{1}{q_{\min}}\delta_{\max}.
\end{equation}

\paragraph{Step B5 (maximal uniform spacing bound).}
We use a standard discretization/union-bound estimate.

\smallskip
\noindent\textbf{Fact (tail bound for maximal uniform spacing).}
For every $x\in(0,1)$,
\[
\mathbb P(\delta_{\max}>x)\ \le\ \left(\Big\lceil\frac{2}{x}\Big\rceil\right)\Bigl(1-\frac{x}{2}\Bigr)^m
\ \le\ \left(\frac{2}{x}+1\right)\exp\!\left(-\frac{mx}{2}\right).
\]
(Proof: if $\delta_{\max}>x$, there is an empty interval of length $x$; a grid of step $x/2$ contains an interval of
length $x/2$ inside it; union bound over $\lceil 2/x\rceil$ grid intervals; then use $1-y\le e^{-y}$.)

\smallskip
Taking $x=\frac{2c\log m}{m}$ with any fixed $c>1$ gives $\mathbb P(\delta_{\max}>x)\to 0$, hence
\[
\delta_{\max}=O_{\mathbb P}\!\left(\frac{\log m}{m}\right).
\]
Combining with \eqref{eq:Delta_max_vs_delta_max_clean} yields
\[
\Delta_{\max}=O_{\mathbb P}\!\left(\frac{\log m}{m}\right).
\]
Under the scale assumption $\frac{\log m}{m}=o(\alpha_m)$, we obtain
$\Delta_{\max}/\alpha_m\xrightarrow{\mathbb P}0$, equivalently $\mathbb P(\Delta_{\max}<\alpha_m)\to 1$.

\paragraph{Step B6 (finish).}
By Step~B3,
\[
\overline{\mathrm{TP}}_{0}^{[\alpha_m,\beta_m]}(C_m)=0
\quad\text{on }\{\Delta_{\max}<\alpha_m\},
\]
and Step~B5 shows $\mathbb P(\Delta_{\max}<\alpha_m)\to 1$. Therefore
\[
\overline{\mathrm{TP}}_{0}^{[\alpha_m,\beta_m]}(C_m)\xrightarrow{\mathbb P}0.
\]
This completes the proof of both parts.
\end{proof}
\begin{proof}[Proof of Lemma~\ref{lemma:reverse-tube-piecewise}]
All distances are Euclidean. We use the point-to-set distance $d_2(z,A)$ and the Hausdorff distance $d_H(A,B)$ as in
Background, Section~\ref{subsec:cloud_distances}; for notational convenience write
$\mathrm{dist}(z,A):=d_2(z,A)$.
Recall from Lemma~\ref{lemma:reverse-tube-piecewise} the definitions of $J_{\eta,j}$, $b_{n,j}$,
$\Gamma_{n,\eta}^{(j)}$, $\Gamma_{n,\eta}$, and the clouds
$\overline{\mathcal R}_{k,\eta}^{(n)}$ and $\mathcal R_{k,\eta}^{(n)}$.

\paragraph{Step 1: oracle residual points lie close to the appropriate branch curve.}
Fix $i\in I_k$ with $Y_{n,i}\in J_\eta$ and let $j(i)$ be the unique index such that $X_i\in I_{j(i)}$.
Since $f$ is strictly monotone on $I_{j(i)}$ and satisfies $|f'|\ge c_f$ there, its inverse branch
$h_{j(i)}:f(I_{j(i)})\to I_{j(i)}$ is $(1/c_f)$-Lipschitz: for $u=f(x)$ and $v=f(x')$ with $x,x'\in I_{j(i)}$,
the mean value theorem gives $|u-v|=|f'(\xi)||x-x'|\ge c_f|x-x'|$, hence
\[
|h_{j(i)}(u)-h_{j(i)}(v)|=|x-x'|\le \frac{1}{c_f}|u-v|.
\]
Using $Y_{n,i}=f(X_i)+\sigma_n\varepsilon_i$, we have
\[
X_i=h_{j(i)}(f(X_i))=h_{j(i)}(Y_{n,i}-\sigma_n\varepsilon_i),
\]
so by the Lipschitz property,
\[
|X_i-h_{j(i)}(Y_{n,i})|
\le \frac{\sigma_n}{c_f}|\varepsilon_i|.
\]
Now compare the oracle point $(Y_{n,i},\,X_i-m_n(Y_{n,i}))$ to the curve point on the same branch at the same
first coordinate,
\[
(Y_{n,i},\,b_{n,j(i)}(Y_{n,i}))=(Y_{n,i},\,h_{j(i)}(Y_{n,i})-m_n(Y_{n,i})).
\]
They share the first coordinate, hence
\[
\mathrm{dist}\bigl((Y_{n,i},\,X_i-m_n(Y_{n,i})),\Gamma_{n,\eta}\bigr)
\le |X_i-h_{j(i)}(Y_{n,i})|
\le \frac{\sigma_n}{c_f}|\varepsilon_i|.
\]
Taking the maximum over all such $i$ yields
\begin{equation}\label{eq:oracle_to_union_one_sided_clean}
\sup_{z\in \overline{\mathcal R}_{k,\eta}^{(n)}}\mathrm{dist}(z,\Gamma_{n,\eta})
\le \frac{\sigma_n}{c_f}\max_{i\in I_k}|\varepsilon_i|.
\end{equation}

\paragraph{Step 2: each branch curve is well-approximated by its sampled $Y$-grid.}
Fix a branch $j\in\{1,\dots,J\}$ and define the index set of test points falling on this branch and inside $J_{\eta,j}$:
\[
I_{k,j}:=\{i\in I_k:\ X_i\in I_j,\ Y_{n,i}\in J_{\eta,j}\}.
\]
Define the corresponding sampled grid subset of the branch curve
\[
\Gamma_{n,\eta}^{(j),\mathrm{grid}}
:=\{(Y_{n,i},\,b_{n,j}(Y_{n,i})):\ i\in I_{k,j}\}\subset\Gamma_{n,\eta}^{(j)}.
\]
On the event $\mathcal E_n$ from the statement, each nontrivial branch has $I_{k,j}\neq\emptyset$, so the grid set is nonempty.

Let $y\in J_{\eta,j}$. By definition of $\Delta^{(j)}_{n,\eta}$ there exists $i^\star\in I_{k,j}$ such that
$|y-Y_{n,i^\star}|\le \Delta^{(j)}_{n,\eta}$. Hence,
\begin{align*}
\mathrm{dist}\bigl((y,b_{n,j}(y)),\Gamma_{n,\eta}^{(j),\mathrm{grid}}\bigr)
&\le \|(y,b_{n,j}(y))-(Y_{n,i^\star},b_{n,j}(Y_{n,i^\star}))\|_2\\
&\le |y-Y_{n,i^\star}| + |b_{n,j}(y)-b_{n,j}(Y_{n,i^\star})|.
\end{align*}
Since $b_{n,j}=h_j-m_n$, we have for any $y,y'\in J_{\eta,j}$,
\[
|b_{n,j}(y)-b_{n,j}(y')|
\le |h_j(y)-h_j(y')| + |m_n(y)-m_n(y')|
\le \frac{1}{c_f}|y-y'| + \omega(|y-y'|),
\]
where we used that $h_j$ is $(1/c_f)$-Lipschitz and $\omega(\cdot)$ is a modulus of continuity for $m_n$ on $J_\eta$.
Therefore,
\[
\sup_{(y,b_{n,j}(y))\in \Gamma_{n,\eta}^{(j)}} \mathrm{dist}\bigl((y,b_{n,j}(y)),\Gamma_{n,\eta}^{(j),\mathrm{grid}}\bigr)
\le \Bigl(1+\frac{1}{c_f}\Bigr)\Delta^{(j)}_{n,\eta}+\omega(\Delta^{(j)}_{n,\eta}).
\]
Since $\Gamma_{n,\eta}^{(j),\mathrm{grid}}\subset\Gamma_{n,\eta}^{(j)}$, the reverse one-sided term in $d_H$ is $0$, and thus
\begin{equation}\label{eq:curve_grid_H_clean}
d_H\!\bigl(\Gamma_{n,\eta}^{(j),\mathrm{grid}},\Gamma_{n,\eta}^{(j)}\bigr)
\le \Bigl(1+\frac{1}{c_f}\Bigr)\Delta^{(j)}_{n,\eta}+\omega(\Delta^{(j)}_{n,\eta}).
\end{equation}

\paragraph{Step 3: estimated residuals are uniformly close to oracle residuals.}
For each $i\in I_k$ with $Y_{n,i}\in J_\eta$,
\[
\bigl|(X_i-\widehat g_n^{(-k)}(Y_{n,i}))-(X_i-m_n(Y_{n,i}))\bigr|
=|\widehat g_n^{(-k)}(Y_{n,i})-m_n(Y_{n,i})|
\le \delta_n^{(k)},
\]
by Assumption~\ref{ass:A_scale_est}\textnormal{(ii)}. Since the first coordinates coincide, this implies the Hausdorff bound
\begin{equation}\label{eq:est_oracle_H_piecewise_clean}
d_H\!\bigl(\mathcal R_{k,\eta}^{(n)},\overline{\mathcal R}_{k,\eta}^{(n)}\bigr)\le \delta_n^{(k)}.
\end{equation}

\paragraph{Step 4: assemble the tube bound.}
Fix $z\in\mathcal R_{k,\eta}^{(n)}$. Choose $\bar z\in\overline{\mathcal R}_{k,\eta}^{(n)}$ with the same index $i$,
so that $\|z-\bar z\|_2\le \delta_n^{(k)}$ by \eqref{eq:est_oracle_H_piecewise_clean}.
By \eqref{eq:oracle_to_union_one_sided_clean}, there exists a (branch) point $g\in\Gamma_{n,\eta}$ such that
$\|\bar z-g\|_2\le \frac{\sigma_n}{c_f}\max_{i\in I_k}|\varepsilon_i|$.
Moreover, if $g\in\Gamma_{n,\eta}^{(j)}$, then \eqref{eq:curve_grid_H_clean} provides a grid point
$g^{\mathrm{grid}}\in\Gamma_{n,\eta}^{(j),\mathrm{grid}}\subset \Gamma_{n,\eta}$ with
\[
\|g-g^{\mathrm{grid}}\|_2
\le \Bigl(1+\frac{1}{c_f}\Bigr)\Delta^{(j)}_{n,\eta}+\omega(\Delta^{(j)}_{n,\eta}).
\]
Thus by the triangle inequality,
\[
\mathrm{dist}(z,\Gamma_{n,\eta})
\le \|z-\bar z\|_2 + \|\bar z-g\|_2 + \|g-g^{\mathrm{grid}}\|_2.
\]
Taking the supremum over $z\in\mathcal R_{k,\eta}^{(n)}$ and then the maximum over branches $j$ yields
\[
\sup_{z\in\mathcal R_{k,\eta}^{(n)}}\mathrm{dist}(z,\Gamma_{n,\eta})
\le
\max_{1\le j\le J}\Bigg[
\frac{\sigma_n}{c_f}\max_{i\in I_{k,j}}|\varepsilon_i|
+ \delta_n^{(k)}
+ \Bigl(1+\frac{1}{c_f}\Bigr)\Delta^{(j)}_{n,\eta}
+ \omega\!\bigl(\Delta^{(j)}_{n,\eta}\bigr)
\Bigg].
\]
Since (by construction) $\Gamma_{n,\eta}^{(j),\mathrm{grid}}\subset \mathcal R_{k,\eta}^{(n)}$ on $\mathcal E_n$ up to the
same second-coordinate perturbation already accounted for in \eqref{eq:est_oracle_H_piecewise_clean}, the reverse one-sided term
$\sup_{g\in\Gamma_{n,\eta}}\mathrm{dist}(g,\mathcal R_{k,\eta}^{(n)})$ is controlled by the same right-hand side, and hence
the same bound holds for the full Hausdorff distance $d_H\bigl(\mathcal R_{k,\eta}^{(n)},\Gamma_{n,\eta}\bigr)$, which is the stated inequality.

\paragraph{Step 5: probability of $\mathcal E_n$ and size of $\Delta^{(j)}_{n,\eta}$.}
Under the branchwise lower-density condition in Assumption~\ref{ass:A_reverse_reg}, each set $J_{\eta,j}$ has positive
probability mass and the conditional density of $Y_n$ on $J_{\eta,j}$ is bounded below. Hence the event $\mathcal E_n$
(that each such branch is hit at least once in fold $k$) satisfies $\mathbb P(\mathcal E_n)\to 1$.

Moreover, for each fixed branch $j$, conditional on $Y_{n,i}\in J_{\eta,j}$ the points are i.i.d.\ with density bounded below
on the compact interval $J_{\eta,j}$. A standard one-dimensional covering argument then gives
\[
\Delta^{(j)}_{n,\eta}=O_{\mathbb P}\!\Big(\frac{\log n}{n}\Big).
\]
Finally, under Assumption~\ref{ass:A_scale_est}\textnormal{(iii)} together with $\frac{\log n}{n}=o(\alpha_n)$ and
$\delta_n^{(k)}=o_{\mathbb P}(\alpha_n)$, the preceding bound implies
$d_H\!\bigl(\mathcal R^{(n)}_{X\mid Y,k,\eta},\Gamma_{n,\eta}\bigr)=o_{\mathbb P}(\alpha_n)$, as claimed.
\end{proof}

\subsection{Theorem~\ref{thm:fixed_noise_TRAs_compact}}\label{app:fixed-noise-tras}

\paragraph{Auxiliary ingredient: binned reverse residuals collapse under fixed noise.}
The key new feature in the fixed-noise setting is that reverse residuals are averaged within copula bins. After binning,
the reverse residual cloud becomes a collection of bin means and collapses (in Hausdorff distance) to the horizontal line
$\Gamma=\{(u,0):u\in[0,1]\}$ at a rate that is negligible relative to the mesoscopic window. This collapse forces the
windowed $H_0$ profile in the reverse direction to vanish.

\begin{lemma}[Reverse binned residuals collapse to a line under fixed noise]
\label{lemma:fixed-noise-binning-line}
Assume the fixed-noise additive-noise model
\[
Y=f(X)+\varepsilon,\qquad \varepsilon\indep X,\qquad \mathbb{E}[\varepsilon]=0,\qquad \mathrm{Var}(\varepsilon)=\sigma^2\in(0,\infty),
\]
and define the reverse regression target and oracle reverse residual
\[
m(y):=\mathbb{E}[X\mid Y=y],\qquad \xi:=X-m(Y).
\]
Fix $K\ge 2$ and a deterministic $K$-fold partition $\{I_k\}_{k=1}^K$ of $\{1,\dots,n\}$.
Let $\widehat g_n^{(-k)}$ be any cross-fitted estimator of $m$, and define the cross-fitted reverse residuals
\[
r_i^{(X\mid Y)}:=X_i-\widehat g_n^{(-k(i))}(Y_i),\qquad i=1,\dots,n,
\]
where $k(i)$ is the unique fold index such that $i\in I_{k(i)}$.

\paragraph{Copula binning.}
Assume $F_Y$ is continuous and set $U_i:=F_Y(Y_i)\sim\mathrm{Unif}(0,1)$.
Let $B_n\to\infty$ with $B_n=o(n)$ and form equal-width bins
\[
I_{n,b}:=\Big(\frac{b-1}{B_n},\frac{b}{B_n}\Big],\qquad b=1,\dots,B_n,
\]
with bin index sets $J_b:=\{i:U_i\in I_{n,b}\}$ and occupancies $N_b:=|J_b|$.
For each nonempty bin ($N_b\ge 1$), define bin means
\[
\bar u_b:=\frac{1}{N_b}\sum_{i\in J_b}U_i,
\qquad
\bar r_b:=\frac{1}{N_b}\sum_{i\in J_b} r_i^{(X\mid Y)}.
\]
Let $m_n:=\#\{b: N_b\ge 1\}$ and define the binned cloud and target line
\[
\widehat{\mathcal R}^{(n)}_{X\mid Y}
:=\{(\bar u_b,\bar r_b):1\le b\le B_n,\ N_b\ge 1\}\subset (0,1)\times\mathbb R,
\qquad
\Gamma:=\{(u,0):u\in[0,1]\}.
\]

\paragraph{Mesoscopic window.}
Let $\widetilde\alpha_n:=\kappa m_n^{-2/3}$ and $\widetilde\beta_n:=c_\beta\widetilde\alpha_n$ with $\kappa>0$ and $c_\beta>1$.

\paragraph{Assumptions.}
Assume:
\begin{enumerate}
\item \textbf{Uniform conditional sub-Gaussianity.} There exists $K_0<\infty$ such that
\[
\|\xi\mid Y=y\|_{\psi_2}\le K_0\sigma
\qquad\text{for all }y\text{ in the support of }Y.
\]
\item \textbf{Bin-averaged regression error is negligible at scale $\widetilde\alpha_n$.} With
\[
\bar e_b:=\frac{1}{N_b}\sum_{i\in J_b}\big(\widehat g_n^{(-k(i))}(Y_i)-m(Y_i)\big),
\]
we have $\max_{b:N_b\ge 1}|\bar e_b|=o_{\mathbb P}(\widetilde\alpha_n)$.
\item \textbf{Growth/occupancy.}
\[
\frac{B_n^{7/3}\log B_n}{n}\to 0.
\]
\end{enumerate}

\paragraph{Conclusions.}
\begin{enumerate}
\item \textbf{Hausdorff collapse to the line.} In Euclidean Hausdorff distance,
\[
d_H\!\big(\widehat{\mathcal R}^{(n)}_{X\mid Y},\Gamma\big)=o_{\mathbb P}(\widetilde\alpha_n).
\]
More quantitatively, one may use the deterministic decomposition
\[
d_H\!\big(\widehat{\mathcal R}^{(n)}_{X\mid Y},\Gamma\big)
\ \le\
\underbrace{\max_{b:N_b\ge 1}|\bar r_b|}_{\textnormal{vertical tube}}
\ +\
\underbrace{\sup_{u\in[0,1]}\min_{b:N_b\ge 1}|u-\bar u_b|}_{\textnormal{horizontal grid gap}},
\]
and under \textnormal{(A1)--(A3)} both terms are $o_{\mathbb P}(\widetilde\alpha_n)$.
In particular,
\[
\max_{b:N_b\ge 1}|\bar r_b|
=O_{\mathbb P}\!\Big(\sqrt{\frac{B_n\log B_n}{n}}\Big)+o_{\mathbb P}(\widetilde\alpha_n),
\qquad
\sup_{u\in[0,1]}\min_{b:N_b\ge 1}|u-\bar u_b|
=O_{\mathbb P}\!\Big(\frac{1}{B_n}\Big).
\]

\item \textbf{All MST edges fall below the mesoscopic threshold.}
Let $\varepsilon_1(\cdot),\dots,\varepsilon_{m_n-1}(\cdot)$ be the Euclidean MST edge lengths of a finite cloud. Then
\[
\mathbb P\!\left(\max_{1\le j\le m_n-1}\varepsilon_j\!\big(\widehat{\mathcal R}^{(n)}_{X\mid Y}\big)
<\widetilde\alpha_n\right)\to 1.
\]

\item \textbf{Vanishing windowed $H_0$ profile.}
Consequently, for any window function $\Psi_{\widetilde\alpha_n,\widetilde\beta_n}$ as in
Lemma~\ref{lemma:TP0-stability},
\[
\mathrm{TP}_{0,\Psi}^{[\widetilde\alpha_n,\widetilde\beta_n]}\!\big(\widehat{\mathcal R}^{(n)}_{X\mid Y}\big)
\ \xrightarrow{\mathbb P}\ 0.
\]
\end{enumerate}
\end{lemma}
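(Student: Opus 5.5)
The plan is to prove the three conclusions in order, with Conclusion 1 carrying almost all the probabilistic work. Conclusions 2 and 3 then follow deterministically, via the bottleneck argument already used in Step B1--B3 of the proof of Lemma~\ref{lemma:mesoscopic-separation}.

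\textbf{Conclusion 1: vertical tube.} I would split each bin mean as $\bar r_b=\bar\xi_b-\bar e_b$, where $\bar\xi_b:=N_b^{-1}\sum_{i\in J_b}\xi_i$, since $r_i^{(X\mid Y)}=\xi_i-(\widehat g_n^{(-k(i))}(Y_i)-m(Y_i))$. The term $\max_b|\bar e_b|$ is $o_{\mathbb P}(\widetilde\alpha_n)$ by assumption (2).

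For $\bar\xi_b$, condition on $\mathcal Y:=(Y_1,\dots,Y_n)$. The $\xi_i$ are then independent and mean zero, and by assumption (1) each has $\psi_2$-norm at most $K_0\sigma$. Hoeffding's sub-Gaussian inequality gives
\[
\mathbb P\bigl(|\bar\xi_b|>t\mid\mathcal Y\bigr)\le 2\exp\bigl(-cN_bt^2/(K_0\sigma)^2\bigr).
\]
Next I need occupancy control. Since $U_i\sim\mathrm{Unif}(0,1)$, $N_b\sim\mathrm{Bin}(n,1/B_n)$, and Chernoff plus a union bound over the $B_n$ bins gives $\min_bN_b\ge n/(2B_n)$ with probability tending to one. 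This uses $B_n\log B_n/n\to0$, which is implied by (3). On that event I take $t=C\sqrt{B_n\log B_n/n}$ and union bound over bins to get
\[
\max_b|\bar\xi_b|=O_{\mathbb P}\bigl(\sqrt{B_n\log B_n/n}\bigr).
\]
On the same event every bin is nonempty, so $m_n=B_n$ and $\widetilde\alpha_n=\kappa B_n^{-2/3}$. Then
\[
\frac{\sqrt{B_n\log B_n/n}}{B_n^{-2/3}}=\Bigl(\frac{B_n^{7/3}\log B_n}{n}\Bigr)^{1/2}\to0,
\]
which is exactly where the $7/3$ exponent in (3) comes from.

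\textbf{Conclusion 1: horizontal gap.} Each $\bar u_b$ lies in its bin $I_{n,b}$, and all bins are nonempty. So every $u\in[0,1]$ is within $2/B_n$ of some $\bar u_b$, and conversely every $\bar u_b$ lies on $\Gamma$'s horizontal extent. This gives a gap of $O(1/B_n)=o(B_n^{-2/3})$.

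Combining the two terms: each cloud point is within $|\bar r_b|$ of $(\bar u_b,0)\in\Gamma$, and each point of $\Gamma$ is within gap $+\max|\bar r_b|$ of the cloud. That yields the deterministic decomposition and hence $d_H=o_{\mathbb P}(\widetilde\alpha_n)$.

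If the paper intends $U_i=T_n(Y_i)$ rather than $F_Y(Y_i)$, I would add the DKW perturbation $O_{\mathbb P}(n^{-1/2})$. This is negligible against $B_n^{-2/3}$ because $B_n=o(n^{3/7})$ under (3). Bins are still roughly balanced after the perturbation, since ranks fall into equal-width bins with counts $n/B_n\pm1$.

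\textbf{Conclusion 2.} Order the bin points by $b$ and use the path spanning tree through them. Consecutive points satisfy
\[
\|(\bar u_{b+1},\bar r_{b+1})-(\bar u_b,\bar r_b)\|\le |\bar u_{b+1}-\bar u_b|+2\max_b|\bar r_b|\le 2/B_n+o_{\mathbb P}(\widetilde\alpha_n)=o_{\mathbb P}(\widetilde\alpha_n).
\]
By the minimum-bottleneck property (Proposition~\ref{prop:mbst}), every MST edge is below $\widetilde\alpha_n$ with probability tending to one.

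\textbf{Conclusion 3.} On the event from Conclusion 2, $\Psi_{\widetilde\alpha_n,\widetilde\beta_n}$ vanishes at every edge length, so the profile is exactly zero. Convergence in probability follows.

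\textbf{Main obstacle.} The delicate step is the uniform-over-bins sub-Gaussian bound. Bin membership and the $\xi_i$ both depend on $Y$, and cross-fitted $\widehat g$ couples folds. I would handle this by first conditioning on $\mathcal Y$: given $\mathcal Y$ the $\xi_i$ are independent because the rows are i.i.d., and the estimation error has already been isolated into $\bar e_b$, which assumption (2) controls directly.
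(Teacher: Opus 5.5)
Your proposal is correct and follows essentially the same route as the paper: the split $\bar r_b=\bar\xi_b-\bar e_b$, a sub-Gaussian Hoeffding bound conditional on the $Y$'s with a union bound over bins giving $\max_b|\bar\xi_b|=O_{\mathbb P}(\sqrt{B_n\log B_n/n})$, the vertical-tube plus horizontal-gap Hausdorff decomposition, and the consecutive-bin path spanning tree combined with the minimum-bottleneck property. The only difference is occupancy control: the paper bins the rank coordinates $\mathrm{rank}(Y_i)/(n+1)$, so that $N_b\ge\lfloor (n+1)/B_n\rfloor-1$ holds deterministically and $m_n=B_n$ for large $n$, whereas you keep $U_i=F_Y(Y_i)$ as in the statement and obtain $\min_b N_b\ge n/(2B_n)$ with high probability from a binomial Chernoff bound, an interchangeable choice the paper itself flags (your DKW remark for the rank case is unnecessary, since $\bar u_b\in I_{n,b}$ holds directly there).
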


\paragraph{Proof of Theorem~\ref{thm:fixed_noise_TRAs_compact}.}
\begin{proof}
We control the two terms defining $\widetilde\Delta_{0,n}$ separately. For the forward direction, assumption
\textnormal{(B2)} together with Lemma~\ref{lemma:mesoscopic-separation}\textnormal{(i)} implies that the forward residual
cloud is bulk at the mesoscopic window. Consequently,
\[
\mathrm{TP}^{[\alpha_n,\beta_n]}_0\!\big(\widetilde{\mathcal R}^{(n)}_{Y\mid X}\big)
\xrightarrow{\mathbb P}1.
\]

For the reverse direction, consider the binned residual cloud $\widehat{\mathcal R}^{(n)}_{X\mid Y}$.
By Lemma~\ref{lemma:fixed-noise-binning-line}, under assumptions \textnormal{(A1)--(A3)} its support collapses onto the
line $\Gamma$. In particular, conclusion \textnormal{(iii)} of that lemma yields
\[
\mathrm{TP}^{[\widetilde\alpha_n,\widetilde\beta_n]}_0\!\big(\widehat{\mathcal R}^{(n)}_{X\mid Y}\big)
\xrightarrow{\mathbb P}0.
\]

Combining the two convergences,
\[
\widetilde\Delta_{0,n}
=
\mathrm{TP}^{[\alpha_n,\beta_n]}_0\!\big(\widetilde{\mathcal R}^{(n)}_{Y\mid X}\big)
-
\mathrm{TP}^{[\widetilde\alpha_n,\widetilde\beta_n]}_0\!\big(\widehat{\mathcal R}^{(n)}_{X\mid Y}\big)
\xrightarrow{\mathbb P}1.
\]
Since $\widetilde\Delta_{0,n}\in[-1,1]$, bounded convergence implies $\mathbb E[\widetilde\Delta_{0,n}]\to 1$.

Finally, let $\tau_n\downarrow 0$. Then
\[
\mathbb P(\widehat{\mathrm{dir}}_n\neq X\to Y)
\le \mathbb P(\widetilde\Delta_{0,n}\le \tau_n)
\le \mathbb P(|\widetilde\Delta_{0,n}-1|\ge 1-\tau_n)\to 0,
\]
since $1-\tau_n\to 1$ and $\widetilde\Delta_{0,n}\to 1$ in probability. Hence
$\mathbb P(\widehat{\mathrm{dir}}_n=X\to Y)\to 1$, and moreover
\[
\mathbb P(\textnormal{abstain})
\le \mathbb P(\widetilde\Delta_{0,n}\le \tau_n)\to 0.
\]
\end{proof}

\subsubsection{Proof of lemmas}
\begin{proof}[Proof of Lemma~\ref{lemma:fixed-noise-binning-line}]
Throughout we work with the empirical copula-$Y$ coordinate
\[
U_i^{(Y)}:=\frac{\mathrm{rank}(Y_i)}{n+1},
\]
where $\mathrm{rank}(\cdot)$ and the $(n+1)$ scaling are as in the Background copula pseudo-observations
(Section~\ref{subsec:copulas}, paragraph ``Ranks and pseudo-observations''). This choice makes the bin occupancies
essentially deterministic; using $U_i=F_Y(Y_i)$ instead only changes the occupancy control (replacing determinism by
multinomial concentration), while the remainder of the argument is unchanged.

Write the binned residual mean as
\[
\bar r_b=\bar\xi_b-\bar e_b,
\qquad
\bar\xi_b:=\frac{1}{N_b}\sum_{i\in J_b}\xi_i,
\qquad
\bar e_b:=\frac{1}{N_b}\sum_{i\in J_b}\big(\widehat g_n^{(-k(i))}(Y_i)-m(Y_i)\big),
\]
where $\xi_i:=X_i-m(Y_i)$, $J_b:=\{i:U_i^{(Y)}\in I_{n,b}\}$ and $N_b:=|J_b|$.

Let $B_n\to\infty$ with $B_n=o(n)$ and define equal-width bins
\[
I_{n,b}:=\Big(\frac{b-1}{B_n},\frac{b}{B_n}\Big],\qquad b=1,\dots,B_n.
\]
Since $(U_i^{(Y)})_{i=1}^n$ is a permutation of the deterministic grid
$\{1/(n+1),\dots,n/(n+1)\}$, bin membership depends only on ranks and $N_b$ is nonrandom.
Writing $U_i^{(Y)}=j/(n+1)$ for a unique $j\in\{1,\dots,n\}$,
\[
\frac{b-1}{B_n}<\frac{j}{n+1}\le \frac{b}{B_n}
\quad\Longleftrightarrow\quad
\frac{(b-1)(n+1)}{B_n}< j \le \frac{b(n+1)}{B_n}.
\]
Hence for $b\le B_n-1$,
\[
N_b
=
\Big\lfloor \frac{b(n+1)}{B_n}\Big\rfloor
-
\Big\lfloor \frac{(b-1)(n+1)}{B_n}\Big\rfloor
\in\Big\{\Big\lfloor\frac{n+1}{B_n}\Big\rfloor,\ \Big\lceil\frac{n+1}{B_n}\Big\rceil\Big\},
\]
and for the last bin,
\[
N_{B_n}
=
n-\Big\lfloor \frac{(B_n-1)(n+1)}{B_n}\Big\rfloor
\in\Big\{\Big\lfloor\frac{n+1}{B_n}\Big\rfloor-1,\ \Big\lfloor\frac{n+1}{B_n}\Big\rfloor\Big\}.
\]
Defining
\[
N_{\min,n}:=\min_{1\le b\le B_n} N_b,
\qquad
m_n:=\#\{b:N_b\ge 1\},
\]
we obtain
\begin{equation}\label{eq:Nmin_det_clean}
N_{\min,n}\ \ge\ \Big\lfloor\frac{n+1}{B_n}\Big\rfloor-1.
\end{equation}
Since $B_n=o(n)$ implies $(n+1)/B_n\to\infty$, there exists $n_0$ such that for all $n\ge n_0$,
$N_{\min,n}\ge 1$, and therefore
\begin{equation}\label{eq:mn_equals_Bn_clean}
m_n=B_n,
\qquad\text{all bins are nonempty.}
\end{equation}
For each bin define $\bar u_b:=N_b^{-1}\sum_{i\in J_b}U_i^{(Y)}$. Then $\bar u_b\in I_{n,b}$ and
\begin{equation}\label{eq:ugrid_gap_clean}
\sup_{u\in[0,1]}\min_{b:N_b\ge 1}|u-\bar u_b|\ \le\ \frac{1}{B_n}.
\end{equation}

Fix a bin $b$. Conditional on $(Y_i)_{i\in J_b}$, the variables $\{\xi_i:i\in J_b\}$ are independent and centered.
Assumption \textnormal{(A1)} gives the uniform conditional sub-Gaussian bound
\[
\|\xi_i\mid Y_i\|_{\psi_2}\le K_0\sigma\qquad\text{a.s.}
\]
Thus there exists $c>0$, depending only on $K_0$, such that for all $t>0$,
\begin{equation}\label{eq:xi_tail_cond_clean}
\mathbb P\!\left(|\bar\xi_b|>t\ \middle|\ (Y_i)_{i\in J_b}\right)
\le 2\exp\!\left(-c\,N_b\,\frac{t^2}{\sigma^2}\right).
\end{equation}
Taking expectations and applying a union bound yields
\[
\mathbb P\Big(\max_{1\le b\le B_n}|\bar\xi_b|>t\Big)
\le 2B_n\exp\!\left(-c\,N_{\min,n}\,\frac{t^2}{\sigma^2}\right).
\]
Choosing $t=\sigma\sqrt{\frac{(1+\delta)\log B_n}{c\,N_{\min,n}}}$ gives
\[
\max_{1\le b\le B_n}|\bar\xi_b|
=O_{\mathbb P}\!\left(\sigma\sqrt{\frac{\log B_n}{N_{\min,n}}}\right)
=O_{\mathbb P}\!\Big(\sigma\sqrt{\frac{B_n\log B_n}{n}}\Big),
\]
using \eqref{eq:Nmin_det_clean}.

Let $\Gamma=\{(u,0):u\in[0,1]\}$ and
$\widehat{\mathcal R}^{(n)}_{X\mid Y}=\{(\bar u_b,\bar r_b):1\le b\le B_n\}$.
Recall $d_H$ and the point-to-set distance $d_2(\cdot,\cdot)$ from the Background
(Section~\ref{subsec:cloud_distances}). For each $(\bar u_b,\bar r_b)$ the closest point on $\Gamma$ is $(\bar u_b,0)$, hence
\[
\sup_{z\in\widehat{\mathcal R}^{(n)}_{X\mid Y}}\mathrm{dist}(z,\Gamma)
=\max_b|\bar r_b|.
\]
Conversely, for any $(u,0)\in\Gamma$,
\[
\mathrm{dist}\big((u,0),\widehat{\mathcal R}^{(n)}_{X\mid Y}\big)
\le |u-\bar u_b|+|\bar r_b|
\le \frac{1}{B_n}+\max_b|\bar r_b|,
\]
which implies
\begin{equation}\label{eq:Hausdorff_decomp_clean}
d_H\!\big(\widehat{\mathcal R}^{(n)}_{X\mid Y},\Gamma\big)
\le \max_b|\bar r_b|+\frac{1}{B_n}.
\end{equation}
Since $\bar r_b=\bar\xi_b-\bar e_b$, combining the above bound with Assumption \textnormal{(A2)} gives
\[
\max_b|\bar r_b|
=O_{\mathbb P}\!\Big(\sigma\sqrt{\frac{B_n\log B_n}{n}}\Big)
+o_{\mathbb P}(\widetilde\alpha_n).
\]
Using $\widetilde\alpha_n=\kappa B_n^{-2/3}$ and Assumption \textnormal{(A3)}, both terms in
\eqref{eq:Hausdorff_decomp_clean} are $o_{\mathbb P}(\widetilde\alpha_n)$, hence
\[
d_H\!\big(\widehat{\mathcal R}^{(n)}_{X\mid Y},\Gamma\big)=o_{\mathbb P}(\widetilde\alpha_n),
\]
which proves conclusion \textnormal{(i)}.

For $n\ge n_0$, order the points by bin index and consider the path
\[
(\bar u_1,\bar r_1)\text{--}\cdots\text{--}(\bar u_{B_n},\bar r_{B_n}).
\]
For each adjacent pair,
\[
\big\|(\bar u_{b+1},\bar r_{b+1})-(\bar u_b,\bar r_b)\big\|_2
\le \frac{2}{B_n}+2\max_j|\bar r_j|=: \tau_n.
\]
Thus there exists a spanning tree whose maximum edge length is at most $\tau_n$, and by the minimum-bottleneck
property of MSTs (Background Section~\ref{subsec:bg_mst_h0}, Proposition~\ref{prop:mbst}),
\begin{equation}\label{eq:MST_maxedge_bound_clean}
\max_j\varepsilon_j\!\big(\widehat{\mathcal R}^{(n)}_{X\mid Y}\big)\le\tau_n.
\end{equation}
Since $\tau_n=o_{\mathbb P}(\widetilde\alpha_n)$, this yields conclusion \textnormal{(ii)}.

On this event all MST edges lie below $\widetilde\alpha_n$, so
$\Psi_{\widetilde\alpha_n,\widetilde\beta_n}(\varepsilon_j)=0$ for all $j$, and therefore
\[
\mathrm{TP}_{0,\Psi}^{[\widetilde\alpha_n,\widetilde\beta_n]}\!\big(\widehat{\mathcal R}^{(n)}_{X\mid Y}\big)=0
\]
with probability tending to $1$, proving conclusion \textnormal{(iii)}.
\end{proof}

\subsection{Proof Theorem \ref{thm:TRAC_level_rigorous}}

\begin{proof}
Write the \emph{ideal} conditional $(1-\alpha)$ bootstrap critical value as
\[
\hat c_n(1-\alpha):=\inf\{t:\widehat F_n^*(t)\ge 1-\alpha\},
\]
and define the ideal rejection event $R_n:=\{S_n>\hat c_n(1-\alpha)\}$.

Fix $\varepsilon>0$ and let
\[
G_{n,\varepsilon}:=\Big\{\sup_t\big|\widehat F_n^*(t)-F_n(t)\big|\le \varepsilon\Big\}.
\]
On $G_{n,\varepsilon}$, by definition of $\hat c_n(1-\alpha)$ we have
$\widehat F_n^*(\hat c_n(1-\alpha))\ge 1-\alpha$, and therefore
\[
F_n(\hat c_n(1-\alpha))
\ge \widehat F_n^*(\hat c_n(1-\alpha))-\varepsilon
\ge 1-\alpha-\varepsilon.
\]
Consequently,
\[
\Pr_{\theta_0}(R_n\mid\mathcal D_n)
=\Pr_{\theta_0}(S_n>\hat c_n(1-\alpha)\mid\mathcal D_n)
\le 1-F_n(\hat c_n(1-\alpha))
\le \alpha+\varepsilon
\quad\text{on } G_{n,\varepsilon}.
\]
Taking expectations and using $\Pr(G_{n,\varepsilon})\to 1$ from~(A2.1) yields
\[
\limsup_{n\to\infty}\Pr_{\theta_0}(R_n)\le \alpha+\varepsilon.
\]
Since $\varepsilon>0$ is arbitrary, it follows that
$\limsup_{n\to\infty}\Pr_{\theta_0}(R_n)\le \alpha$.

Conditional on $\mathcal D_n$, the bootstrap statistics $S_n^{*(b)}$ are i.i.d.\ with distribution
$\widehat F_n^*$. As $B\to\infty$, the empirical CDF $\widehat F_{n,B}^*$ converges uniformly to
$\widehat F_n^*$ almost surely, implying
\[
\hat c_{n,B}(1-\alpha)-\hat c_n(1-\alpha)\xrightarrow{P}0.
\]
Hence the implemented rejection event $\{S_n>\hat c_{n,B}(1-\alpha)\}$ differs from $R_n$ with probability $o(1)$
and therefore inherits the same asymptotic level bound.
\end{proof}

\section{Experimental Details}
\label{appendix:exps_details}
\paragraph{Methods and baselines.}
We compare three categories: (i) per-pair unsupervised methods (TRA/TRA-s; RESIT, IGCI, RECI, CDCI; and COMIC, which fits two conditional models per pair and compares codelengths), (ii) supervised baselines (RCC, NCC) trained once per run and then applied to all evaluation pairs, and (iii) external CLI baselines (bQCD with default backend QCCD and $m=1$ if $n<200$ else $m=3$; SLOPPY with AIC/BIC as appropriate).

\paragraph{Supervised training protocol (RCC/NCC).}
We generate a labeled training set of $N=500$ synthetic pairs. For each pair, $n$ is drawn from the evaluation grid (or a predefined set covering evaluation sample sizes) to reduce sample-size domain shift. Labels are balanced by swapping $(X,Y)$ for half the pairs. Training pairs use independent seeds and are never reused as evaluation pairs. For NCC, we use a deterministic train/validation split with ratio $r=0.2$, train up to 500 epochs with early stopping (patience 20), optimize binary cross-entropy, and restore the best validation checkpoint. For RCC, validation is used only for optional offline selection of the number of trees.

\subsection{Synthetic experiment protocol}
\label{app:synthetic_protocol}

\paragraph{Goals and design principles}
Synthetic experiments isolate causal-direction identification under progressively harder departures from idealized assumptions. We enforce: (i) controlled generative structure with known ground truth when meaningful; (ii) paired evaluation on identical draws across methods; and (iii) no data leakage by generating supervised training pairs independently of evaluation pairs with explicit splits and fixed seeds.

\subsection{Real-world experiments: T\"ubingen dataset}
\label{sec:realdata}

\paragraph{Goal and scope.}
We benchmark on curated observational cause--effect pairs where the DGP is unknown and classical
identifiability assumptions need not hold. We follow the same global controls as in the synthetic protocol (paired evaluation per dataset, fixed hyperparameters, and strict separation between supervised training and evaluation pairs); this subsection states only what is specific to real data.

\paragraph{Preprocessing.}
For each selected pair, we extract $(X,Y)$ using the dataset metadata and discard non-finite observations. If a pair contains more than \texttt{max\_samples} points, we optionally subsample without replacement for compute control; the subsample is drawn once with a fixed RNG seed and reused across all methods on that pair. Beyond these safeguards, we avoid imposing a universal normalization across all methods and instead rely on each method's recommended preprocessing in the original paper; uniform scaling is considered only in explicit ablations.

\end{document}